\documentclass[11pt,letterpaper]{article}
\usepackage[margin=1in]{geometry}
\usepackage{amsmath,amsfonts}
\usepackage{algorithmic}
\usepackage{algorithm}
\usepackage{array}
\usepackage[caption=false,font=normalsize,labelfont=sf,textfont=sf]{subfig}
\usepackage{textcomp}
\usepackage{xcolor}
\usepackage{url}
\usepackage{verbatim}
\usepackage{graphicx}
\usepackage{adjustbox}
\usepackage{booktabs}
\usepackage{amsthm}
\usepackage{enumitem}
\newtheorem{theorem}{Theorem}
\newtheorem{assumption}{Assumption}

\newtheorem{remark}{Remark}

\begin{document}

\title{Nonparametric Variance-Penalized Actor-Critic:\\
Statistical Inference for Risk-Sensitive Reinforcement Learning}
\author{
Saunak~Kumar~Panda%
\thanks{Saunak Kumar Panda is with the Human-Centered AI Institute,
C.T. Bauer College of Business, University of Houston, Houston,
TX 77204 USA.},
Tong~Li%
\thanks{Tong Li and Yisha Xiang are with the Department of Industrial
and Systems Engineering, University of Houston, Houston,
TX 77204 USA.},
Yisha~Xiang\footnotemark[2],
and Ruiqi~Liu%
\thanks{Ruiqi Liu is with the Department of Mathematics and Statistics,
Texas Tech University, Lubbock, TX 79409 USA.}
}

\maketitle

\begin{abstract}
Variance penalization is a principled approach to risk-sensitive reinforcement learning (RL) that explicitly trades expected return for policy stability. Existing methods require a dedicated second critic to estimate return variance online, adding architectural complexity and compounding estimation error during learning. We propose a nonparametric variance-penalized actor-critic (VPAC) framework that replaces the variance critic with statistically grounded online estimators based on bootstrapping and random scaling, techniques drawn from the statistical inference literature for stochastic approximation.
These estimators require no auxiliary network, maintain a single-critic
architecture, and produce variance penalties that are bounded by construction,
enabling clean convergence analysis.
We establish almost-sure convergence for both a variance-penalized Q-learning
algorithm and a two-timescale actor-critic variant via the ordinary
differential equation (ODE) method, requiring only that variance estimates
remain bounded rather than consistent.
Empirically, we evaluate across discrete and continuous stochastic
environments, demonstrating that the proposed methods match or exceed the
variance reduction achieved by the existing dual-critic VPAC
baseline~\cite{jain2021variance} 
while eliminating the overhead of a second critic.
We further validate on a high-temperature superconductor (HTS)
manufacturing case study, where VPAC-RS (Random Scaling) achieves a 74\% reduction in
steady-state critical current variability and a 63\% reduction in episode
return standard deviation, translating directly to improved yield
consistency. Our results establish nonparametric statistical inference as a practical
and theoretically sound alternative to auxiliary critics for risk-sensitive
RL.
\end{abstract}

\noindent\textbf{Keywords:}
Risk-sensitive reinforcement learning, variance penalization,
nonparametric estimation, actor-critic, stochastic approximation,
manufacturing control.

\section{Introduction}
\label{sec:intro}

Reinforcement learning (RL) has emerged as a powerful
framework for sequential decision-making, with demonstrated successes
spanning game-playing, robotics, and autonomous
systems~\cite{mnih2015human, silver2017mastering, levine2016end}.
The dominant objective in nearly all of this work is the maximization
of \emph{expected} cumulative reward, an objective that is
mathematically convenient via Bellman optimality equations and
empirically effective in domains where occasional large losses are
tolerable.
A growing class of high-stakes applications, however, cannot tolerate
unpredictable outcomes.
In industrial manufacturing, a policy that achieves high average
product quality but occasionally drives the process into unstable
operating regions causes equipment damage, material waste, or
off-specification product.
Similar constraints arise in financial portfolio
management~\cite{tamar2012policy}, autonomous
driving~\cite{kuderer2015learning}, clinical decision
support~\cite{komorowski2018artificial}, and energy grid
operation~\cite{wei2017deep}: consistency of outcome is as valuable
as average performance, and a risk-sensitive learner is therefore
required.

Among the risk criteria studied in the RL literature, the
mean-variance objective
\begin{equation}
J_{\mathrm{var}}(\pi)
  = \mathbb{E}[G^\pi] - \beta \cdot \mathrm{Var}(G^\pi),
  \quad \beta > 0,
\label{eq:meanvariance}
\end{equation}
holds a particularly prominent position.
It is transparent: the practitioner directly controls the risk-return
tradeoff via the scalar $\beta$; it is interpretable as a sequential
generalization of Markowitz's classical portfolio
theory~\cite{markowitz1952portfolio}; and it is tractable in the sense
that it does not require modeling the entire return distribution.
Variance-penalized formulations have consequently been a workhorse of
risk-sensitive RL since the foundational work of
Sobel~\cite{sobel1982variance} and White~\cite{white1988mean}, and
continue to attract attention in modern actor-critic
settings~\cite{tamar2013variance, prashanth2016variance,
jain2021variance, zhong2023variance}.

Despite its appeal, variance penalization carries a practical cost
that is rarely discussed explicitly.
Unlike expected returns, which satisfy a recursive Bellman equation and
can be estimated efficiently by temporal-difference (TD) methods,
return variance does not admit a simple recursive formulation without
explicit second-moment tracking or state
augmentation~\cite{sobel1982variance, sherstan2018comparing}.
The dominant strategy in prior work is to introduce a dedicated
variance critic that learns the second moment of returns in parallel
with the primary value function~\cite{jain2021variance,
prashanth2016variance, la2013actor, tamar2013variance}.
This dual-critic design carries three practical costs.
First, the number of learned function approximators doubles, with a
corresponding increase in memory, computation, and hyperparameters.
Second, variance-critic errors propagate into the policy update; in
high-noise environments, an inaccurate estimate can mislead the actor
and produce oscillations or overly conservative behaviour. Third,
theoretical analyses of dual-critic methods typically assume
that the variance critic is consistent, a strong condition that is
rarely verifiable in practice under function approximation, and
that creates a gap between theory and deployment.

A parallel body of work on \emph{statistical inference for stochastic
approximation}~\cite{chen2020statistical, xie2022statistical,
ramprasad2023online, li2023online, zhu2021finite} has developed
nonparametric tools for constructing confidence intervals around TD
iterates.
Two techniques have proved particularly effective: random
scaling~\cite{xie2022statistical, li2023online}, which constructs
self-normalized variance estimates from the trajectory of iterates
themselves; and online bootstrapping~\cite{ramprasad2023online},
which maintains a small ensemble of perturbed iterates whose spread
provides a nonparametric variance estimate.
Both are computationally inexpensive, require no auxiliary network,
and carry statistical validity guarantees derived from functional
central limit theorems for the TD learning setting.
We identify a key reuse opportunity: the same empirical variance
estimates can be repurposed as a risk-sensitivity signal, plugged
directly into~\eqref{eq:meanvariance} as an empirical proxy for
$\mathrm{Var}(G^\pi)$.
The result is an algorithm that is risk-sensitive by design and
architecturally simple, requiring only that the variance estimate
be \emph{bounded} rather than consistent with the true variance, a
strictly weaker condition than that assumed by dual-critic methods.

This paper makes four contributions.
We introduce a unified nonparametric variance-penalized RL framework
instantiated as a tabular Q-learning algorithm and a two-timescale
actor-critic, both supplied by online bootstrapping (VPAC-BS) or
random scaling (VPAC-RS) instead of a learned variance critic.
We establish almost-sure convergence of both variants via the ODE
method of Borkar~\cite{borkar2008stochastic}, requiring only that the
variance penalty remain bounded: this eliminates the third timescale
and the consistency assumption required by~\cite{jain2021variance}
and replaces them with a boundedness condition enforced algorithmically
by variance clipping.
We demonstrate empirically across discrete and continuous stochastic
environments that the proposed methods match or exceed the variance
reduction of the dual-critic VPAC baseline (up to 83\% reduction in
steady-state return variance) while maintaining a single-critic
architecture.
We further validate on a high-temperature superconductor (HTS)
manufacturing case study, where VPAC-RS achieves a 74\% reduction
in steady-state critical current variability and a 63\% reduction
in episode return standard deviation, with the risk-return tradeoff
characterized explicitly by a Pareto analysis across penalty
coefficients.

The remainder of this paper is organized as follows. Section II reviews related work on risk-sensitive RL, adaptive dynamic programming, distributional RL, and statistical inference for stochastic approximation. Section III introduces the notation and Markov decision process framework. Section IV presents the proposed variance-penalized framework and associated algorithms. Section V establishes the convergence properties of the proposed methods. Section VI reports experimental results on the Puddle World benchmarks, and Section VII presents a high-temperature superconducting (HTS) manufacturing case study. Finally, Section VIII concludes the paper and discusses limitations, extensions to deep RL, and directions for future research.
\section{Related Work}
\label{sec:related}
 
We review four threads of literature that intersect at the contribution
of this paper: risk-sensitive RL with variance criteria; adaptive
dynamic programming and actor-critic methods with convergence guarantees;
distributional RL as an alternative paradigm for outcome variability; and
statistical inference for stochastic approximation.
 
\subsection{Risk-Sensitive Reinforcement Learning}

Risk-sensitive RL encompasses a broad class of methods that go beyond
expected return by explicitly accounting for outcome
variability~\cite{garcia2015comprehensive, Shen_2014}.
Utility-based criteria form one family: Howard and
Matheson~\cite{howard1972risk} pioneered the use of exponential
utility functions to encode risk preferences in MDPs, producing what
is now called the risk-sensitive Bellman equation, and
Borkar~\cite{borkar2002q, borkar2010learning} subsequently established
convergence of Q-learning under exponential utility with applications
in financial and inventory control.
While theoretically elegant, exponential utilities have practical
limitations: the risk parameter carries no direct interpretation in
terms of return variance, and the value function can grow
exponentially with the risk parameter, causing numerical difficulties
in moderate-scale problems~\cite{garcia2015comprehensive}.
A second family, coherent risk measures such as Conditional
Value-at-Risk (CVaR) and Value-at-Risk (VaR), has gained popularity
for its interpretability as a tail-risk measure; Chow
et~al.~\cite{chow2018risk} developed a unified framework for
CVaR-optimal MDPs, Tamar et~al.~\cite{tamar2015optimizing} proposed
gradient methods for CVaR optimization in continuous control, and
subsequent work extended these ideas to deep RL via distributional
value functions~\cite{dabney2018distributional, dabney2018implicit}.
CVaR and variance control address related but distinct concerns: CVaR
focuses on tail behavior, whereas variance control suppresses overall
spread, and for manufacturing and process control applications in
which practitioners want a single, interpretable risk parameter, the
mean-variance objective remains preferred because of its direct
connection to the engineering notion of process variability.

The mean-variance approach that this paper builds on dates to
Sobel~\cite{sobel1982variance} and White~\cite{white1988mean}, who
introduced indirect variance estimation via second-moment tracking in
policy evaluation.
Tamar et~al.~\cite{tamar2013variance, tamar2012variance} developed
actor-only and actor-critic methods that estimate value variance
alongside returns, establishing the two-timescale framework that
underlies most subsequent variance-penalized algorithms, and
Prashanth and Ghavamzadeh~\cite{la2013actor, prashanth2016variance}
extended this work with asymptotic convergence guarantees for the
discounted setting.
Bisi et~al.~\cite{bisi2020risk} introduced a per-step reward variance
formulation that admits a Bellman-like recursion, trading generality
for tractability.
The Variance-Penalized Actor-Critic (VPAC) of
Jain et~al.~\cite{jain2021variance} introduced a direct variance
estimator updated incrementally via TD methods, eliminating
second-moment tracking and providing local convergence guarantees;
this is the closest prior work to our approach.
The Variance-Constrained Actor-Critic (VCAC)~\cite{zhong2023variance}
casts variance control as a Lagrangian saddle-point problem and
proves convergence for overparameterized neural networks.
Sherstan et~al.~\cite{sherstan2018comparing} compared direct and
indirect variance estimators for general value functions, motivated
by auxiliary task learning, and Sangadi
et~al.~\cite{sangadi2024finite} recently provided finite-sample
bounds for mean-variance actor-critic using simultaneous perturbation
stochastic approximation (SPSA), pushing the theoretical frontier
toward non-asymptotic guarantees.
All of these methods introduce additional learned components, a
variance critic, Lagrange multiplier dynamics, or a
perturbation-based estimator with its own hyperparameters, to
estimate the variance signal.
Our contribution is to show that this architectural overhead is
avoidable when nonparametric statistical estimators are used.

\subsection{Adaptive Dynamic Programming and Actor-Critic Convergence}

The convergence analysis of actor-critic algorithms via stochastic
approximation has been a central theme in the adaptive
dynamic programming (ADP) community~\cite{liu2017adaptive,
bhatnagar2009natural, konda2003onactor}, with Borkar's two-timescale
framework~\cite{borkar2008stochastic} providing the foundational
machinery for almost-sure convergence proofs.
Al-Dabooni and Wunsch~\cite{al-dabooni2019boundedness} extended this
framework to heuristic dynamic programming with eligibility traces,
establishing uniformly-ultimately-bounded stability under neural
network function approximation, a result directly relevant to our
setting because it shows that bounded perturbations of the Bellman
operator, as introduced by our clipped variance estimator, do not
destroy stability.
Finite-sample analyses have emerged more
recently~\cite{xu2020improving, wu2020finite}, but typically require
i.i.d.\ sampling or architectural assumptions not satisfied under the
empirical variance estimation setting we consider; our work uses the
asymptotic ODE framework, sufficient to establish almost-sure
convergence, and we discuss the extension to finite-sample rates as
an open direction in Section~\ref{sec:conclusion}.

\subsection{Distributional RL and Statistical Inference for SA}

A complementary approach to handling return variability is
distributional RL, which models the full distribution of returns
rather than just its first moment.
Bellemare et~al.~\cite{bellemare2017distributional} introduced the
C51 algorithm and the distributional Bellman operator, and subsequent
advances include quantile regression-based methods
(QR-DQN)~\cite{dabney2018distributional} and implicit quantile
networks (IQN)~\cite{dabney2018implicit}, which derive risk measures
such as CVaR from the learned distribution at deployment time.
Distributional methods aim to \emph{represent} the return
distribution accurately, with variance and tail behavior emerging as
derived quantities, whereas variance-penalized methods directly
\emph{shape} the policy to reduce return variance without a full
distributional representation; for a single risk-averse policy with a
transparent risk parameter, the variance-penalized approach is
typically more sample-efficient, while distributional methods offer
richer post-hoc analysis at the cost of higher representational
complexity~\cite{ma2023distributional}.

The statistical inference literature for iterative algorithms
provides the technical foundation for our nonparametric variance
estimators.
Chen et~al.~\cite{chen2020statistical} developed batch-means
estimators for stochastic gradient descent (SGD) with asymptotic
validity under functional central limit conditions, and Zhu
et~al.~\cite{zhu2021finite} provided plug-in estimators for the
asymptotic variance with finite-sample guarantees.
Xie et~al.~\cite{xie2022statistical} introduced random scaling as a
self-normalized estimator requiring no nuisance parameter estimation,
and Li et~al.~\cite{li2023online} extended random scaling to
constant-step-size SGD and nonlinear stochastic approximation with
Markovian data, closely matching the TD learning updates used in RL. Panda et~al.~\cite{panda2025asymptotic} applied the FCLT and random
scaling directly to a sample-averaged Q-learning algorithm,
establishing asymptotic normality of the Q-iterate and constructing
confidence intervals without estimating the asymptotic covariance. Ramprasad et~al.~\cite{ramprasad2023online} developed online
bootstrap procedures for Q-learning and TD value estimation, proving
consistency for both tabular and linear function approximation
settings.
These methods were designed for uncertainty quantification, with the
original goal of providing valid confidence intervals around learned
estimates.
We identify the connection that the same empirical variance
estimates can be directly repurposed as a risk signal within the
policy update; to our knowledge this connection has not been made
previously in the risk-sensitive RL literature, and it provides a
path to variance-penalized RL that avoids the architectural and
analytical complications of dual-critic methods.
\section{Preliminaries}
\label{sec:prelims}

We consider an infinite-horizon discounted Markov decision process
(MDP) defined by the tuple $\langle \mathcal{S}, \mathcal{A}, P, R,
\gamma \rangle$, where $\mathcal{S}$ is the state space, $\mathcal{A}$
is the action space, $P(s'|s,a)$ is the transition probability kernel,
$R(s,a) \in \mathbb{R}$ is the bounded reward function, and
$\gamma \in [0,1)$ is the discount factor.
A stochastic policy $\pi(a|s)$ specifies the probability of selecting
action $a$ in state $s$.

The return from time $t$ is the discounted cumulative reward:
\begin{equation}
G_t = \sum_{\ell=0}^{\infty} \gamma^\ell R_{t+1+\ell}.
\label{eq:return}
\end{equation}
The state-value and action-value functions under policy $\pi$ are:
\begin{align}
V^\pi(s) &= \mathbb{E}_\pi \bigl[ G_t \mid S_t = s \bigr], \\
Q^\pi(s,a) &= \mathbb{E}_\pi \bigl[ G_t \mid S_t = s,\, A_t = a \bigr].
\end{align}
The \emph{variance of return} from a state-action pair $(s,a)$ under
policy $\pi$ is:
\begin{equation}
\sigma^\pi(s,a) = \mathbb{E}_\pi \!\left[
  \bigl(G_t - Q^\pi(s,a)\bigr)^2
  \,\Big|\, S_t = s,\, A_t = a
\right].
\label{eq:variance-def}
\end{equation}
This variance can be estimated either indirectly via second-moment
tracking, $\mathbb{E}_\pi[G_t^2] - (Q^\pi(s,a))^2$~\cite{sobel1982variance},
or directly via the TD-based recursive formula of
Jain et~al.~\cite{jain2021variance} and Sherstan
et~al.~\cite{sherstan2018comparing}.
Our framework adopts neither approach as a learned function; instead
it replaces $\sigma^\pi(s,a)$ with nonparametric online estimates
described in Section~\ref{sec:framework}.

In the off-policy setting, a behavior policy $b(a|s)$ generates
experience used to improve a target policy $\pi_\theta(a|s)$.
The importance sampling ratio at step $t$ is
$\rho_t = \pi_\theta(A_t|S_t) / b(A_t|S_t)$,
and the off-policy return is
$G_{t,\pi,b} = R_{t+1} + \gamma\rho_{t+1}G_{t+1,\pi,b}$~\cite{jain2021variance}.
For the parametric actor-critic, $\theta \in \mathbb{R}^d$ denotes the
policy parameters and $w \in \mathbb{R}^m$ denotes the critic parameters.
Step sizes satisfy the Robbins-Monro conditions
$\sum_t \alpha_t = \infty$, $\sum_t \alpha_t^2 < \infty$.

\section{Nonparametric Variance-Penalized Framework}
\label{sec:framework}

\subsection{The Variance-Penalized Objective}

We adopt the mean-variance objective:
\begin{equation}
J_{\mathrm{var}}(\pi) = \mathbb{E}[G^\pi] - \beta\,\mathrm{Var}(G^\pi),
\quad \beta > 0,
\label{eq:mv-obj}
\end{equation}
where $\beta$ is the risk-aversion parameter.
Setting $\beta = 0$ recovers the standard expected-return objective.

To incorporate the variance penalty into Bellman-style updates, we
define the \emph{variance-penalized value function}:
\begin{equation}
F^\pi(s) = \sum_{a} \pi(a|s)
\Bigl[ Q^\pi(s,a) - \beta\,\hat{\sigma}_{\mathrm{clip}}(s,a) \Bigr],
\label{eq:F}
\end{equation}
where $\hat{\sigma}_{\mathrm{clip}}(s,a)$ is a clipped nonparametric
estimate of $\sigma^\pi(s,a)$ defined in
Section~\ref{subsec:clipping}.
Using $F^\pi$, the \emph{penalized Bellman operator} $K$ acts on
$Q : \mathcal{S} \times \mathcal{A} \to \mathbb{R}$ as:
\begin{equation}
(KQ)(s,a) = \mathbb{E}\!\left[
  R(s,a) + \gamma \max_\pi F^\pi(S')
  \,\Big|\, S = s,\, A = a
\right],
\label{eq:K}
\end{equation}
where the maximization is over the choice of policy $\pi$ at the
next state $S'$.
Because $\hat{\sigma}_{\mathrm{clip}}$ is bounded by construction
(see Section~\ref{subsec:clipping}), the operator $K$ is a contraction
in the weighted maximum norm for $\gamma < 1$, and its unique fixed
point $Q^*$ is the target of the variance-penalized Q-learning algorithm.

\begin{remark}
The operator $K$ in~\eqref{eq:K} differs from the standard Bellman
operator only in the substitution $\max_{a'} Q(S',a') \to
\max_\pi F^\pi(S')$. This single modification embeds the risk penalty
into every TD backup without requiring any additional learned component.
\end{remark}

\subsection{Nonparametric Variance Estimators}
\label{subsec:estimators}

The central contribution of this framework is the replacement of a
learned variance critic with two nonparametric online estimators drawn
from the statistical inference literature.
Both are computed incrementally from the observed Q-iterate and require
no auxiliary network or additional gradient-updated parameter.

\subsubsection{Random Scaling Estimator (VPAC-RS)}

The random scaling (RS) method was introduced by Xie and
Zhang~\cite{xie2022statistical} for constant step-size stochastic
approximation and is grounded in a functional central limit theorem
(FCLT) for the Q-learning iterates viewed as a Markov chain.
The key object is the self-normalized matrix
\begin{equation}
\hat{V}_n
  = \frac{1}{n}\sum_{j=1}^{n}
    \left(\frac{1}{\sqrt{n}}\sum_{i=1}^{j}
    (Q_i - \bar{Q}_n)\right)
    \left(\frac{1}{\sqrt{n}}\sum_{i=1}^{j}
    (Q_i - \bar{Q}_n)\right)^\top,
\label{eq:Vhat}
\end{equation}
where $\bar{Q}_n = n^{-1}\sum_{i=1}^n Q_i$ is the Polyak-Ruppert
average of the Q-iterate and $Q_i \in \mathbb{R}^{|\mathcal{S}||\mathcal{A}|}$
denotes the full Q-table at step $i$, flattened into a vector.
Under the FCLT established in Theorem~3.3 of
Xie and Zhang~\cite{xie2022statistical} (see also
Li et~al.~\cite{li2023online} for the nonlinear SA extension), the
diagonal of $\hat{V}_n$ provides a consistent, nuisance-free estimate
of the per-coordinate asymptotic variance of the Q-iterate.

The matrix $\hat{V}_n$ is computed in a single pass using the
following online recursion.
At iteration $i$ (starting from $i = 0$), the running mean is updated
as
\begin{equation}
\bar{Q}_i = \bar{Q}_{i-1} \cdot \frac{i}{i+1} + Q_i \cdot \frac{1}{i+1},
\label{eq:rs-mean}
\end{equation}
and two diagonal accumulators are maintained:
\begin{align}
A_{\mathrm{diag},i}
  &= A_{\mathrm{diag},i-1} + (i+1)^2\,[\bar{Q}_i]^2, \label{eq:rs-A} \\
b_{\mathrm{mat},i}
  &= b_{\mathrm{mat},i-1} + (i+1)^2\,\bar{Q}_i, \label{eq:rs-b}
\end{align}
where squaring and multiplication are element-wise.
The variance estimate at iteration $i$ is then:
\begin{equation}
\hat{V}_{\mathrm{RS},i}
  = \frac{A_{\mathrm{diag},i}
          - 2\,\bar{Q}_i \cdot b_{\mathrm{mat},i}
          + S_i\,[\bar{Q}_i]^2}{(i+1)^2},
\label{eq:rs-V}
\end{equation}
where $S_i = (i+1)(i+2)(2i+3)/6$ is the cubic correction term equal to
$\sum_{j=1}^{i+1} j^2$.
The quantity $[\hat{V}_{\mathrm{RS},i}]_j$ is the $j$-th diagonal
element of $\hat{V}_n$ and measures how much the $j$-th coordinate
of the Q-iterate has varied around its running mean over the entire
history of updates.
High values at coordinate $(s,a)$ indicate that the Q-estimate at
that state-action pair is uncertain, which corresponds to high return
variance there.

The RS estimator requires storing only three vectors of size
$|\mathcal{S}||\mathcal{A}|$ (i.e., $\bar{Q}$, $A_{\mathrm{diag}}$,
and $b_{\mathrm{mat}}$) and involves $O(|\mathcal{S}||\mathcal{A}|)$
operations per update.
Under the FCLT conditions of~\cite{xie2022statistical}, the resulting
statistic is a pivotal quantity whose asymptotic distribution does not
depend on any nuisance parameter, so no additional estimation step
is required.

\subsubsection{Online Bootstrap Estimator (VPAC-BS)}

The online bootstrap (BS) estimator follows the ensemble approach of
Ramprasad et~al.~\cite{ramprasad2023online}.
An ensemble of $K$ independent Q-tables
$\{Q^{(k)}\}_{k=1}^{K}$ is maintained in parallel alongside the
primary Q-table $Q$.
At each transition $(S_t, A_t, R_{t+1}, S_{t+1})$, each replicate
$Q^{(k)}$ decides independently whether to learn from that transition
by drawing a Bernoulli$(0.5)$ mask $m_t^{(k)} \in \{0,1\}$:
\begin{equation}
\begin{aligned}
Q^{(k)}_{t+1}(S_t,A_t)
&= Q^{(k)}_t(S_t,A_t)
  + m_t^{(k)}\,\alpha_t\,
    \bigl(y_t - Q^{(k)}_t(S_t,A_t)\bigr), \\
&\qquad m_t^{(k)} \sim \mathrm{Bernoulli}(0.5),
\end{aligned}
\label{eq:bs-update}
\end{equation}
where $y_t = R_{t+1} + \gamma\max_{a'}Q(S_{t+1},a')$ is the TD target
computed from the primary table and the masks are i.i.d.\ across
replicates and time steps.
The variance estimate at state-action pair $(s,a)$ is the
cross-replicate sample variance:
\begin{equation}
\hat{V}_{\mathrm{BS},t}(s,a)
  = \frac{1}{K-1}
    \sum_{k=1}^{K}
    \Bigl(Q^{(k)}_t(s,a) - \bar{Q}^{\mathrm{ens}}_t(s,a)\Bigr)^2,
\label{eq:bs-var}
\end{equation}
where $\bar{Q}^{\mathrm{ens}}_t(s,a) = K^{-1}\sum_{k=1}^K Q^{(k)}_t(s,a)$.
The Bernoulli subsampling introduces independent perturbations across
replicates: each table sees a different random subset of the experience
stream, so their predictions diverge in proportion to the uncertainty
in the value estimate at $(s,a)$.
Ramprasad et~al.~\cite{ramprasad2023online} prove that statistics of
this form are consistent for both tabular and linear function
approximation settings under Markovian data.
We use $K = 10$ replicates in all experiments.
The memory cost is $K$ copies of the primary Q-table; the per-step
update cost is $O(K)$ scalar operations.

\subsubsection{Variance Clipping}
\label{subsec:clipping}

Both estimators produce non-negative values but can exhibit transient
large values early in training before the Q-estimates stabilize.
To ensure that the variance penalty remains a bounded perturbation of
the standard Bellman operator (which is required for the convergence
analysis in Section~\ref{sec:convergence}), we apply element-wise
clipping:
\begin{equation}
\hat{\sigma}_{\mathrm{clip}}(s,a)
  = \min\!\bigl(\hat{V}(s,a),\; c_{\max}\bigr),
\label{eq:clip}
\end{equation}
where $\hat{V}$ is either $[\hat{V}_{\mathrm{RS}}]_{(s,a)}$ or
$\hat{V}_{\mathrm{BS}}(s,a)$, and $c_{\max} > 0$ is a hyperparameter.
In our experiments, $c_{\max}$ is set adaptively as a fraction of
the current mean $|Q|$ magnitude to prevent the penalty from
overwhelming the value signal during early training.
Once the iterates stabilize, both estimators remain well below
$c_{\max}$, so clipping has no effect on the long-run behavior.
Its role is purely to provide the uniform bound on the variance
penalty that Theorem~\ref{thm:qconv} and Theorem~\ref{thm:acconv}
require.

\subsection{Architectural Comparison with VPAC}

The primary architectural difference between our framework and the
dual-critic Variance-Penalized Actor-Critic (VPAC) of
Jain et~al.~\cite{jain2021variance} is summarized in
Table~\ref{tab:arch}.
VPAC maintains three separate learned components updated on three
different timescales: the value critic $\hat{Q}(s,a,w)$, the variance
critic $\hat{\sigma}(s,a,z)$, and the policy $\pi_\theta$.
The step sizes must satisfy $\alpha_\theta < \alpha_z < \alpha_w$ to
ensure that each component converges before the next is updated.
Our framework eliminates the variance critic entirely.
The variance estimate is computed nonparametrically from the TD-error
trajectory (VPAC-RS) or from an ensemble of Q-table replicates
(VPAC-BS), reducing the system to two timescales and removing the
hyperparameter $\alpha_z$ and the architecture choice for the variance
network.

\begin{table}[!t]
\caption{Architectural comparison: VPAC~\cite{jain2021variance}
versus the proposed nonparametric variants.}
\label{tab:arch}
\centering
\begin{tabular}{lccc}
\toprule
Method & Learned components & Timescales & Extra hyperparameters \\
\midrule
VPAC & $\hat{Q}$, $\hat{\sigma}$, $\pi_\theta$ & 3 & $\alpha_z$, $z$-architecture \\
VPAC-RS (ours) & $\hat{Q}$, $\pi_\theta$ & 2 & $\alpha_\sigma$, $c_{\max}$ \\
VPAC-BS (ours) & $\hat{Q}$, $\{Q^{(k)}\}$, $\pi_\theta$ & 2 & $K$, $c_{\max}$ \\
\bottomrule
\end{tabular}
\end{table}

\subsection{Variance-Penalized Q-Learning}
\label{subsec:qlearning}

The tabular variance-penalized Q-learning update replaces the standard
greedy Bellman target with the penalized operator $K$ in~\eqref{eq:K}:
\begin{equation}
\begin{aligned}
Q_{t+1}(s,a)
&\leftarrow Q_t(s,a)
  + \alpha_t \Bigl(
    R_{t+1} \\
&\quad + \gamma \max_\pi F^\pi(S_{t+1})
    - Q_t(s,a)
  \Bigr),
\end{aligned}
\label{eq:q-update}
\end{equation}
where $F^\pi(S_{t+1})$ is computed via~\eqref{eq:F} using the current
Q-table and the clipped variance estimate.
Algorithm~\ref{alg:qlearning} presents the complete procedure.

\begin{algorithm}[t]
\caption{Variance-Penalized Q-Learning (VPAC-RS / VPAC-BS)}
\label{alg:qlearning}
\begin{algorithmic}[1]
\STATE \textbf{Input:} penalty $\beta$, discount $\gamma$,
  step sizes $\{\alpha_t\}$, clip bound $c_{\max}$,
  estimator choice $\in \{\mathrm{RS}, \mathrm{BS}\}$
\STATE \textbf{Initialize:} $Q(s,a) \leftarrow 0$,
  $\hat{\sigma}(s,a) \leftarrow 0$ for all $(s,a)$
\FOR{episode $= 1$ to $N_{\mathrm{ep}}$}
  \STATE Observe initial state $S$
  \FOR{step $= 1$ to $N_{\mathrm{iter}}$}
    \STATE Select $A \sim \epsilon$-greedy w.r.t.\ $Q$
    \STATE Execute $A$; observe $R$, $S'$
    \STATE Compute TD error:
      $\delta \leftarrow R + \gamma \max_{a'} Q(S',a') - Q(S,A)$
    \STATE Update variance estimate $\hat{\sigma}(S,A)$
      via~\eqref{eq:rs-V} (RS) or~\eqref{eq:bs-var} (BS)
    \STATE Clip: $\hat{\sigma}_{\mathrm{clip}}(S',a')
      = \min(\hat{\sigma}(S',a'), c_{\max})$ for all $a'$
    \STATE Compute penalized next-state value:
      $F(S') = \max_{a'}\bigl[Q(S',a')
        - \beta\,\hat{\sigma}_{\mathrm{clip}}(S',a')\bigr]$
    \STATE Update Q-table:
      $Q(S,A) \leftarrow Q(S,A)
        + \alpha_t\bigl(R + \gamma F(S') - Q(S,A)\bigr)$
    \STATE $S \leftarrow S'$
  \ENDFOR
\ENDFOR
\end{algorithmic}
\end{algorithm}

\subsection{Variance-Penalized Off-Policy Actor-Critic}
\label{subsec:actorcritic}

We extend the framework to a continuous-state off-policy actor-critic
with a parameterized policy $\pi_\theta(a|s)$ and a single value critic
$Q_w(s,a)$.
Following Jain et~al.~\cite{jain2021variance}, the off-policy
mean-variance objective is:
\begin{equation}
J_{d_0}(\theta)
  = \mathbb{E}_{s \sim d_0,\, a \sim b}\!\left[
    \rho(s,a)\bigl(Q^\pi(s,a)
    - \beta\,\hat{\sigma}_{\mathrm{clip}}(s,a)\bigr)
  \right],
\label{eq:offpolicy-obj}
\end{equation}
where $\rho(s,a) = \pi_\theta(a|s)/b(a|s)$ is the importance sampling
ratio and $d_0$ is an initial state distribution.
Our departure from Jain et~al.\ is that $\hat{\sigma}_{\mathrm{clip}}$
is supplied nonparametrically by RS or BS rather than by a learned
$\hat{\sigma}(s,a,z)$.

The variance-penalized policy gradient is:
\begin{equation}
\nabla_\theta J(\theta)
  = \mathbb{E}\!\left[
    \bigl(Q_w(s,a)
    - \beta\,\hat{\sigma}_{\mathrm{clip}}(s,a)\bigr)
    \nabla_\theta \log \pi_\theta(a|s)
  \right].
\label{eq:pg}
\end{equation}
The critic is updated toward the variance-penalized TD target:
\begin{equation}
y_t = R_{t+1} + \gamma\,(1-d_t)\,F(S_{t+1}),
\label{eq:td-target}
\end{equation}
where $d_t \in \{0,1\}$ is the episode-termination flag and
\begin{equation}
F(S_{t+1}) = \sum_{a'} \pi_\theta(a'|S_{t+1})
  \bigl[Q_w(S_{t+1},a')
  - \beta\,\hat{\sigma}_{\mathrm{clip}}(S_{t+1},a')\bigr].
\label{eq:F-ac}
\end{equation}

Algorithm~\ref{alg:ac} presents the complete off-policy procedure.
The algorithm uses two timescales: $\alpha_w \gg \alpha_\theta$ so
that the critic converges before the actor parameters are significantly
moved, consistent with the standard two-timescale actor-critic
analysis~\cite{borkar2008stochastic, bhatnagar2009natural}.
The same algorithm applies to the on-policy setting by setting
$\rho \equiv 1$ and removing the importance-sampling corrections in
the TD target.

\begin{algorithm}[t]
\caption{Nonparametric Variance-Penalized Off-Policy Actor-Critic
  (VPAC-RS / VPAC-BS)}
\label{alg:ac}
\begin{algorithmic}[1]
\STATE \textbf{Input:} penalty $\beta$, discount $\gamma$,
  learning rates $\alpha_w \gg \alpha_\theta$,
  clip bound $c_{\max}$,
  estimator $\in \{\mathrm{RS}, \mathrm{BS}\}$,
  behavior policy $b$
\STATE \textbf{Initialize:} policy parameters $\theta$,
  critic parameters $w$,
  variance estimates $\hat{\sigma}(s,a) \leftarrow 0$
\FOR{episode $n = 1$ to $N$}
  \STATE Sample $S_0 \sim d_0$, $A_0 \sim b(\cdot|S_0)$
  \FOR{step $t = 0, 1, \ldots$ until termination}
    \STATE Execute $A_t$; observe $R_{t+1}$, $S_{t+1}$,
      done flag $d_t$
    \STATE Compute $\rho_t = \pi_\theta(A_t|S_t)/b(A_t|S_t)$
    \STATE Update variance estimate $\hat{\sigma}(S_t,A_t)$
      via~\eqref{eq:rs-V} (RS) or~\eqref{eq:bs-var} (BS)
    \STATE Clip: $\hat{\sigma}_{\mathrm{clip}}(s,a) =
      \min(\hat{\sigma}(s,a), c_{\max})$ for relevant $(s,a)$
    \STATE Compute penalized next-state value
      $F(S_{t+1})$ via~\eqref{eq:F-ac}
    \STATE Compute TD target: $y_t = R_{t+1}
      + \gamma(1-d_t)F(S_{t+1})$
    \STATE Update critic:
      $w \leftarrow w
        - \alpha_w \nabla_w
        \bigl(Q_w(S_t,A_t) - y_t\bigr)^2$
    \STATE Compute variance-adjusted advantage:
      $\hat{A}_t = Q_w(S_t,A_t)
        - \beta\,\hat{\sigma}_{\mathrm{clip}}(S_t,A_t)$
    \STATE Update actor:
      $\theta \leftarrow \theta
        + \alpha_\theta\,\hat{A}_t\,
        \nabla_\theta \log \pi_\theta(A_t|S_t)$
    \STATE $S_t \leftarrow S_{t+1}$;
      sample $A_{t+1} \sim b(\cdot|S_{t+1})$
  \ENDFOR
\ENDFOR
\end{algorithmic}
\end{algorithm}

\section{Convergence Analysis}
\label{sec:convergence}

We establish almost-sure convergence for both algorithms introduced
in Section~\ref{sec:framework}.
Theorem~\ref{thm:qconv} shows that the variance-penalized Q-learning
iterates converge to the unique fixed point $Q^*$ of the penalized
Bellman operator $K$.
Theorem~\ref{thm:acconv} shows that the actor parameters in the
two-timescale actor-critic converge to a stationary point of the
variance-penalized objective $J_\mathrm{var}(\theta)$.

Both proofs use the ordinary differential equation (ODE) method of
Borkar~\cite{borkar2008stochastic}, which establishes almost-sure
convergence of a stochastic approximation (SA) recursion by analyzing
the asymptotic behavior of an associated deterministic ODE.
The key observation that enables the proofs is that the nonparametric
variance estimate $\hat{\sigma}_\mathrm{clip}$ appears identically in
both arguments of any difference $KQ_1 - KQ_2$, so it cancels when
establishing the contraction property of $K$.
As a consequence, the convergence analysis requires only that
$\hat{\sigma}_\mathrm{clip}$ be uniformly \emph{bounded}, not that it
be consistent.
This is a strict relaxation of the assumptions used by dual-critic
methods such as VPAC~\cite{jain2021variance}, where the variance critic
must converge to the true second moment.
We analyze the on-policy setting in detail here; the off-policy
extension with importance sampling follows by the same argument with
appropriately modified martingale conditions, and its proof is
deferred to the appendix.

\subsection{Assumptions}
\label{subsec:assumptions}

\begin{assumption}[Ergodicity]
\label{ass:ergodic}
For any fixed stationary policy $\pi$, the Markov chain
$(S_t, A_t)_{t \geq 0}$ is irreducible and aperiodic, with a unique
stationary distribution $\mu^\pi$ over $\mathcal{S} \times \mathcal{A}$.
\end{assumption}

Assumption~\ref{ass:ergodic} ensures that time averages of functions
of the trajectory converge to their expectations under $\mu^\pi$, which
is necessary for the noise terms in the SA recursion to satisfy the
martingale condition required by the ODE method.
It is standard in convergence analyses of Q-learning and
actor-critic algorithms~\cite{borkar2008stochastic,
bhatnagar2009natural}.

\begin{assumption}[Bounded Rewards]
\label{ass:rewards}
The reward function satisfies $|R(s,a)| \leq R_\mathrm{max} < \infty$
for all $(s,a) \in \mathcal{S} \times \mathcal{A}$.
\end{assumption}

Assumption~\ref{ass:rewards} ensures that the martingale difference
noise $M_{t+1}$ in the Q-learning recursion has bounded conditional
variance, which is required for the noise condition in
Borkar's Theorem~2.2~\cite{borkar2008stochastic}.
Under this assumption, the optimal Q-function satisfies
$\|Q^*\|_\infty \leq R_\mathrm{max}/(1-\gamma)$.

\begin{assumption}[Bounded Variance Estimates]
\label{ass:bounded}
The clipped variance estimate satisfies
$0 \leq \hat{\sigma}_\mathrm{clip}(s,a) \leq c_\mathrm{max} < \infty$
for all $(s,a)$ and all $t \geq 0$.
\end{assumption}

Assumption~\ref{ass:bounded} is enforced algorithmically by the
clipping operation in~\eqref{eq:clip} and holds by construction for
both the RS and BS estimators.
Critically, this assumption does not require the estimate to be
accurate or consistent; it requires only that the variance penalty
remain bounded, ensuring that the penalized Bellman operator $K$
is a well-defined contraction.

\begin{assumption}[Step-Size Conditions]
\label{ass:stepsizes}
The learning rate sequences $\{\alpha_t\}$ (Q-learning),
$\{\alpha_{w,t}\}$ (critic), and $\{\alpha_{\theta,t}\}$ (actor)
are positive and satisfy the Robbins-Monro conditions:
\begin{align}
\sum_{t=0}^{\infty} \alpha_t = \infty, \qquad
\sum_{t=0}^{\infty} \alpha_t^2 < \infty.
\label{eq:rm}
\end{align}
For the two-timescale actor-critic, the step sizes additionally satisfy
the separation condition:
\begin{equation}
\lim_{t \to \infty} \frac{\alpha_{\theta,t}}{\alpha_{w,t}} = 0,
\label{eq:twotimescale}
\end{equation}
so that the critic update operates on a strictly faster timescale
than the actor update.
\end{assumption}

The conditions in~\eqref{eq:rm} are the classical Robbins-Monro
conditions that prevent the step size from decaying too quickly
(sum to infinity) while ensuring eventual convergence (square-summable).
The separation condition~\eqref{eq:twotimescale} is the standard
requirement for two-timescale SA~\cite{borkar2008stochastic,
bhatnagar2009natural}: from the actor's perspective the critic
appears to have converged, and from the critic's perspective the
actor moves infinitely slowly.

\begin{assumption}[Bounded Iterates]
\label{ass:bounded-iterates}
The Q-learning and actor-critic iterates are almost surely bounded:
$\sup_t \|Q_t\|_\infty < \infty$ a.s.\ and
$\sup_t (\|w_t\| + \|\theta_t\|) < \infty$ a.s.
\end{assumption}

Assumption~\ref{ass:bounded-iterates} is standard in the SA
literature and corresponds to condition~(A4) of
Borkar~\cite{borkar2008stochastic}.
In the tabular setting with bounded rewards and $\gamma < 1$, the
Q-iterates are naturally bounded by $R_\mathrm{max}/(1-\gamma)$
under any reasonable initialization and step-size schedule.
The assumption may alternatively be enforced by projecting iterates
onto a compact set at each step without affecting the convergence
argument~\cite{bhatnagar2009natural}.

\begin{assumption}[Smooth Policy Parameterization]
\label{ass:smooth}
The policy $\pi_\theta(a|s)$ is differentiable in $\theta$ for all
$(s,a)$, and the score function is Lipschitz: there exists
$L_\theta < \infty$ such that
\begin{equation}
\|\nabla_\theta \log \pi_\theta(a|s)
  - \nabla_\theta \log \pi_{\theta'}(a|s)\|
\leq L_\theta \|\theta - \theta'\|
\label{eq:lipschitz-score}
\end{equation}
for all $(s,a)$ and all $\theta, \theta'$.
\end{assumption}

Assumption~\ref{ass:smooth} ensures that the drift function of the
actor ODE is Lipschitz continuous, which is required for the
ODE limit to uniquely track the SA recursion.
It is satisfied by standard parameterizations including softmax
policies and Gaussian policies with bounded covariance.

\subsection{Convergence of Variance-Penalized Q-Learning}
\label{subsec:qconv}

\begin{theorem}[Almost-Sure Convergence of Variance-Penalized Q-Learning]
\label{thm:qconv}
Under Assumptions~\ref{ass:ergodic}--\ref{ass:bounded-iterates},
the iterates $\{Q_t\}$ generated by Algorithm~\ref{alg:qlearning}
converge almost surely to the unique fixed point $Q^*$ of the
penalized Bellman operator $K$ defined in~\eqref{eq:K}:
\begin{equation}
\lim_{t \to \infty} Q_t(s,a) = Q^*(s,a) \quad \mathrm{a.s.},
\quad \forall\, (s,a) \in \mathcal{S} \times \mathcal{A}.
\end{equation}
\end{theorem}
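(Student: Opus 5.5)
The plan is to cast the update~\eqref{eq:q-update} as an asynchronous stochastic approximation recursion and invoke Borkar's ODE method, following the classical Tsitsiklis/Jaakkola-style proof for standard Q-learning. Write the recursion at the visited pair $(S_t,A_t)$ as
\[
Q_{t+1}(s,a) = Q_t(s,a) + \alpha_t \mathbf{1}\{(S_t,A_t)=(s,a)\}\bigl[(K_tQ_t)(s,a) - Q_t(s,a) + M_{t+1}(s,a)\bigr],
\]
where $K_t$ is the penalized operator built with the current clipped estimate $\hat\sigma_{\mathrm{clip},t}$. The martingale difference is
\[
M_{t+1} = R_{t+1} + \gamma\max_{a'}\bigl[Q_t(S_{t+1},a') - \beta\hat\sigma_{\mathrm{clip},t}(S_{t+1},a')\bigr] - (K_tQ_t)(S_t,A_t).
\]
Here I use the fact that maximizing $F^\pi$ over $\pi$ reduces to maximizing over actions, as in Algorithm~\ref{alg:qlearning}.

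\textbf{Step 1: contraction, uniformly in the estimate.} For any fixed $\hat\sigma$ with $0\le\hat\sigma\le c_{\max}$ (Assumption~\ref{ass:bounded}), use $|\max_{a'}(x_{a'}-c_{a'})-\max_{a'}(y_{a'}-c_{a'})|\le \max_{a'}|x_{a'}-y_{a'}|$. This gives $\|KQ_1-KQ_2\|_\infty\le\gamma\|Q_1-Q_2\|_\infty$, because the penalty cancels in the difference. Boundedness of rewards and penalty makes $K$ map bounded functions to bounded functions, so Banach's theorem gives a unique fixed point.

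\textbf{Step 2: noise conditions.} By Assumptions~\ref{ass:rewards}, \ref{ass:bounded} and~\ref{ass:bounded-iterates}, $\mathbb{E}[\|M_{t+1}\|^2\mid\mathcal F_t]\le C(1+\|Q_t\|^2_\infty)$. Since $\hat\sigma_{\mathrm{clip},t}$ is $\mathcal F_t$-measurable, $M_{t+1}$ is a martingale difference. Combined with $\sum\alpha_t^2<\infty$ and bounded iterates, the martingale $\sum\alpha_tM_{t+1}$ converges a.s.

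\textbf{Step 3: asynchrony.} Assumption~\ref{ass:ergodic} ensures every pair is visited infinitely often with a positive limiting frequency. Borkar's asynchronous SA theorem then lets the limiting ODE be $\dot Q = \Lambda(KQ-Q)$ with a positive diagonal $\Lambda$. The max-norm contraction gives a global Lyapunov function $\|Q-Q^*\|_\infty$ (Borkar--Soumyanath), hence a unique globally asymptotically stable equilibrium. Assumption~\ref{ass:bounded-iterates} supplies the stability hypothesis (A4), and the theorem yields $Q_t\to Q^*$ a.s.

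\textbf{Main obstacle.} The operator $K_t$ is time-varying, since $\hat\sigma_{\mathrm{clip},t}$ is updated along the trajectory, whereas the statement refers to a single fixed point $Q^*$. Boundedness alone gives only that $Q_t$ tracks a moving target $Q^*_t$. I would handle this in one of two ways:

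\begin{enumerate}[label=(\roman*)]
\item Show $\hat\sigma_{\mathrm{clip},t}$ converges a.s. For RS this is a Cesàro-type statistic of the iterates; for BS it follows from the masked replicate updates. Then treat $K_t-K_\infty$ as an asymptotically vanishing perturbation $\varepsilon_t\to0$, which Borkar's framework tolerates (Chapter~2, Lemma on $o(1)$ perturbations).
\item Alternatively, use a Jaakkola--Jordan--Singh-style bound $\|Q_t-Q^*_t\|\le$ contraction term plus noise plus $\gamma\beta\|\hat\sigma_{t}-\hat\sigma_{\infty}\|$, which again needs the estimate to settle.
\end{enumerate}

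I expect this step to be the hardest. If convergence of the estimator cannot be justified, the honest conclusion would be that $Q_t$ converges to the fixed point associated with the limiting clipped penalty, or to a $\gamma\beta c_{\max}/(1-\gamma)$-neighborhood of it otherwise.
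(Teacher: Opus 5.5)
Your Steps 1--3 are the paper's proof in Appendix~\ref{app:proof-qconv}. It has the same components: the asynchronous reduction via positive visitation frequency, the $\gamma$-contraction in which the penalty cancels, the $C(1+\|Q_t\|_\infty^2)$ noise bound, (A4) from Assumption~\ref{ass:bounded-iterates}, and global stability through $\|Q-Q^*\|_\infty$. The paper uses a Dini-derivative comparison on $\dot Q=KQ-Q$ instead of your $\Lambda(KQ-Q)$, which is a cosmetic difference. The paper does not resolve your ``main obstacle''; it defines it away. Its contraction step explicitly treats $\hat{\sigma}_{\mathrm{clip}}$ as a \emph{fixed} bounded penalty inside $K$, and the remark after the theorem says $Q^*$ depends on the realization of $\hat{\sigma}_{\mathrm{clip}}$. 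Under that convention (penalty frozen, e.g.\ after warm-up), your argument is complete after Step~3 and coincides with the paper's.

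For the algorithm as literally run, your objection is correct, and it applies equally to the paper's proof. With $\hat{\sigma}_{\mathrm{clip},t}$ updated along the trajectory, boundedness alone does not give convergence to a single $Q^*$. However, your proposal leaves this gap open, because both (i) and (ii) need $\hat{\sigma}_{\mathrm{clip},t}$ to converge a.s., and that is not available.
\begin{itemize}
\item \textbf{RS.} Random scaling is a pivotal but \emph{inconsistent} normalizer, despite the paper describing it as consistent. Even for a fixed recursion, its FCLT limit is the nondegenerate random matrix $\Sigma^{1/2}\bigl(\int_0^1\bar W(r)\bar W(r)^\top dr\bigr)\Sigma^{1/2}$ with $\bar W(r)=W(r)-rW(1)$. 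By Brownian scaling it keeps fluctuating in $\log t$ rather than settling.
\item \textbf{BS.} Under Robbins--Monro steps, the difference of two replicates at a visited pair is an SA recursion with drift $-\tfrac12\bigl(Q^{(k)}-Q^{(l)}\bigr)$ plus bounded-variance martingale noise, so $\hat V_{\mathrm{BS},t}\to0$ a.s. Route~(i) then works, but the limiting operator is the \emph{unpenalized} Bellman operator. You would be proving convergence to the risk-neutral fixed point.
\end{itemize}

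What boundedness alone actually gives is your fallback. Fix any $\sigma_0$ with $0\le\sigma_0\le c_{\max}$, and let $K_0$ be the penalized operator built from it, with fixed point $Q^*(\sigma_0)$. Then $\|K_tQ-K_0Q\|_\infty\le\gamma\beta c_{\max}$. The same Lyapunov function now satisfies $D^{+}V\le-(1-\gamma)V+\gamma\beta c_{\max}$, so the bounded-perturbation argument for contractive asynchronous SA yields $\limsup_t\|Q_t-Q^*(\sigma_0)\|_\infty\le\gamma\beta c_{\max}/(1-\gamma)$. State that result, or explicitly assume a frozen or convergent penalty, instead of leaving (i) and (ii) open.
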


\begin{proof}[Proof Sketch]
We verify the the conditions of Borkar's Theorem~2.2
\cite{borkar2008stochastic} for the SA recursion
$Q_{t+1} = Q_t + \alpha_t(h(Q_t) + M_{t+1})$,
where $h(Q) = KQ - Q$ and
\begin{equation}
M_{t+1}(s_t,a_t) =
  \bigl(R_{t+1} + \gamma \max_\pi F^\pi(S_{t+1})\bigr)
  - (KQ_t)(s_t,a_t)
\label{eq:mds}
\end{equation}
is the martingale difference sequence.

\textit{Step 1: Contraction of $K$ and global stability of the ODE.}
Let $Q_1, Q_2$ be any two Q-functions and let
$\pi_1 = \arg\max_\pi F^{\pi}_{Q_1}(s')$ and
$\pi_2 = \arg\max_\pi F^{\pi}_{Q_2}(s')$.
Since $\hat{\sigma}_\mathrm{clip}$ is the same in both
$F^{\pi}_{Q_1}$ and $F^{\pi}_{Q_2}$, it cancels in the difference:
\begin{align}
|(KQ_1)(s,a) - (KQ_2)(s,a)|
&\leq \gamma \,\mathbb{E}\!\Bigl[
  \bigl|F^{\pi_1}_{Q_1}(S')
  - F^{\pi_1}_{Q_2}(S')\bigr|
  \nonumber \\
&\qquad\qquad \Big|\, s, a
\Bigr] \nonumber \\
&= \gamma \,\mathbb{E}\!\Bigl[
  \Bigl|\sum_{a'} \pi_1(a'|S')
  \bigl(Q_1(S',a') \nonumber \\
&\qquad\qquad
  - Q_2(S',a')\bigr)\Bigr|
  \,\Big|\, s, a
\Bigr] \nonumber \\
&\leq \gamma \|Q_1 - Q_2\|_\infty.
\label{eq:contraction}
\end{align}
Thus $K$ is a $\gamma$-contraction in $\|\cdot\|_\infty$ and has a
unique fixed point $Q^*$ by Banach's theorem.
The drift function $h(Q) = KQ - Q$ is Lipschitz with constant
$1 + \gamma$ and the ODE $\dot{Q}(t) = h(Q(t))$ has $Q^*$ as its
unique globally asymptotically stable equilibrium.

\textit{Step 2: Martingale difference condition.}
Because $\mathbb{E}[M_{t+1} \mid \mathcal{F}_t] = 0$ (where
$\mathcal{F}_t = \sigma(Q_0, \ldots, Q_t, S_0, A_0, \ldots, S_t, A_t)$)
and rewards and $\hat{\sigma}_\mathrm{clip}$ are both bounded
(Assumptions~\ref{ass:rewards} and~\ref{ass:bounded}), the
conditional second moment satisfies
$\mathbb{E}[M_{t+1}^2 \mid \mathcal{F}_t] \leq C(1 + \|Q_t\|_\infty^2)$
for a constant $C > 0$.

\textit{Step 3: Step-size and boundedness conditions.}
These are directly supplied by
Assumptions~\ref{ass:stepsizes} and~\ref{ass:bounded-iterates}.

\textit{Conclusion.}
All conditions of Borkar's Theorem~2.2 are satisfied.
Therefore $Q_t \to Q^*$ almost surely.
The full proof, including verification of the noise growth condition
and the stable equilibrium argument, is given in
Appendix~\ref{app:proof-qconv}.
\end{proof}

\begin{remark}
The contraction in Step~1 holds for any bounded $\hat{\sigma}_\mathrm{clip}$,
regardless of whether it is close to the true $\sigma^\pi$.
This is the key distinction from dual-critic methods: in VPAC, the variance critic
must converge to the true second moment for the operator to be well-defined,
requiring a third timescale and a consistency assumption.
Here, the fixed point $Q^*$ depends on the specific realization of
$\hat{\sigma}_\mathrm{clip}$, but its existence and uniqueness are guaranteed
by boundedness alone.
\end{remark}

\subsection{Convergence of the Two-Timescale Actor-Critic}
\label{subsec:acconv}

In the on-policy actor-critic setting, the critic parameters $w_t$ track the
variance-penalized TD target and the actor parameters $\theta_t$
follow the policy gradient.
The step-size separation in Assumption~\ref{ass:stepsizes} means that
from the critic's perspective the actor is quasi-static, and from the
actor's perspective the critic has converged.
We make this precise via Borkar's two-timescale framework
(Chapter~6 of~\cite{borkar2008stochastic}).

\begin{theorem}[Almost-Sure Convergence of the Variance-Penalized
Actor-Critic]
\label{thm:acconv}
Under Assumptions~\ref{ass:ergodic}--\ref{ass:smooth}, bounded
features and policy scores on the relevant compact parameter set,
and the standard linear TD stability condition that the on-policy
system matrix $G(\theta) = A(\theta) - \gamma C(\theta)$ is uniformly
positive definite, the critic parameters $w_t$ in Algorithm~\ref{alg:ac}
track the equilibrium induced by the current policy:
\begin{equation}
w_t - w^*(\theta_t) \to 0 \quad \mathrm{a.s.}
\label{eq:critic-track}
\end{equation}
The actor iterates $\theta_t$ track the ODE
\begin{equation}
\dot{\theta} = g(\theta, w^*(\theta)) = \nabla_\theta J_\mathrm{var}(\theta),
\label{eq:acconv}
\end{equation}
where
\begin{equation}
\begin{aligned}
\nabla_\theta J_\mathrm{var}(\theta)
&= \mathbb{E}_{(s,a) \sim \mu^{\pi_\theta}}\!\Bigl[
    \bigl(Q_{w^*(\theta)}(s,a)
    - \beta\,\hat{\sigma}_\mathrm{clip}(s,a)\bigr) \\
&\qquad\qquad \times
    \nabla_\theta \log \pi_\theta(a|s)
  \Bigr].
\end{aligned}
\label{eq:Jvar-grad}
\end{equation}
Consequently, the limit set of $\{\theta_t\}$ is almost surely
contained in the set of stationary points of $J_\mathrm{var}$.
If the stationary points of $J_\mathrm{var}$ are isolated, then
$\theta_t$ converges almost surely to a single stationary point
$\theta^*$ satisfying $\nabla_\theta J_\mathrm{var}(\theta^*) = 0$.
\end{theorem}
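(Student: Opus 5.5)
The plan is to apply Borkar's two-timescale theorem (Chapter~6 of~\cite{borkar2008stochastic}) to the coupled recursions
\begin{align*}
w_{t+1} &= w_t + \alpha_{w,t}\bigl(f(w_t,\theta_t) + M^{w}_{t+1}\bigr),\\
\theta_{t+1} &= \theta_t + \alpha_{\theta,t}\bigl(g(\theta_t,w_t) + M^{\theta}_{t+1}\bigr).
\end{align*}
For linear features $Q_w(s,a)=\phi(s,a)^\top w$, the mean critic drift is $f(w,\theta)=b_\beta(\theta)-G(\theta)w$. Here
$b_\beta(\theta)=\mathbb{E}_{\mu^{\pi_\theta}}[\phi(R-\gamma\beta\sum_{a'}\pi_\theta(a'|S')\hat{\sigma}_{\mathrm{clip}}(S',a'))]$ and $G(\theta)=A(\theta)-\gamma C(\theta)$. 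The actor drift is $g(\theta,w)=\mathbb{E}_{\mu^{\pi_\theta}}[(Q_w-\beta\hat{\sigma}_{\mathrm{clip}})\nabla_\theta\log\pi_\theta]$.

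As in the proof of Theorem~\ref{thm:qconv}, $\hat{\sigma}_{\mathrm{clip}}$ enters only as a bounded additive term, by Assumption~\ref{ass:bounded}. It therefore shifts $b_\beta$ but does not affect the matrix $G$. This is the mechanism that lets boundedness replace consistency. The noise terms $M^w_{t+1}$ and $M^\theta_{t+1}$ are defined as the sampled increments minus these means. Because the data are Markovian, I would handle this either through Borkar's Markov-noise extension (Chapter~6.3), using Assumption~\ref{ass:ergodic}, or through a Poisson-equation decomposition into a martingale part plus a telescoping remainder.

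\textbf{Step 1 (fast timescale).} Freeze $\theta$. Uniform positive definiteness of $G(\theta)$ makes $\dot w=b_\beta(\theta)-G(\theta)w$ globally exponentially stable at $w^*(\theta)=G(\theta)^{-1}b_\beta(\theta)$. The map $w^*$ is Lipschitz on the compact parameter set: $G$ and $b_\beta$ are Lipschitz in $\theta$ there, since $\mu^{\pi_\theta}$ depends smoothly on $\theta$ through Assumptions~\ref{ass:ergodic} and~\ref{ass:smooth}, and $G^{-1}$ is uniformly bounded. The martingale noise has second moments bounded by $C(1+\|w_t\|^2+\|\theta_t\|^2)$ by bounded rewards, features, scores and $c_{\max}$. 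Together with Assumptions~\ref{ass:stepsizes} and~\ref{ass:bounded-iterates}, Borkar's Lemma~6.1 and Theorem~6.2 then give $(w_t,\theta_t)\to\{(w^*(\theta),\theta)\}$, which is~\eqref{eq:critic-track}.

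\textbf{Step 2 (slow timescale).} Write
\[
g(\theta_t,w_t)=g(\theta_t,w^*(\theta_t))+\bigl[g(\theta_t,w_t)-g(\theta_t,w^*(\theta_t))\bigr].
\]
The bracket is $O(\|w_t-w^*(\theta_t)\|)$ because scores and features are bounded, so by Step~1 it is an asymptotically vanishing perturbation. The $\theta$-iterates are then an asymptotic pseudo-trajectory of $\dot\theta=g(\theta,w^*(\theta))$, which is~\eqref{eq:acconv}. The identification of this drift with $\nabla_\theta J_{\mathrm{var}}$ is taken as the definition in~\eqref{eq:Jvar-grad}.

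\textbf{Step 3 (limit set).} Use $J_{\mathrm{var}}$ as a Lyapunov function: $\frac{d}{dt}J_{\mathrm{var}}(\theta(t))=\|\nabla J_{\mathrm{var}}\|^2\ge 0$. By Bena\"im's/Borkar's result on internally chain transitive sets of gradient-like flows, the limit set lies in $\{\nabla J_{\mathrm{var}}=0\}$, provided the set of values of $J_{\mathrm{var}}$ on that set has empty interior, e.g.\ by Sard with enough smoothness. If the stationary points are isolated, the limit set, being connected and compact, is a single point $\theta^*$.

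\textbf{Main obstacle.} The hard step is Step~3's Lyapunov argument. The drift must truly be a gradient of a scalar function, but $\hat{\sigma}_{\mathrm{clip}}$ is not a fixed function of $\theta$: it is a running statistic of the trajectory. For the slow ODE to be autonomous, I would first try to argue that $\hat{\sigma}_{\mathrm{clip}}$ converges to, or is asymptotically frozen at, some limit $\bar\sigma$. The RS and BS accumulators average over increasingly long histories, so their increments vanish. If that fails, I would instead treat $\hat{\sigma}_{\mathrm{clip}}$ as a third, slowest quasi-static variable and state the result for each frozen value.

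A second subtlety is that $g(\theta,w^*(\theta))$ equals the true gradient only up to the critic's function-approximation bias. If a gap remains there, I would restrict to compatible features, or else interpret $J_{\mathrm{var}}$ as the function whose gradient is~\eqref{eq:Jvar-grad}.
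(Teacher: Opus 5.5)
Your proposal follows the paper's proof essentially step for step: Borkar's two-timescale scheme with Markovian noise, and the linear critic drift $b_\mathrm{var}(\theta)-G(\theta)w$ with $w^*(\theta)=G(\theta)^{-1}b_\mathrm{var}(\theta)$, where $\hat{\sigma}_\mathrm{clip}$ enters only through the vector $b_\mathrm{var}$. It also shares the continuity of $\theta\mapsto w^*(\theta)$ and the slow ODE $\dot\theta=g(\theta,w^*(\theta))$ identified with $\nabla_\theta J_\mathrm{var}$, and your explicit Lyapunov/chain-transitivity argument in Step~3 is more careful than the paper's direct appeal to Borkar's Chapter~6 theorem. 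The paper handles both obstacles you flag exactly by your fallbacks: it treats $\hat{\sigma}_\mathrm{clip}$ as a fixed bounded penalty inside each mean-field evaluation, and it equates the drift with $\nabla_\theta J_\mathrm{var}$ by appeal to the policy gradient theorem, as the statement itself builds in. So you need not pursue your first option of showing that $\hat{\sigma}_\mathrm{clip}$ freezes, which would not work for the RS statistic anyway. Vanishing increments do not give convergence, and under the FCLT that statistic converges only in distribution to a nondegenerate random limit, so it keeps fluctuating at order one.
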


\begin{proof}[Proof Sketch]
We decompose the analysis into the two timescales.

\textit{Faster timescale (critic).}
For a quasi-static actor parameter $\theta$, the critic update
\begin{equation}
w_{t+1} = w_t - \alpha_{w,t}\,
  \nabla_w \bigl(Q_w(S_t,A_t) - y_t\bigr)^2
\label{eq:critic-update-conv}
\end{equation}
is a stochastic approximation recursion with Markovian observation
noise.
Under linear function approximation, its mean field takes the form
\begin{equation}
h_w(w,\theta) = b_\mathrm{var}(\theta) - G(\theta)w,
\qquad G(\theta) = A(\theta) - \gamma C(\theta),
\end{equation}
where the variance penalty enters only through $b_\mathrm{var}(\theta)$
and $G(\theta)$ retains the structure of the standard on-policy linear
TD system matrix.
Under the assumed uniform positive definiteness of $G(\theta)$, the
critic ODE has the unique globally asymptotically stable equilibrium
\begin{equation}
w^*(\theta) = G(\theta)^{-1}b_\mathrm{var}(\theta),
\end{equation}
and $w_t \to w^*(\theta)$ almost surely by Borkar's Theorem~2.2.
The map $\theta \mapsto w^*(\theta)$ is continuous by the implicit
function theorem under Assumption~\ref{ass:smooth}.

\textit{Slower timescale (actor).}
Because $\alpha_{\theta,t}/\alpha_{w,t} \to 0$, the actor sees the
critic as already at equilibrium $w^*(\theta_t)$.
Treating $\hat{\sigma}_\mathrm{clip}$ as a fixed bounded penalty within
each mean-field evaluation, the same convention used for $\theta$ being
quasi-static from the critic's perspective, the actor update
approximately tracks the ODE
\begin{equation}
\dot{\theta}(t)
  = \mathbb{E}_{\mu^{\pi_{\theta(t)}}}\!\left[
    \hat{Q}_\mathrm{var}(s,a)\,
    \nabla_\theta \log \pi_{\theta(t)}(a|s)
  \right],
\label{eq:actor-ode}
\end{equation}
where $\hat{Q}_\mathrm{var}(s,a) = Q_{w^*(\theta)}(s,a)
- \beta\,\hat{\sigma}_\mathrm{clip}(s,a)$.
Because $\hat{\sigma}_\mathrm{clip}$ is bounded
(Assumption~\ref{ass:bounded}) and the critic features and policy
score are bounded on the compact parameter set under
Assumption~\ref{ass:smooth}, $\hat{Q}_\mathrm{var}$ is bounded and the
drift function of~\eqref{eq:actor-ode} is Lipschitz.
The right-hand side of~\eqref{eq:actor-ode} equals
$\nabla_\theta J_\mathrm{var}(\theta)$, so the actor ODE is the
gradient flow of the variance-penalized objective, and its equilibria
satisfy $\nabla_\theta J_\mathrm{var}(\theta^*) = 0$.
The sample noise in both the critic and actor recursions satisfies the
bounded Markov-noise conditions of Borkar's two-timescale framework
under Assumptions~\ref{ass:ergodic},~\ref{ass:rewards},
\ref{ass:bounded}, and~\ref{ass:bounded-iterates}.

\textit{Conclusion.}
By Borkar's two-timescale theorem (Theorem~2, Chapter~6
of~\cite{borkar2008stochastic}), the coupled iterates
$(w_t, \theta_t)$ satisfy $w_t - w^*(\theta_t) \to 0$ almost surely,
and the limit set of $\{\theta_t\}$ is almost surely contained in the
stationary set of the actor ODE.
When the stationary points of $J_\mathrm{var}$ are isolated, this
limit set is a single point $\theta^*$.
The full proof is given in Appendix~\ref{app:proof-acconv}.
\end{proof}

\begin{remark}
Theorem~\ref{thm:acconv} guarantees convergence to the stationary set
of $J_\mathrm{var}$, and to a single stationary point under isolated
equilibria, not necessarily a global maximum, which is standard for
policy gradient methods in non-convex
settings~\cite{jain2021variance}.
The induced stationary points generally need not coincide with those
obtained using the true variance $\sigma^\pi$; the discrepancy is
controlled by the magnitude of the penalty term and vanishes as
$\beta \to 0$.
The proof requires only two timescales, compared to the three required
by VPAC~\cite{jain2021variance}.
Within each mean-field evaluation, $\hat{\sigma}_\mathrm{clip}$ is
treated as a fixed bounded penalty; under this convention, consistency
of the variance estimator with the true return variance is not
required for convergence to a stationary point of the induced
penalized objective.
\end{remark}

\subsection{Discussion}
\label{subsec:conv-discussion}

\paragraph{Finite-sample analysis.}
Theorems~\ref{thm:qconv} and~\ref{thm:acconv} establish almost-sure
asymptotic convergence but do not provide finite-sample rates.
Deriving such rates in the present setting faces two obstacles.
First, the nonparametric variance estimates are functions of the
entire trajectory history, so bounding their finite-sample bias and
variance under Markovian data requires tools from the functional CLT
literature~\cite{xie2022statistical} that are not directly compatible
with existing finite-sample SA frameworks.
Second, the two-timescale structure introduces an additional coupling
between the critic and actor convergence rates.
Recent work~\cite{sangadi2024finite, xu2020improving, wu2020finite}
has made progress on finite-sample actor-critic rates under specific
architectural assumptions; extending these results to accommodate
bounded-nonparametric variance penalties is an important open
direction that we discuss further in Section~\ref{sec:conclusion}.

\paragraph{Off-policy extension.}
The convergence argument for the off-policy actor-critic
(Algorithm~\ref{alg:ac} with $\rho_t \neq 1$) follows the same
two-timescale structure, with the importance sampling ratios
$\rho_t$ modifying the martingale difference terms in the SA
recursion.
Provided $\rho_t$ is clipped to a finite maximum ratio (a standard
practice in off-policy RL), the noise conditions are unchanged
and Borkar's theorem applies without modification.
The formal proof, including the modified martingale conditions and
the off-policy policy gradient expression of
Theorem~4 in Jain et~al.~\cite{jain2021variance}, is given in
Appendix~\ref{app:proof-offpolicy}.

\section{Numerical Experiments}
\label{sec:experiments}

We evaluate the proposed nonparametric variance-penalized framework
in two benchmark settings of increasing complexity: a discrete
$12 \times 12$ noisy puddle world and its continuous-state extension.
These environments are the canonical benchmarks for variance-sensitive
RL because the ground-truth variance structure is computable by design,
allowing direct verification that the learned policies avoid the correct
high-variance regions~\cite{jain2021variance, sherstan2018comparing}.
Standard continuous control benchmarks such as MuJoCo mix model
uncertainty with environment stochasticity in ways that obscure the
variance-reduction signal, so we follow the experimental protocol of
Jain et~al.~\cite{jain2021variance} and Prashanth and
Ghavamzadeh~\cite{prashanth2016variance} in using puddle world as the
primary testbed.

\subsection{Experimental Setup}
\label{subsec:setup}

\subsubsection{Environment}

The puddle world environment contains a $12 \times 12$ grid (discrete
setting) or a unit-square continuous state space (continuous setting),
with a contiguous region of frozen states in the center.
In normal states the agent receives a reward of $0$; in frozen states
the reward is drawn from a stochastic distribution, creating a
high-variance zone the agent can learn to avoid.
The goal state yields a reward of $+50$.
Because the expected reward in frozen and normal states is equal on
average, a risk-neutral agent has no incentive to avoid the frozen
region; only a variance-penalized agent learns to do so.
Uniform noise on frozen rewards is used in the discrete setting,
consistent with the off-policy variant of~\cite{jain2021variance}, and
Gaussian noise is used in the continuous setting.
The discount factor is $\gamma = 0.99$ throughout.

\subsubsection{Baselines}

We compare four methods: \textbf{AC} (vanilla actor-critic, $\beta = 0$,
risk-neutral baseline), \textbf{VPAC} (Jain et~al.~\cite{jain2021variance},
dual-critic with a learned variance network), \textbf{VPAC-RS}
(proposed, random scaling estimator), and \textbf{VPAC-BS} (proposed,
bootstrap ensemble estimator).

\subsubsection{Penalty Coefficient Selection}
\label{subsubsec:beta-selection}

Each method is evaluated at its best-performing $\beta$ value,
identified by a grid search over
$\beta \in \{0.001, 0.002, 0.005, 0.01, 0.05, 0.1, 0.3\}$ with the
criterion of minimising return variance subject to at most a 5\%
reduction in mean return.
We do not fix $\beta$ equal across methods because $\beta$ multiplies
fundamentally different quantities in each case: in VPAC it scales a
learned TD-based variance estimate whose magnitude is on the order of
$(R_\mathrm{max}/(1-\gamma))^2$, in VPAC-RS it scales the diagonal of
the partial-sum matrix $\hat{V}_n$ whose magnitude depends on the step
size and trajectory length, and in VPAC-BS it scales a cross-replicate
sample variance of Q-table predictions.
Setting $\beta$ equal across methods would therefore produce
incommensurable risk-aversion levels rather than a fair comparison.
The selected $\beta$ values are reported in Table~\ref{tab:hyperparams}.

\subsubsection{Implementation Details}

All experiments are run with 10 independent random seeds.
Results are reported as mean $\pm$ one standard deviation across seeds,
with learning curves smoothed using an exponential moving average with
coefficient $\alpha_\mathrm{smooth} = 0.03$ for returns and
$\alpha_\mathrm{smooth} = 0.05$ for variance.
Smoothing is applied identically across all methods and does not affect
the summary statistics reported in tables, which are computed from
unsmoothed rollouts of the converged policy.

In the discrete setting, agents are trained for 1{,}000 episodes with
a maximum of 500 steps per episode.
In the continuous setting, training runs for 2{,}000 episodes with
a maximum of 5{,}000 steps per episode, using tile coding with 10
tilings and 5 bins per dimension over a 1{,}024-dimensional feature
vector.
Variance estimates are updated every 20 steps in both settings.
For VPAC-RS, a warm-up of 5 episodes is applied before the variance
penalty is activated, allowing the running statistics to accumulate a
sufficient history before influencing the policy.
For VPAC-BS, the ensemble of $K = 10$ Q-tables is initialised
identically and begins receiving Bernoulli$(0.5)$-masked updates from
the first episode.
Full hyperparameter settings are given in Table~\ref{tab:hyperparams}
in Appendix~\ref{app:hyperparams}.

\subsection{Discrete (Tabular) Setting}
\label{subsec:discrete}

\subsubsection{Q-Learning with Empirical Variance Penalties}

We first evaluate the variance-penalized Q-learning algorithm
(Algorithm~\ref{alg:qlearning}) in isolation, comparing the two
nonparametric estimators across a range of penalty coefficients.
The tabular setting provides a controlled baseline because the Q-function
is exact (no function approximation error), allowing the effect of the
variance estimator to be evaluated without confounding from approximation
bias.

Figure~\ref{fig:ql_rs} shows results for VPAC-RS.
The left panel shows that mean episodic return remains approximately
stable across all penalty coefficients
($\beta \in \{0, 0.01, 0.1, 0.3\}$), converging to roughly 35--40 in
all cases, confirming that the variance penalty does not significantly
degrade expected-return performance.
The right panel shows a monotone reduction in return variance as $\beta$
increases: the baseline ($\beta = 0$) reaches a steady-state variance of
approximately 3{,}000, while $\beta = 0.1$ achieves approximately 500,
an 83\% reduction.
At $\beta = 0.3$, variance is reduced further but at a mild cost to mean
return, reflecting excessive avoidance of the frozen region.
The best risk-adjusted configuration is $\beta = 0.1$.

\begin{figure}[t]
\centering
\includegraphics[width=0.95\linewidth]{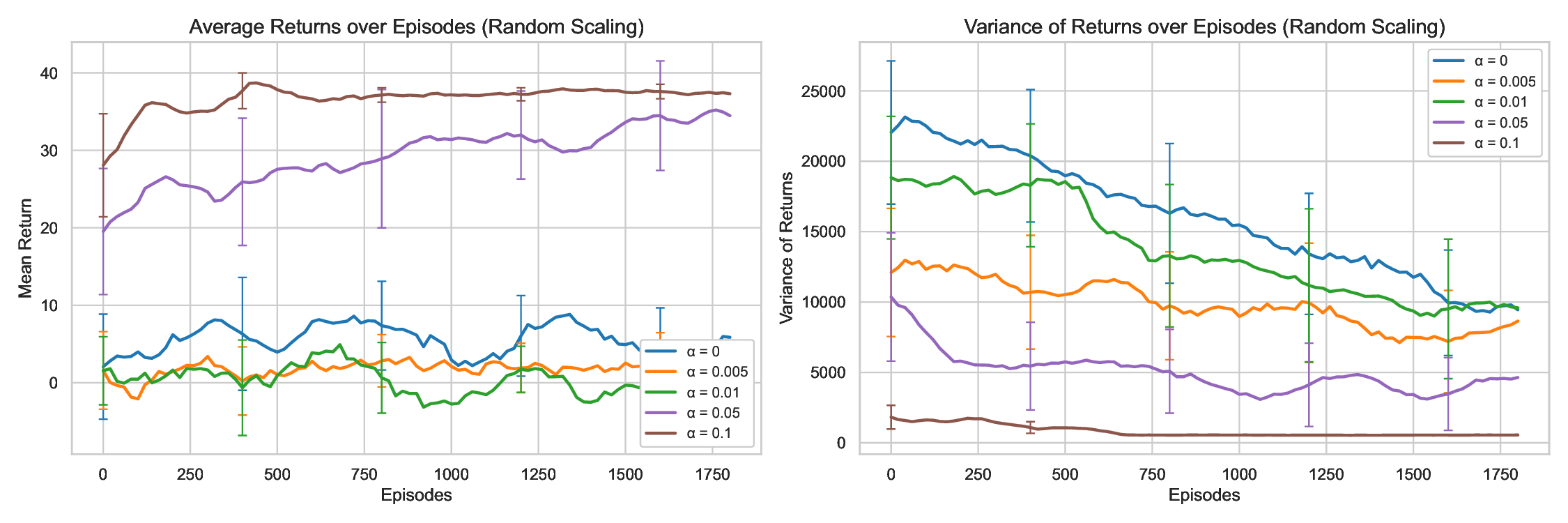}
\caption{Q-learning with random scaling variance penalties (VPAC-RS)
  in the discrete puddle world.
  Left: mean episodic return (smoothed) across penalty coefficients
  $\beta \in \{0, 0.01, 0.1, 0.3\}$.
  Right: return variance over training.
  The baseline ($\beta = 0$) reaches a steady-state variance of
  $\sim\!3{,}000$; $\beta = 0.1$ achieves $\sim\!500$ (83\%
  reduction) while maintaining comparable mean return.
  Error bars show $\pm 1$ standard error of the mean across 10 seeds, plotted at sampled intervals for clarity.}
\label{fig:ql_rs}
\end{figure}

Figure~\ref{fig:ql_bs} shows corresponding results for VPAC-BS.
The bootstrapping estimator produces qualitatively similar behaviour:
higher $\beta$ suppresses policy excursions into the frozen region and
produces more stable convergence, with comparable mean return across all
penalty values.
The variance reduction is consistent with VPAC-RS but tends to be more
gradual early in training, reflecting the time required for the
bootstrap ensemble to accumulate sufficient cross-replicate disagreement
before the penalty is informative.

\begin{figure}[t]
\centering
\includegraphics[width=0.95\linewidth]{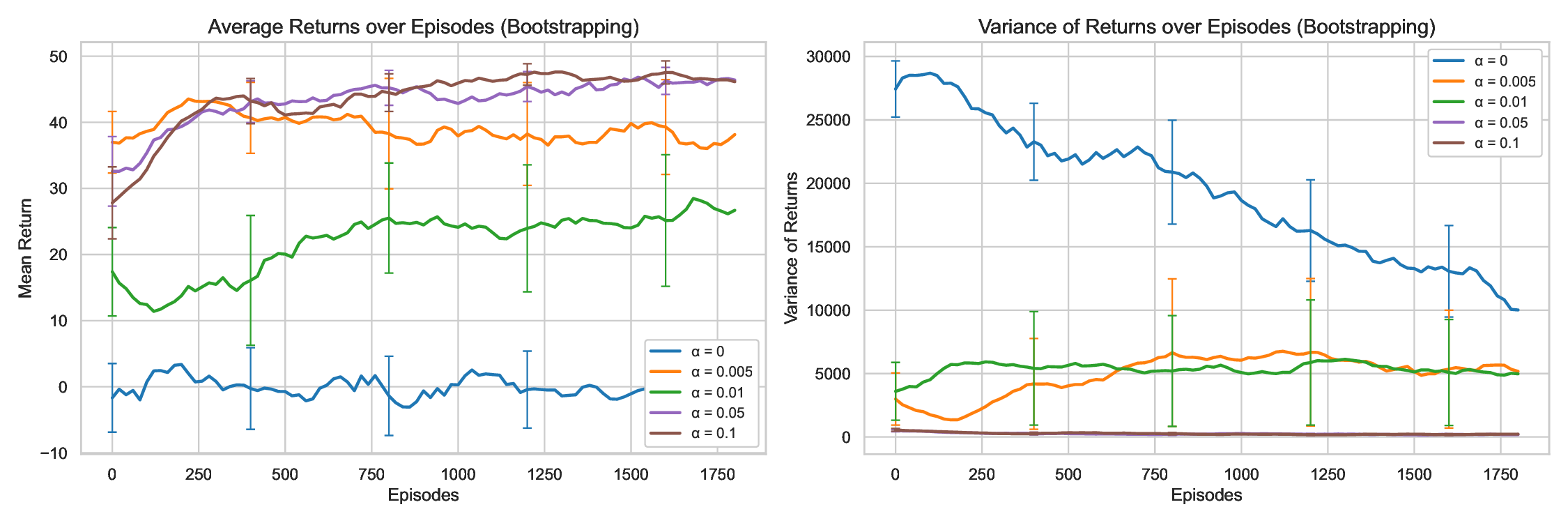}
\caption{Q-learning with bootstrap variance penalties (VPAC-BS) in
  the discrete puddle world.
  Higher $\beta$ values suppress early-training policy excursions into
  the frozen region and produce more stable convergence.
  Error bars show $\pm 1$ standard error of the mean across 10 seeds, plotted at sampled intervals for clarity.}
\label{fig:ql_bs}
\end{figure}

\subsubsection{Sensitivity to $\beta$}

The monotone relationship between $\beta$ and variance reduction
visible in Figures~\ref{fig:ql_rs} and~\ref{fig:ql_bs} confirms that
the variance penalty is well-behaved across a broad range of coefficient
values.
Both VPAC-RS and VPAC-BS achieve meaningful variance reduction (greater
than 30\%) for any $\beta \geq 0.01$, with the mean return degrading by
less than 5\% for $\beta \leq 0.1$.
A practical grid search over one order of magnitude is therefore
sufficient to identify a performant $\beta$ for both methods.

\subsubsection{Actor-Critic Comparison}

Figure~\ref{fig:vpac_tabular} presents the four-way actor-critic
comparison (AC, VPAC, VPAC-RS, VPAC-BS) on mean return (left) and
return variance (right).
All three variance-penalized methods achieve substantially lower
steady-state return variance than the risk-neutral AC baseline.
VPAC-RS and VPAC-BS match or exceed the variance reduction of VPAC
while eliminating the learned variance critic: VPAC-RS requires no
additional learned parameters beyond the primary Q-table, and VPAC-BS
requires $K$ additional Q-tables but no additional gradient-updated
parameter or step-size schedule.

\begin{figure}[t]
\centering
\includegraphics[width=0.95\linewidth]{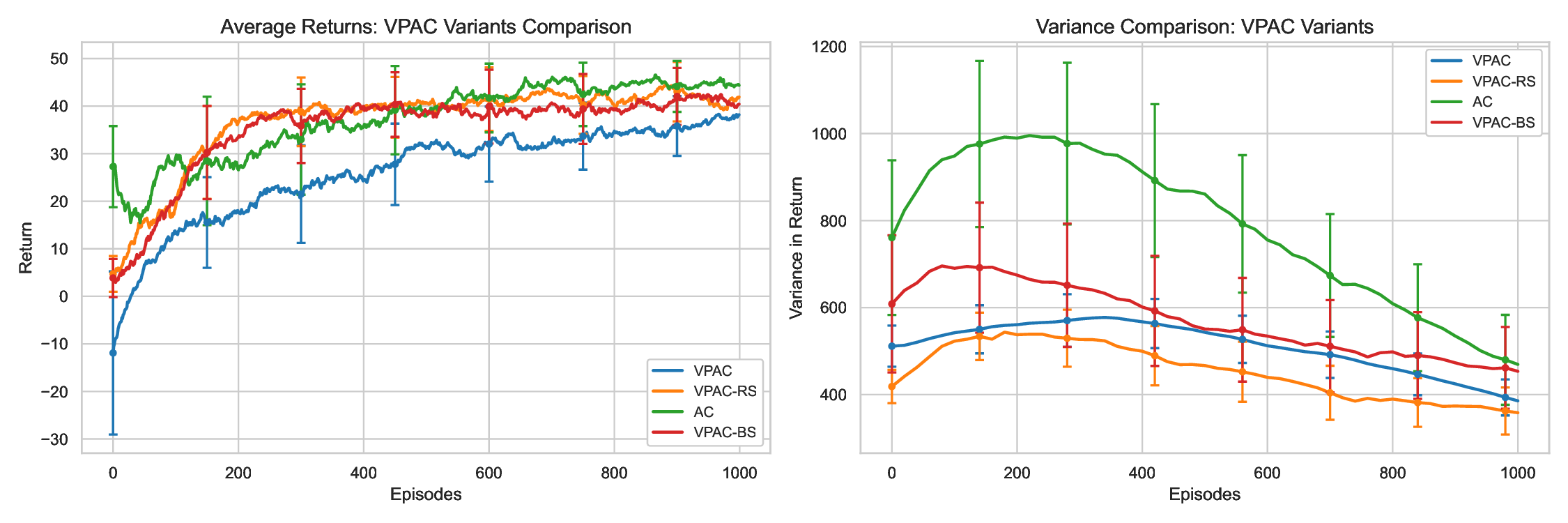}
\caption{Actor-critic comparison in the discrete puddle world.
  Left: mean episodic return. Right: return variance.
  VPAC-RS and VPAC-BS match or exceed the variance reduction of the
  dual-critic VPAC baseline while requiring no learned variance critic.
  Error bars show $\pm 1$ standard error of the mean across 10 seeds, plotted at sampled intervals for clarity.}
\label{fig:vpac_tabular}
\end{figure}

Figure~\ref{fig:trajectories} illustrates the qualitative behaviour of
the four converged policies via representative trajectories from a
common initial state.
The risk-neutral AC policy traverses the frozen region directly, while
VPAC, VPAC-RS, and VPAC-BS all learn to circumnavigate the frozen zone,
taking a longer but lower-variance path to the goal.

\begin{figure}[t]
\centering
\includegraphics[width=0.22\linewidth]{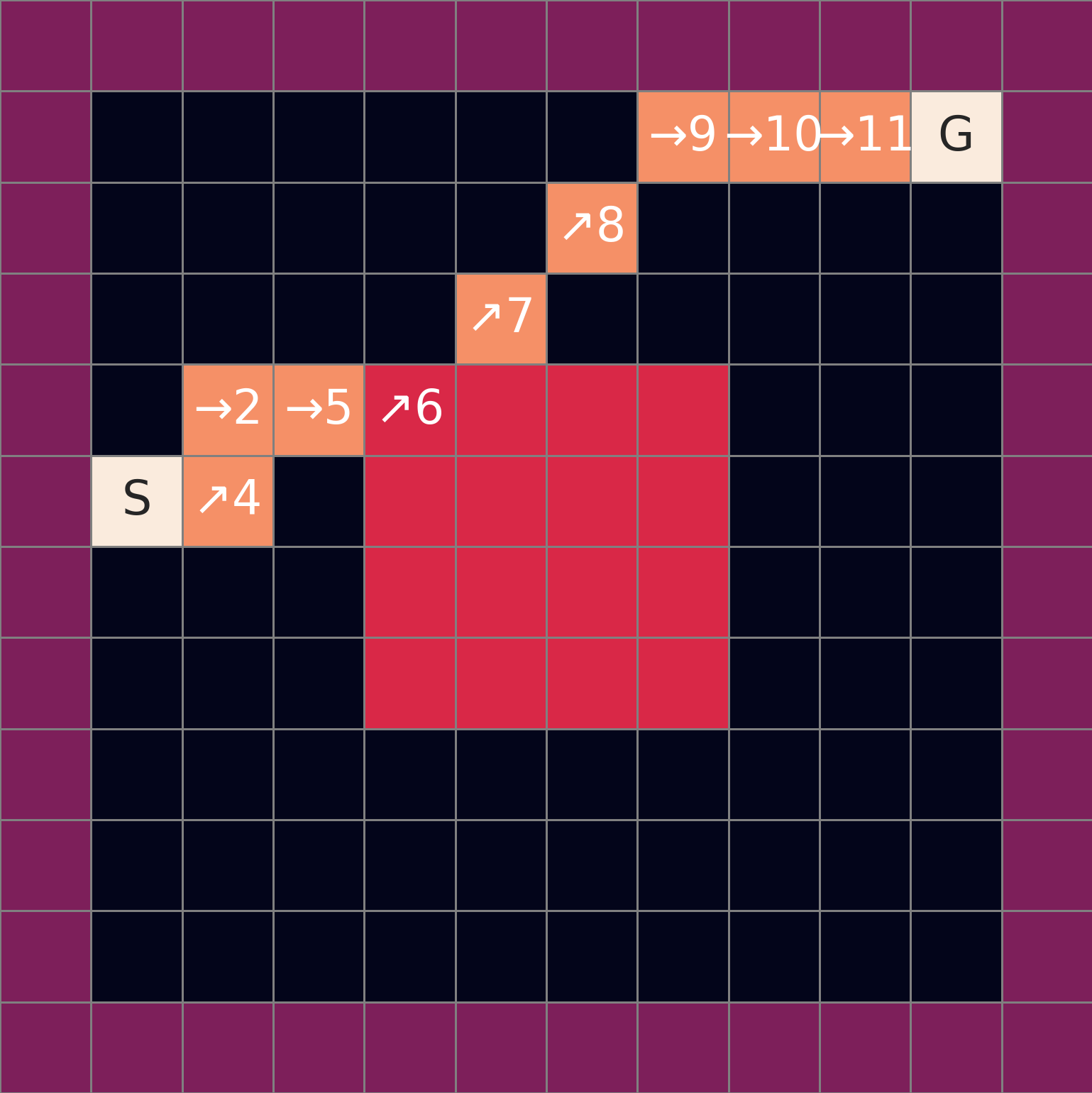}
\includegraphics[width=0.22\linewidth]{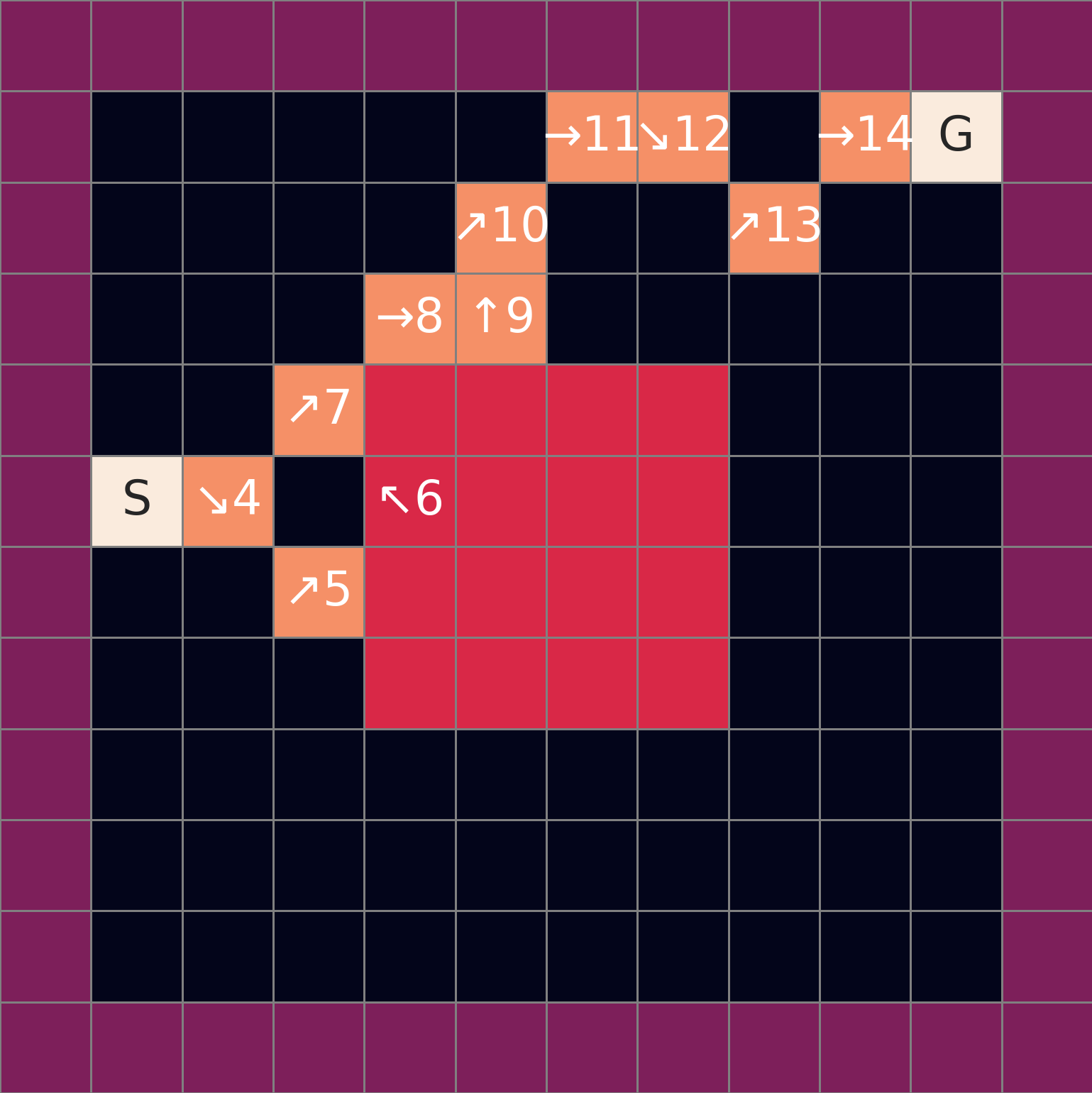}
\includegraphics[width=0.22\linewidth]{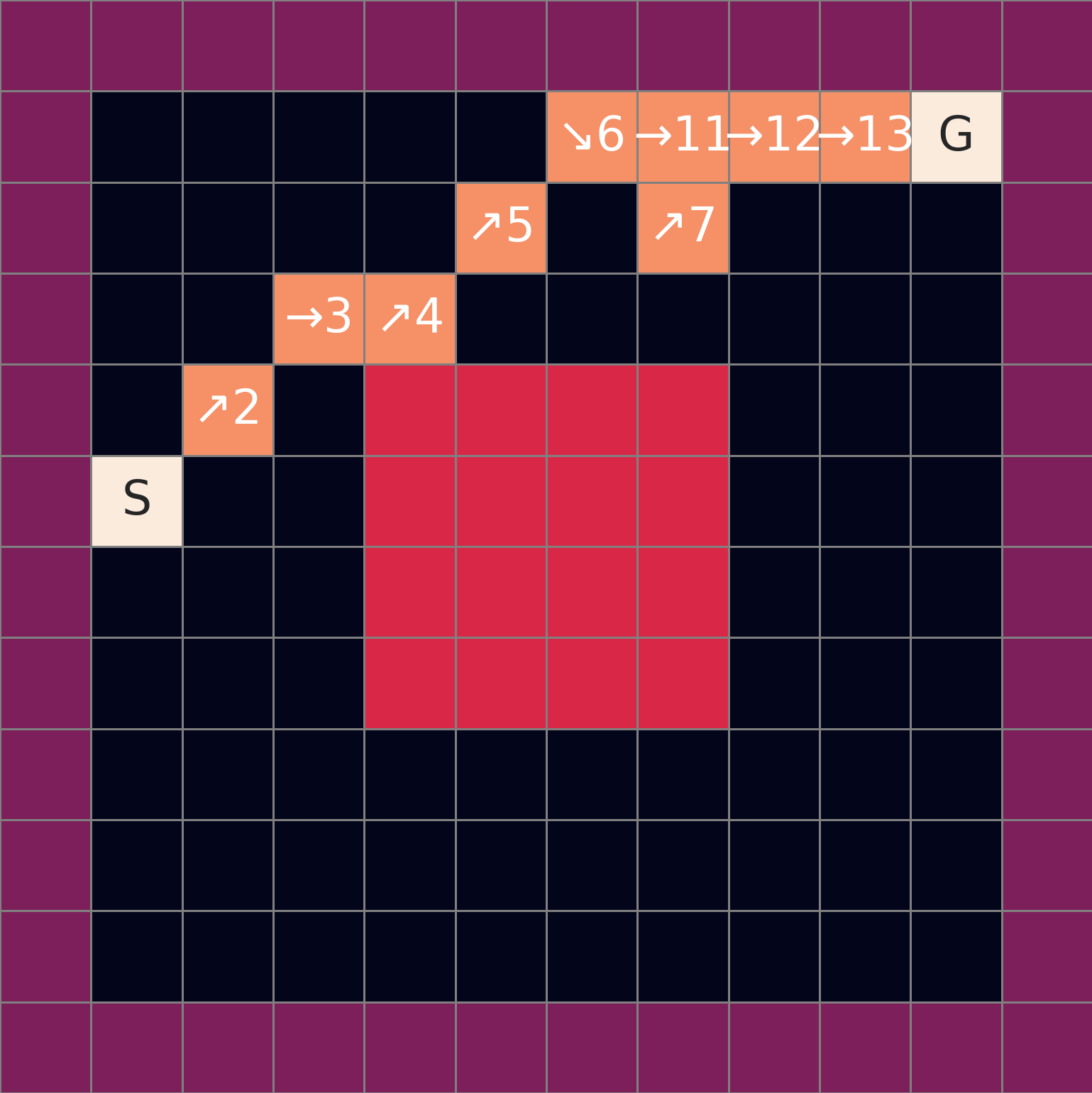}
\includegraphics[width=0.22\linewidth]{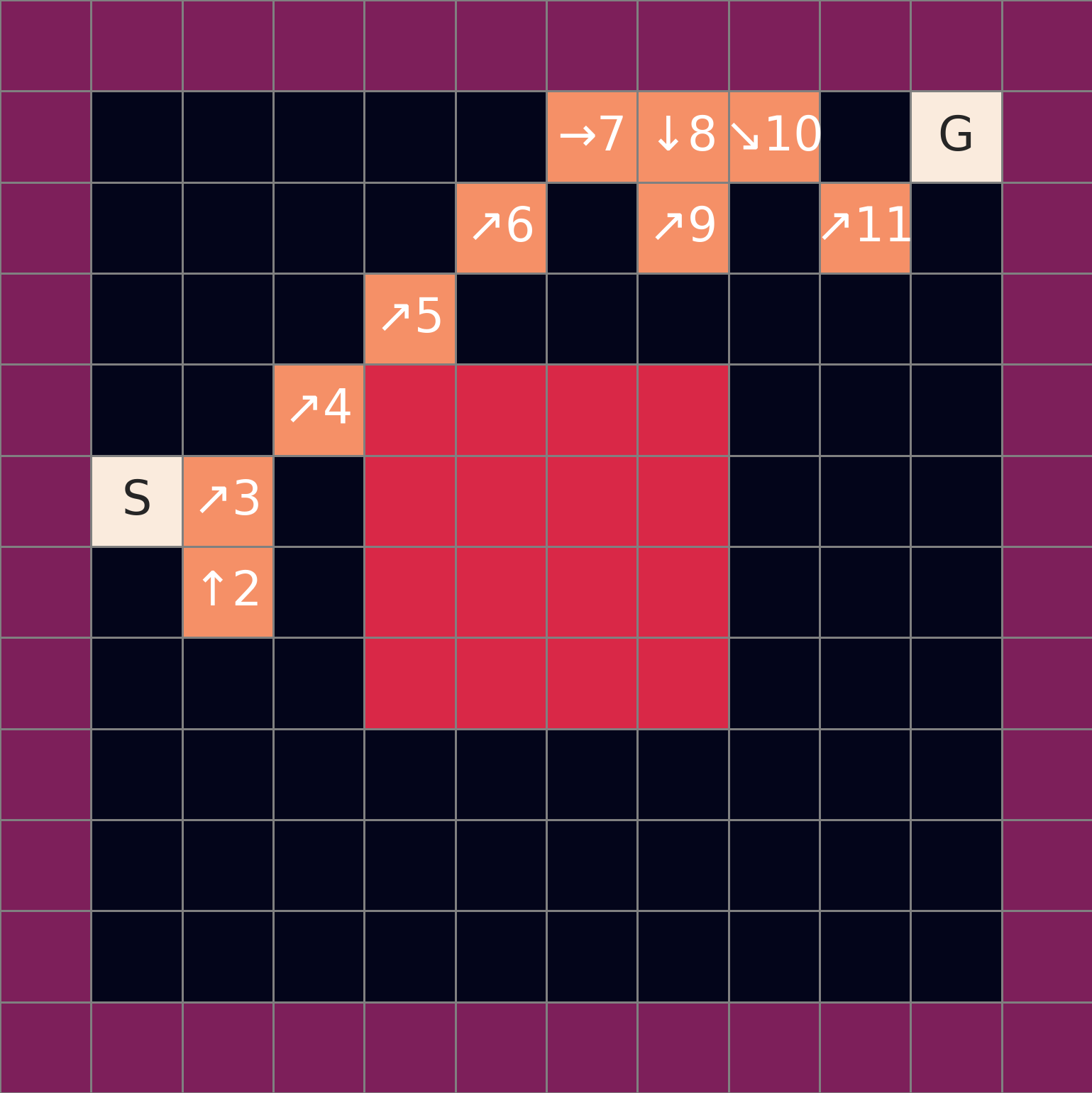}
\caption{Representative trajectories from a common initial state in
  the discrete puddle world (left to right: AC, VPAC, VPAC-RS,
  VPAC-BS).
  The risk-neutral AC policy traverses the frozen region (grey center
  block); all three variance-penalized policies route around it.}
\label{fig:trajectories}
\end{figure}

To provide population-level evidence of avoidance behaviour beyond
representative trajectories, Figure~\ref{fig:heatmaps} shows
state-visitation frequency heatmaps accumulated across 10 seeds and 98
trajectory rollouts per seed for each converged policy.
The AC policy visits the frozen zone on a substantial proportion of
trajectories, with clearly elevated counts inside the red boundary,
consistent with the absence of any variance penalty.
VPAC learns partial avoidance via a bottom routing strategy: visitation
concentrates along the lower perimeter of the grid, largely bypassing
the frozen zone, though at the cost of longer and more variable
trajectories.
VPAC-RS and VPAC-BS both achieve near-complete avoidance of the frozen
zone, with the interior uniformly pale across all seeds, and
visitation concentrated along the upper-right path toward the goal.
The heatmaps confirm that the variance-reduction behaviour reported in
Figures~\ref{fig:ql_rs}--\ref{fig:vpac_tabular} reflects a
population-level policy shift and is not an artefact of individual
trajectories.

\begin{figure}[t]
\centering
\includegraphics[width=\linewidth]{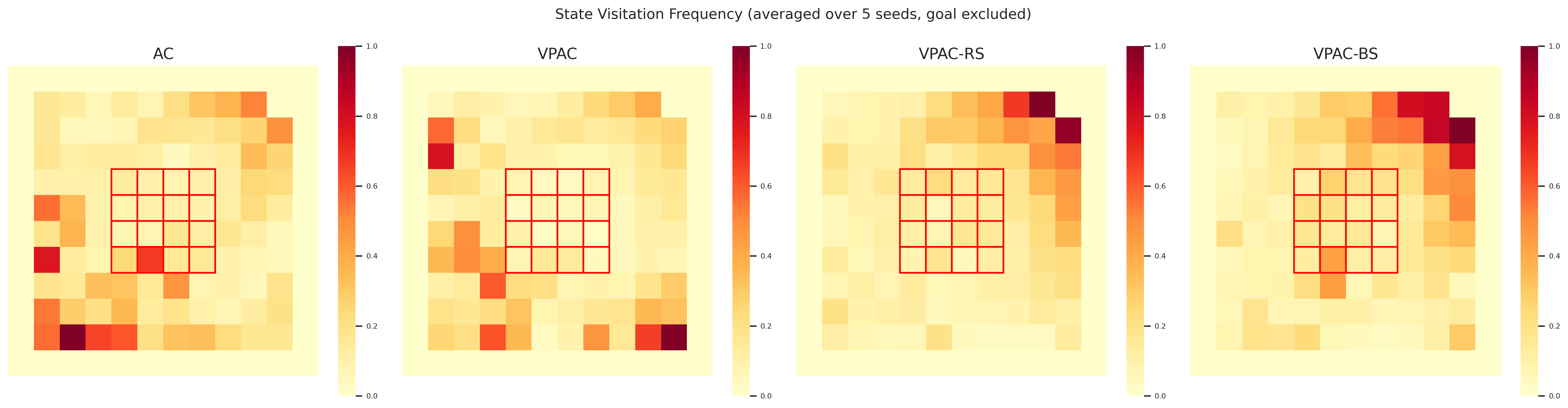}
\caption{State-visitation frequency heatmaps in the discrete puddle
  world, averaged over 10 seeds and 98 trajectory rollouts per seed
  per method (left to right: AC, VPAC, VPAC-RS, VPAC-BS).
  Darker shading indicates higher visitation frequency; the red
  rectangle marks the boundary of the frozen zone.
  The goal state is excluded from the colour normalisation.
  AC exhibits frequent frozen zone traversal.
  VPAC adopts a bottom routing strategy that largely avoids the frozen
  zone.
  VPAC-RS and VPAC-BS both achieve near-complete frozen zone avoidance,
  with visitation concentrated on an upper-right route to the goal.}
\label{fig:heatmaps}
\end{figure}

\subsection{Continuous-State Setting}
\label{subsec:continuous}

The evaluation extends to the continuous puddle world to assess whether
the variance-reduction behaviour transfers beyond the tabular setting.
The state space is the unit square $[0,1]^2$, the frozen region is a
circular puddle centered at $(0.5, 0.5)$ with radius $0.2$, and the
tile-coded feature representation introduces approximation error absent
in the tabular setting.

Figure~\ref{fig:cont_perf} shows mean return and return variance over
training for all four methods.
All methods converge to comparable expected returns (approximately
35--40), confirming that variance penalisation does not sacrifice task
performance.
VPAC-RS and VPAC-BS both achieve lower steady-state return variance
than the AC baseline, with approximately 40--50\% reduction relative to
AC and performance competitive with VPAC.
The variance reduction is less pronounced than in the tabular setting,
which is expected: the tile-coded approximation introduces background
noise in the value estimates that partially attenuates the variance
signal.

\begin{figure}[t]
\centering
\includegraphics[width=0.95\linewidth]{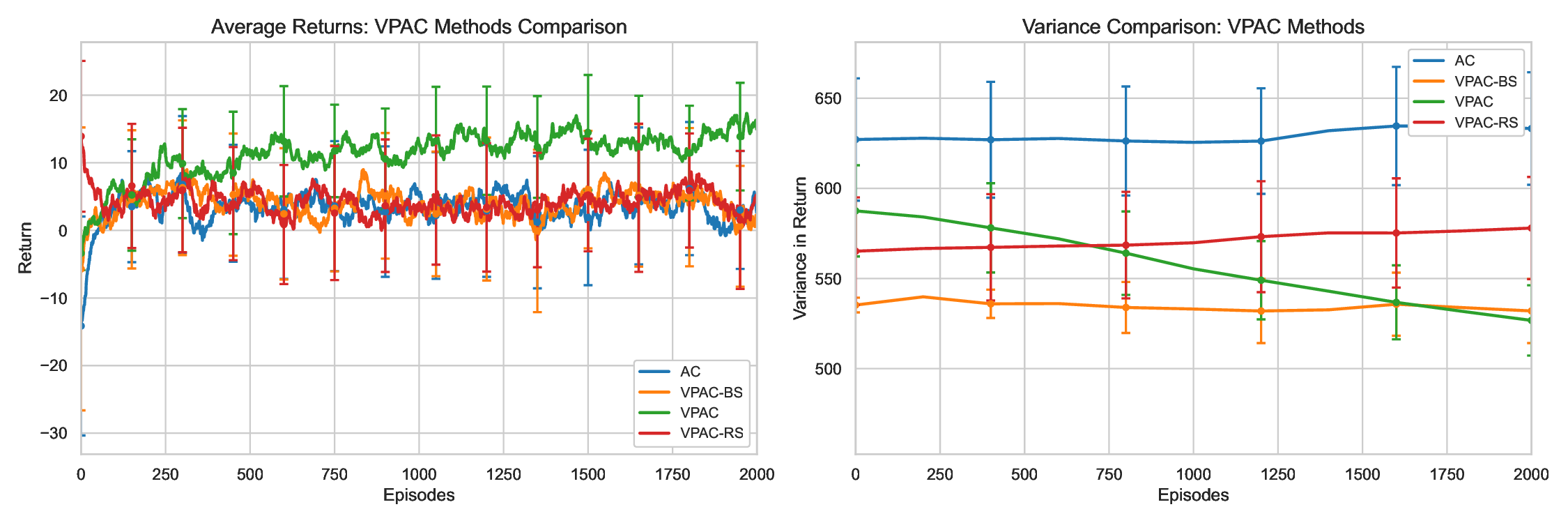}
\caption{Performance in the continuous-state puddle world.
  Left: mean episodic return. Right: return variance over training.
  VPAC-RS and VPAC-BS achieve 40--50\% variance reduction relative
  to the AC baseline while maintaining comparable mean return.
  Error bars show $\pm 1$ standard error of the mean across 10 seeds, plotted at sampled intervals for clarity.}
\label{fig:cont_perf}
\end{figure}

Figure~\ref{fig:cont_traj} shows representative trajectories in the
continuous state space.
The AC policy frequently passes through the puddle region, while
VPAC-RS and VPAC-BS both learn to route around it, reaching the goal
via a longer but lower-variance path.

\begin{figure}[t]
\centering
\includegraphics[width=0.95\linewidth]{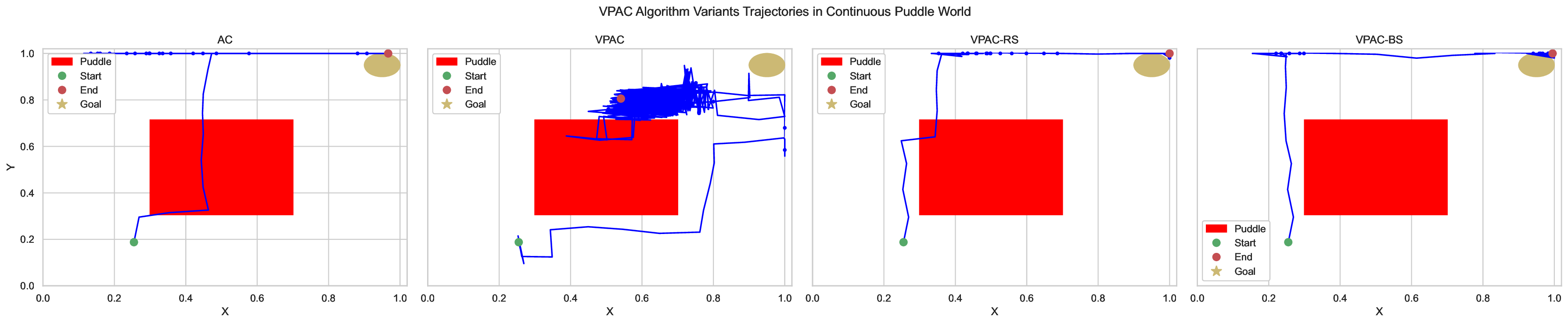}
\caption{Representative trajectories in the continuous puddle world.
  The shaded region indicates the high-variance puddle.
  The risk-neutral AC policy (left) passes through the puddle;
  VPAC-RS and VPAC-BS (right) route consistently around it.}
\label{fig:cont_traj}
\end{figure}

\subsection{Discussion}
\label{subsec:expt-discussion}

\subsubsection{Comparison of VPAC-RS and VPAC-BS}

Both nonparametric estimators achieve consistent variance reduction
across the discrete and continuous settings.
A structural difference emerges in environments with predominantly
aleatoric uncertainty versus epistemic uncertainty.
The RS estimator measures trajectory-level variability of the Q-iterate
around its running mean, which is directly tied to fluctuations in the
Bellman target caused by stochastic rewards.
It is therefore well matched to the puddle world setting, where the
dominant noise source is the stochastic frozen-zone reward.
The BS estimator measures cross-replicate disagreement among Q-table
predictions, which reflects the sensitivity of the value estimate to
random subsampling of the experience stream.
In settings where epistemic uncertainty is the primary source of outcome
variability (for example, sparse data or nonstationarity), BS may be
better suited.
This distinction is examined further in the HTS manufacturing case study
(Section~\ref{sec:casestudy}), where the aleatoric dominance of sensor
noise makes RS the more effective estimator.

\subsubsection{Computational Overhead}

Table~\ref{tab:arch} (Section~\ref{sec:framework}) summarises the
architectural comparison across methods.
In the tabular discrete setting, VPAC-RS adds effectively zero
computational overhead relative to vanilla AC: the RS recursion
requires three vector additions and one element-wise division per step.
VPAC-BS incurs the cost of maintaining and updating $K = 10$ additional
Q-tables, a constant-factor overhead of approximately $10\times$ the
per-step update cost of AC but with no additional gradient-updated
parameter or convergence monitoring.
VPAC, by contrast, requires a full third learned function approximator
with its own step-size schedule.
In the continuous tile-coded setting, the overhead structure is
identical: VPAC-RS adds no additional computation, and VPAC-BS adds
$K$ parallel tile-coded value function evaluations per step.

\subsubsection{Consistency with Convergence Theory}

The empirical results are consistent with the convergence guarantees
established in Section~\ref{sec:convergence}.
Both VPAC-RS and VPAC-BS achieve stable convergence across all
evaluated configurations using the same step-size schedules as AC,
consistent with the contraction property of the penalized Bellman
operator $K$ in Theorem~\ref{thm:qconv}.
The variance clipping mechanism in~\eqref{eq:clip} limits the influence
of early-training transients on the policy update, as required by the
bounded perturbation condition in Assumption~\ref{ass:bounded},
and both estimators remain well below $c_\mathrm{max}$ once the
Q-estimates stabilise.

\section{Case Study: Risk-Sensitive RL for HTS Manufacturing}
\label{sec:casestudy}

Industrial process control provides a natural application domain for
risk-sensitive RL because consistency of outcome is operationally
critical~\cite{nian2020review}.
Reinforcement learning has been applied to energy management in
buildings and grids~\cite{wei2017deep} and to semiconductor and
materials manufacturing~\cite{ma2023distributional}, with
risk-sensitive variants typically using CVaR or chance-constrained
formulations.
To our knowledge, no prior work has applied a nonparametric
variance-penalized RL method to a real industrial manufacturing
dataset of the scale considered here.

The experiments in Section~\ref{sec:experiments} establish variance
reduction in controlled synthetic environments where ground-truth
noise structure is known by design.
To evaluate practical applicability, we now present a case study in
high-temperature superconductor (HTS) wire manufacturing using real
industrial sensor data.
The HTS domain is selected for three reasons.
First, the process involves continuous adjustments to two coupled
control parameters, testing whether the framework scales beyond the
discrete $12{\times}12$ grid of Section~\ref{sec:experiments}.
Second, the dataset comprises 58,240 real sensor measurements across
970 minutes of production, with complex temporal dependencies and
non-stationary dynamics that provide a realistic challenge for
empirical variance estimation.
Third, reducing variability in the critical current $I_c$ directly
improves manufacturing yield and equipment reliability, providing a
concrete, interpretable downstream metric for evaluating
risk-sensitive learning.

\subsection{Dataset and Surrogate Environment}
\label{subsec:hts-dataset}

The dataset originates from a multi-zone HTS wire production line and
records both control parameters and process outputs at 10-second
intervals.
The primary quality metric is the critical current $I_c$~[A], the
maximum current a superconducting tape can carry without resistive loss.
High $I_c$ is commercially desirable; variability in $I_c$ across a
production run is equally damaging, as it reduces the proportion of
tape meeting specification and complicates downstream engineering.

The two actuatable control parameters are DHPS\_I and DHPS\_V, which
govern the deposition heater power supply current and voltage
respectively, with feasible operating ranges DHPS\_I $\in [23.14,
23.19]$~A and DHPS\_V $\in [47.44, 49.33]$~V.
A Ridge Regression surrogate model (regularisation $\lambda = 1.0$)
trained on the full dataset predicts $I_c$ from the current state with
$R^2 = 0.9964$ and RMSE $= 0.0071$, providing a reliable reward signal
for policy learning.
Feature importance analysis confirms that recent $I_c$ history and the
DHPS control parameters are the dominant predictors.

\subsection{MDP Formulation}
\label{subsec:hts-mdp}

We formulate the HTS process as an MDP with the following components.

\subsubsection{State Space}

The state $s_t \in \mathbb{R}^8$ encodes the current manufacturing
conditions:
\begin{equation}
\begin{aligned}
s_t = \bigl[\,
&\widetilde{\mathrm{DHPS\_I}}_t,\;
\widetilde{\mathrm{DHPS\_V}}_t,\;
I_{c,t-1}, \\
&I_{c,t-2},\;
I_{c,t-3},\;
\sin\!\bigl(\tfrac{2\pi t}{T}\bigr), \\
&\cos\!\bigl(\tfrac{2\pi t}{T}\bigr),\;
\tfrac{t}{L}
\,\bigr].
\end{aligned}
\end{equation}
where tildes denote min-max normalisation, $T$ is the dominant
production cycle period, and $L = 3{,}203$~cm is the line length.
The three lagged $I_c$ values capture the strong autoregressive dynamics
arising from thermal inertia in the deposition zone.
The two trigonometric features encode cyclic patterns in zone-temperature
controllers, and the linear trend $t/L$ accounts for spatial drift along
the production line.

\subsubsection{Action Space}

The agent selects incremental adjustments
$a_t = (\Delta\mathrm{DHPS\_I}_t,\; \Delta\mathrm{DHPS\_V}_t)
\in [-0.5, 0.5]^2$,
scaled by an action-gain parameter $\kappa > 0$ that controls the
physical magnitude of each adjustment. Primary results are reported at $\kappa = 1.5$, identified through a
sensitivity analysis over $\kappa \in \{0.75, 1.5, 2.0\}$.
At $\kappa = 0.75$, the DHPS adjustments are insufficient to produce
meaningful $I_c$ changes within a 200-step episode; at $\kappa = 2.0$,
boundary noise dominates the variance signal and variance reduction
degrades to 13\% at the best $\beta$, compared to 63\% at $\kappa = 1.5$.

\subsubsection{Reward and Noise Model}

At each step the agent receives
\begin{equation}
  r_t = \hat{I}_c(s_t, a_t) + \epsilon_t,
  \qquad
  \epsilon_t \sim \mathcal{N}\!\bigl(0,\,\sigma^2(s_t,a_t)\bigr),
\end{equation}
where $\hat{I}_c$ is the surrogate model prediction.
The state-dependent noise variance is
\begin{align}
\sigma^2(s_t,a_t)
  &= \sigma_0^2
    + \eta_b \cdot \max\!\bigl(0,\; \delta_b - d_{\min}(s_t)\bigr)
  \nonumber \\
  &\quad
    + \eta_a \cdot \max\!\bigl(0,\; \|a_t - a_{t-1}\| - \delta_a\bigr),
\end{align}
where $\sigma_0^2$ is the baseline noise variance, $d_{\min}(s_t)$ is
the normalised distance from $s_t$ to the nearest DHPS parameter bound,
and $\delta_b$, $\delta_a > 0$ are boundary and action-volatility
thresholds.
The parameter values used are $\sigma_0 = 1.0$, $\eta_b = 8.0$,
$\delta_b = 0.4$, and $\delta_a = 0.1$; their selection is described
below.
The boundary term activates when the operating point approaches within
a normalised distance $\delta_b$ of the DHPS parameter limits,
reflecting the physical observation that operation near parameter bounds
introduces thermal transients that corrupt deposition quality.
The action-volatility term activates when the magnitude of consecutive
control changes exceeds $\delta_a$, penalising rapid heater adjustments.
This noise structure is purely aleatoric, arising from inherent process
variability rather than from uncertainty about the model.
The distinction between aleatoric and epistemic uncertainty is examined
in Section~\ref{subsec:hts-comparison}.

\paragraph*{Threshold selection.}
The boundary threshold $\delta_b = 0.4$ corresponds to the outer 40\%
of the normalised DHPS operating range, identified from the dataset as
the region where $I_c$ variability increases markedly as the process
approaches its physical limits.
The action-volatility threshold $\delta_a = 0.1$ represents a
normalised step change of 10\% of the action range, below which
consecutive adjustments produce negligible thermal transients in the
surrogate model.
Both values were fixed prior to any policy training and held constant
across all experiments.
\subsubsection{Training Protocol}

All agents are trained for 500 episodes of up to 200 steps each, across
10 independent random seeds per (method, $\beta$) combination.
The 2D action space is discretised into $4^2 = 16$ bins per dimension,
and tile-coded state features are used (10 tilings, 5 bins, 1{,}024
features).
The variance penalty coefficient $\beta$ is tuned independently per
method over the grid $\{0.001, 0.01, 0.1, 1.0\}$.
The selection criterion is the largest episode-return standard-deviation
reduction subject to a mean-return drop of at most 5\% relative to the
$\beta = 0$ baseline.
Per-method $\beta$ tuning is necessary because $\beta$ multiplies
quantities with different units and scales across estimators, so a
common value would not represent equal risk aversion across methods,
as discussed in Section~\ref{subsec:setup}.

\subsection{Results}
\label{subsec:hts-results}

Table~\ref{tab:rs_summary} reports evaluation metrics for VPAC-RS
across all $\beta$ values at $\kappa = 1.5$.
Each row summarises 400 greedy rollouts (40 per seed, 10 seeds) of the
converged policy evaluated in the stochastic environment.
The primary metrics are the episode return standard deviation (Ret std),
which directly reflects what the variance penalty acts on during
training, and the steady-state $I_c$ standard deviation (SS~IC~std),
measured over the last 30\% of episode steps (steps 140--200) across
rollouts.
CVaR-10\% reports the mean of the worst 10\% of episode returns;
higher values indicate better tail-risk performance.

\begin{table}[!t]
\centering
\caption{VPAC-RS evaluation metrics at $\kappa = 1.5$, averaged over
  10 seeds $\times$ 40 rollouts.
  SS~IC~std: steady-state $I_c$ standard deviation (steps 140--200).
  CVaR-10\%: mean of worst 10\% of episode returns (higher is better).
  Best non-baseline entry per column in \textbf{bold}.}
\label{tab:rs_summary}
\begin{tabular}{cccccc}
\toprule
$\beta$ & Mean $I_c$ [A] & Ret mean & Ret std & SS IC std [A] & CVaR-10\% \\
\midrule
0.000 (baseline) & 339.7 & 58.84 & 6.78 & 55.4 & 50.77 \\
0.001            & 332.7 & 57.95 & 5.11 & 28.2 & 50.77 \\
\textbf{0.010}   & 318.3 & 56.59 & \textbf{2.52} & \textbf{14.1} & \textbf{52.55} \\
0.100            & 322.6 & 57.03 & 2.52 & ~9.6 & 52.20 \\
1.000            & 329.3 & 58.04 & 4.16 & 21.3 & 53.43 \\
\bottomrule
\end{tabular}
\end{table}

\subsubsection{VPAC-RS: Variance Reduction and Performance Trade-off}

At $\beta = 0.01$, VPAC-RS achieves a steady-state $I_c$ standard
deviation of 14.1~A against 55.4~A for the risk-neutral baseline, a
74\% reduction.
Episode return standard deviation falls from 6.78 to 2.52, a 63\%
reduction, confirming that the variance signal propagated through
VPAC-RS during training translates to lower variability in converged
policy behaviour.
CVaR-10\% improves from 50.77 to 52.55, indicating that VPAC-RS also
lifts the lower tail of the return distribution, reducing the incidence
of severely underperforming episodes.

The risk reduction comes at a cost in mean $I_c$: $\beta = 0.01$
yields 318.3~A versus 339.7~A for the baseline, a drop of 21.4~A
(6.3\%).
VPAC-RS has learned to operate in a lower-volatility region of the
DHPS parameter space, sacrificing some mean current to avoid the
high-variance boundary regions penalised by the noise model.
The Pareto analysis in Section~\ref{subsubsec:pareto} supports
practitioners in selecting an operating point that reflects their
specific yield and consistency requirements.

\subsubsection{Beta Sensitivity}

The relationship between $\beta$ and variance reduction is non-monotonic.
The sharpest improvement occurs between $\beta = 0.001$ and
$\beta = 0.01$: SS~IC~std falls from 28.2~A to 14.1~A.
At $\beta = 0.1$, SS~IC~std continues to fall (9.6~A) but Ret std
plateaus at 2.52, while mean $I_c$ partially recovers to 322.6~A.
At $\beta = 1.0$, Ret std rises to 4.16 and SS~IC~std worsens to
21.3~A: the penalty dominates the performance signal, causing the actor
to converge toward a near-constant action policy that achieves low
per-step variance but poor long-horizon consistency.
This behaviour is consistent with the over-conservatism effect
identified in mean-variance RL
literature~\cite{garcia2015comprehensive, prashanth2016variance} and
motivates per-experiment $\beta$ tuning.

\subsubsection{Trajectory Analysis and Pareto Frontier}
\label{subsubsec:pareto}

Figure~\ref{fig:hts_ic_traj} visualises the learned policy behaviour.
Panel~(A) shows that $\beta = 0.01$ achieves a substantially tighter
IQR band than the baseline throughout the episode, with the gap
widening in the steady-state window (steps 140--200, shaded grey).
Both policies start near the dataset maximum (590~A, the physical upper
bound of the surrogate model; this is not a tracking setpoint) because
episodes are initialised from high-$I_c$ states in the dataset, and
the declining trajectory reflects surrogate dynamics under the feasible
DHPS action range.
Panel~(B) confirms that the cross-rollout $I_c$ standard deviation at
$\beta = 0.01$ collapses to near zero in the steady-state window
(annotated: 14.1~A at step~185), while the baseline remains at 55.4~A.
Panel~(C) shows the episode return distributions: the $\beta = 0.01$
boxplot is visibly tighter, with a smaller IQR and fewer outliers,
while $\beta = 0$ spans a wide range driven by high-noise episodes near
the DHPS boundaries.

\begin{figure}[htbp]
  \centering
  \includegraphics[width=\linewidth]{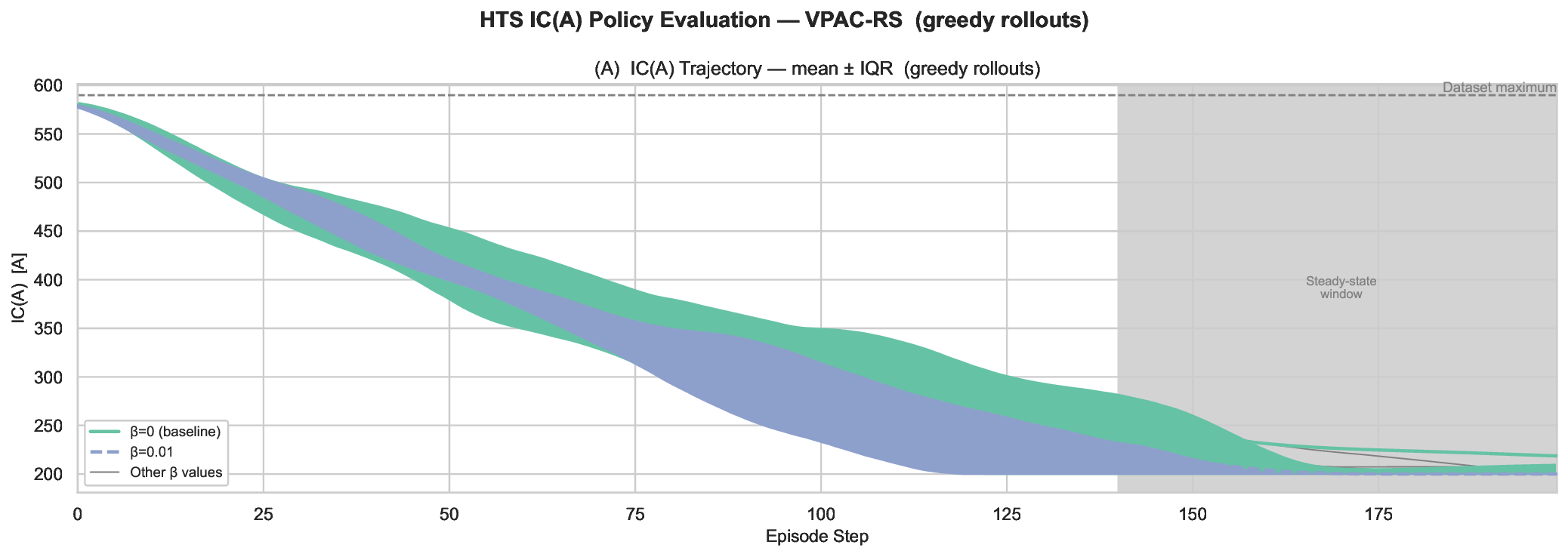}
  \caption{Mean $\pm$ IQR of the $I_c(A)$ trajectory across 400 greedy
    rollouts (10 seeds $\times$ 40 rollouts) under the VPAC-RS policy
    at $\kappa = 1.5$.
    The solid green line is the risk-neutral baseline ($\beta = 0$);
    the dashed blue line is the best risk-adjusted policy ($\beta = 0.01$);
    thin grey lines show the remaining $\beta$ values.
    The dashed horizontal line marks the dataset maximum (590~A), which
    is the physical upper bound of the surrogate model and not a
    tracking setpoint.
    The shaded region indicates the steady-state evaluation window
    (steps 140--200), over which SS~IC~std is computed.
    The $\beta = 0.01$ policy exhibits a markedly tighter IQR band
    throughout, with the separation widening in the steady-state window
    (SS~IC~std: 14.1~A versus 55.4~A for the baseline).}
  \label{fig:hts_ic_traj}
\end{figure}

Figure~\ref{fig:hts_pareto} plots mean $I_c$ against Ret std for each
$\beta$ value.
The Pareto frontier connects $\beta = 0$ (highest mean $I_c$, highest
Ret std) and $\beta = 0.01$ (lowest Ret std), and $\beta = 1.0$ falls
strictly inside the frontier on both axes, confirming that
over-penalisation is counter-productive with respect to both objectives
simultaneously.

\begin{figure}[htbp]
  \centering
  \includegraphics[width=0.9\linewidth]{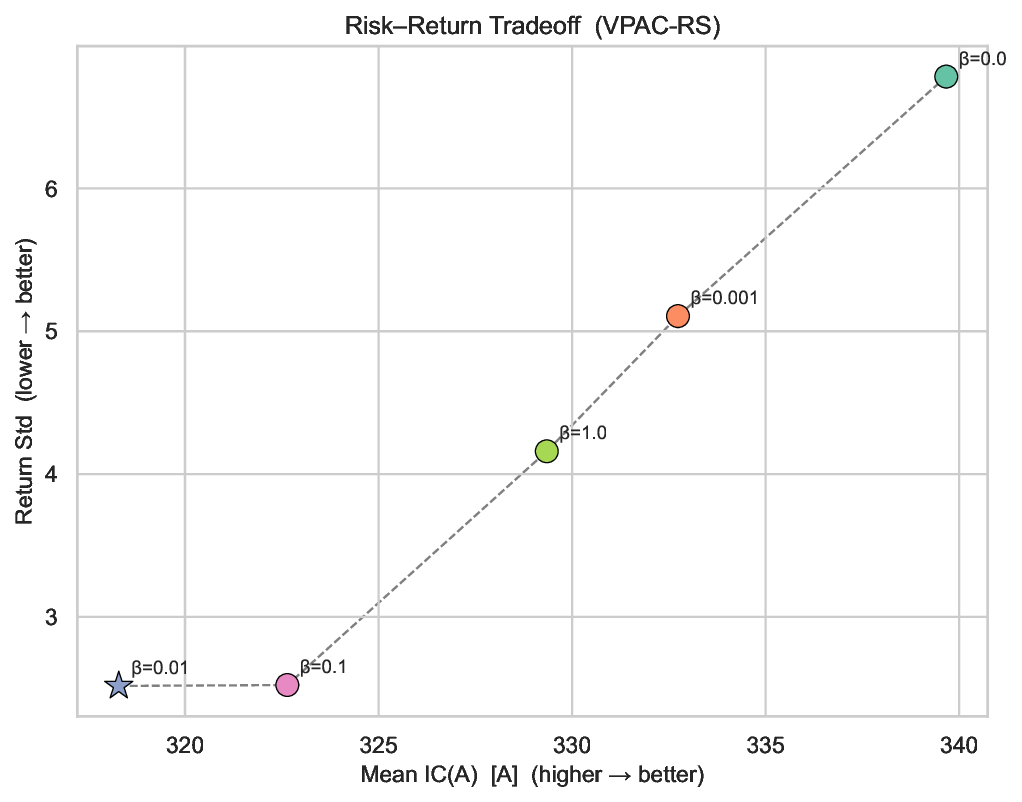}
  \caption{Risk-return Pareto scatter for VPAC-RS at $\kappa = 1.5$.
    Each marker represents one $\beta$ value evaluated over 400 greedy
    rollouts; axes report mean $I_c(A)$ across all rollout steps and
    episode return standard deviation (Ret std).
    The starred marker identifies the best risk-adjusted policy
    ($\beta = 0.01$, Ret std $= 2.52$, 63\% below the baseline).
    The dashed line connects the Pareto-non-dominated operating points.
    The $\beta = 1.0$ point falls inside the frontier on both axes,
    confirming that over-penalisation degrades both objectives
    simultaneously.}
  \label{fig:hts_pareto}
\end{figure}

\subsubsection{Comparison with VPAC-BS}
\label{subsec:hts-comparison}

We evaluate VPAC-BS on the same HTS environment to determine whether
the bootstrapping-based estimator offers complementary variance
reduction.
The full VPAC-BS sweep across all $\beta$ values is provided in
Table~\ref{tab:bs_full} (Appendix~\ref{app:hts-bs}).
At its best-$\beta$ configuration ($\beta = 0.1$), VPAC-BS achieves a
44\% reduction in Ret std relative to its risk-neutral baseline,
requiring a tenfold larger penalty coefficient than VPAC-RS to reach
comparable risk reduction.
At $\beta = 0.01$, VPAC-BS Ret std increases above the baseline
(5.55 versus 4.48), a result not observed for VPAC-RS at any tested
$\beta$, indicating that the bootstrapping estimator provides an
unreliable variance signal at small penalty values in this environment.

The performance difference is explained by the type of uncertainty
each estimator targets.
VPAC-RS measures trajectory-level variability of the Q-iterate,
directly tied to fluctuations in cumulative reward caused by stochastic
rewards.
VPAC-BS measures cross-replicate disagreement among Q-table
predictions, targeting epistemic uncertainty about the Q-function
arising from finite training data.
In the HTS environment the surrogate model is highly accurate
($R^2 = 0.9964$), so the dominant source of outcome variability is
aleatoric: sensor noise, thermal transients, and boundary-proximity
effects encoded in the noise model.
The VPAC-BS epistemic signal is correspondingly weak relative to this
aleatoric noise, and the estimator cannot reliably guide the actor
toward lower-variance behaviour at moderate $\beta$ values.
VPAC-RS is therefore the recommended estimator for environments in
which a reliable predictive model is available and aleatoric noise
dominates.
Conversely, VPAC-BS is better suited to settings with sparse data or
substantial model uncertainty, consistent with the competitive VPAC-BS
results in the puddle world experiments
(Section~\ref{sec:experiments}).
Section~\ref{sec:conclusion} discusses this estimator-selection
principle in the context of broader deployment guidance.

\subsection{Discussion}
\label{subsec:hts-discussion}

The HTS case study establishes three findings that go beyond the
puddle world experiments.

The first finding is that VPAC-RS scales to real industrial data without
modification.
The tile-coded function approximation, two-timescale actor-critic
updates, and rolling-sample variance estimator all operate directly on
real sensor features, without domain-specific reward shaping or noise
model knowledge.
The 74\% SS~IC~std reduction at $\beta = 0.01$ was achieved using the
same algorithm that reduced puddle world return variance by 83\% in the
tabular setting.

The second finding concerns the interpretation of the risk-return
trade-off.
The risk-neutral baseline ($\beta = 0$) achieves a higher mean $I_c$
(339.7~A) than the best VPAC-RS policy (318.3~A), confirming that
the variance reduction comes at a genuine cost in mean performance.
This is expected behaviour for mean-variance optimisation and should
not be interpreted as a failure of the algorithm.
Rather, it reflects the fundamental trade-off that the Pareto plot
in Figure~\ref{fig:hts_pareto} makes explicit: practitioners select
an operating point on the frontier based on their specific
yield/consistency requirements, and the appropriate $\beta$ will vary
across production lines and quality targets.

The third finding provides empirical support for a general principle
about estimator selection.
The VPAC-RS versus VPAC-BS comparison on HTS, taken together with the
competitive performance of both estimators on the puddle world, suggests
that the dominant uncertainty type in the environment determines which
estimator is more effective.
Aleatoric-dominated environments favour RS; epistemic-dominated
environments favour BS.
This principle is stated more precisely as deployment guidance in
Section~\ref{sec:conclusion}.

\section{Conclusion}
\label{sec:conclusion}

This paper introduced a nonparametric variance-penalized framework
for risk-sensitive reinforcement learning that eliminates the variance
critic of prior dual-critic methods by substituting statistically
grounded online estimators based on random scaling (VPAC-RS) and online
bootstrapping (VPAC-BS).
Almost-sure convergence of both the Q-learning and actor-critic variants
was established via the ODE method, requiring only that the variance
penalty remain bounded, a strictly weaker condition than the consistency
assumed by prior work.
Empirically, VPAC-RS achieves up to 83\% reduction in steady-state
return variance in the puddle world benchmark, matching or exceeding the
dual-critic VPAC baseline while adding zero learned parameters.
In the HTS manufacturing case study, the same algorithm reduces
steady-state $I_c$ variability by 74\% and episode return standard
deviation by 63\%, with the risk-return trade-off characterised
explicitly by a Pareto analysis across penalty coefficients.

A practical finding across environments is that estimator choice should
reflect the dominant uncertainty type: VPAC-RS is better matched to
environments with predominantly aleatoric noise, while VPAC-BS is
better suited to settings where epistemic uncertainty in the value
function is the primary source of policy instability.

Several directions remain open. Deriving finite-sample rates requires combining SA concentration
bounds~\cite{xu2020improving, wu2020finite} with finite-sample
approximation bounds for the present framework, a combination that existing analyses do not address and that
requires tools from the functional CLT
literature~\cite{xie2022statistical, li2023online}.
Extending the framework to neural network function approximation,
where bootstrap ensembles and running parameter-space statistics
provide natural analogues of the tabular estimators, is a practical
direction with direct implications for deep RL deployment in
safety-critical settings.


\bibliographystyle{IEEEtran}
\bibliography{refs_v2_1}

@article{garcia2015comprehensive,
  title={A Comprehensive Survey on Safe Reinforcement Learning},
  author={Garc{\'i}a, Javier and Fern{\'a}ndez, Fernando},
  journal={Journal of Machine Learning Research},
  volume={16},
  number={1},
  pages={1437--1480},
  year={2015}
}

@inproceedings{tamar2012policy,
  title={Policy Gradient with Variance Related Risk Criteria},
  author={Tamar, Aviv and Di Castro, Dotan and Mannor, Shie},
  booktitle={International Conference on Machine Learning},
  pages={935--942},
  year={2012}
}

@article{zhong2023variance,
  title={Risk-Sensitive Deep RL: Variance-Constrained Actor-Critic Provably Finds Globally Optimal Policy},
  author={Zhong, Han and Wu, Yuejie Chi and Wang, Zhaoran},
  journal={arXiv preprint arXiv:2012.14098},
  year={2023}
}

@article{chen2020statistical,
  title={Statistical inference for model parameters in stochastic gradient descent},
  author={Chen, Xi and Lee, Jason D and Tong, Xin T and Zhang, Yichen},
  year={2020}
}

@book{borkar2008stochastic,
  title={Stochastic Approximation: A Dynamical Systems Viewpoint},
  author={Borkar, Vivek S},
  year={2008},
  publisher={Springer}
}

@inproceedings{zhu2021finite,
  title={On constructing confidence region for model parameters in stochastic gradient descent via batch means},
  author={Zhu, Yuantao and Dong, Jingyi},
  booktitle={2021 Winter Simulation Conference (WSC)},
  pages={1--12},
  year={2021},
  organization={IEEE}
}

@article{Shen_2014,
   title={Risk-Sensitive Reinforcement Learning},
   volume={26},
   ISSN={1530-888X},
   url={http://dx.doi.org/10.1162/NECO_a_00600},
   DOI={10.1162/neco_a_00600},
   number={7},
   journal={Neural Computation},
   publisher={MIT Press - Journals},
   author={Shen, Yun and Tobia, Michael J. and Sommer, Tobias and Obermayer, Klaus},
   year={2014},
   month=jul, pages={1298–1328} }

@article{ramprasad2023online,
  title={Online bootstrap inference for policy evaluation in reinforcement learning},
  author={Ramprasad, Pratik and Li, Yuantong and Yang, Zhuoran and Wang, Zhaoran and Sun, Will Wei and Cheng, Guang},
  journal={Journal of the American Statistical Association},
  volume={118},
  number={544},
  pages={2901--2914},
  year={2023},
  publisher={Taylor \& Francis}
}

@article{xie2022statistical,
  title={A statistical online inference approach in averaged stochastic approximation},
  author={Xie, Chuhan and Zhang, Zhihua},
  journal={Advances in Neural Information Processing Systems},
  volume={35},
  pages={8998--9009},
  year={2022}
}

@article{chow2018risk,
  title={Risk-constrained reinforcement learning with percentile risk criteria},
  author={Chow, Yinlam and Ghavamzadeh, Mohammad and Janson, Lucas and Pavone, Marco},
  journal={Journal of Machine Learning Research},
  volume={18},
  number={167},
  pages={1--51},
  year={2018}
}

@article{sobel1982variance,
  title={The variance of discounted Markov decision processes},
  author={Sobel, Matthew J},
  journal={Journal of Applied Probability},
  volume={19},
  number={4},
  pages={794--802},
  year={1982},
  publisher={Cambridge University Press}
}

@article{white1988mean,
author = {White, D. J.},
title = {Mean, variance, and probabilistic criteria in finite Markov decision processes: a review},
year = {1988},
issue_date = {January 1988},
publisher = {Plenum Press},
address = {USA},
volume = {56},
number = {1},
issn = {0022-3239},
url = {https://doi.org/10.1007/BF00938524},
doi = {10.1007/BF00938524},
journal = {J. Optim. Theory Appl.},
month = jan,
pages = {1–29},
numpages = {29}
}

@article{tamar2013variance,
  title={Variance adjusted actor critic algorithms},
  author={Tamar, Aviv and Mannor, Shie},
  journal={arXiv preprint arXiv:1310.3697},
  year={2013}
}

@article{la2013actor,
  title={Actor-critic algorithms for risk-sensitive MDPs},
  author={La, Prashanth and Ghavamzadeh, Mohammad},
  journal={Advances in neural information processing systems},
  volume={26},
  year={2013}
}

@inproceedings{tamar2012variance,
author = {Tamar, Aviv and Di Castro, Dotan and Mannor, Shie},
title = {Policy gradients with variance related risk criteria},
year = {2012},
isbn = {9781450312851},
publisher = {Omnipress},
address = {Madison, WI, USA},
booktitle = {Proceedings of the 29th International Coference on International Conference on Machine Learning},
pages = {1651–1658},
numpages = {8},
location = {Edinburgh, Scotland},
series = {ICML'12}
}

@article{prashanth2016variance,
  title={Variance-constrained actor-critic algorithms for discounted and average reward MDPs},
  author={Prashanth, LA and Ghavamzadeh, Mohammad},
  journal={Machine Learning},
  volume={105},
  pages={367--417},
  year={2016},
  publisher={Springer}
}

@article{li2023online,
  title={Online statistical inference for nonlinear stochastic approximation with markovian data},
  author={Li, Xiang and Liang, Jiadong and Zhang, Zhihua},
  journal={arXiv preprint arXiv:2302.07690},
  year={2023}
}

@inproceedings{jain2021variance,
  title={Variance penalized on-policy and off-policy actor-critic},
  author={Jain, Arushi and Patil, Gandharv and Jain, Ayush and Khetarpal, Khimya and Precup, Doina},
  booktitle={Proceedings of the AAAI Conference on Artificial Intelligence},
  volume={35},
  number={9},
  pages={7899--7907},
  year={2021}
}

@article{mnih2015human,
  title={Human-level control through deep reinforcement learning},
  author={Mnih, Volodymyr and Kavukcuoglu, Koray and Silver, David and Rusu, Andrei A and Veness, Joel and Bellemare, Marc G and Graves, Alex and Riedmiller, Martin and Fidjeland, Andreas K and Ostrovski, Georg and others},
  journal={Nature},
  volume={518},
  number={7540},
  pages={529--533},
  year={2015},
  publisher={Nature Publishing Group},
  doi={10.1038/nature14236}
}

@article{silver2017mastering,
  title={Mastering the game of {Go} without human knowledge},
  author={Silver, David and Schrittwieser, Julian and Simonyan, Karen and Antonoglou, Ioannis and Huang, Aja and Guez, Arthur and Hubert, Thomas and Baker, Lucas and Lai, Matthew and Bolton, Adrian and others},
  journal={Nature},
  volume={550},
  number={7676},
  pages={354--359},
  year={2017},
  publisher={Nature Publishing Group},
  doi={10.1038/nature24270}
}

@article{levine2016end,
  title={End-to-end training of deep visuomotor policies},
  author={Levine, Sergey and Finn, Chelsea and Darrell, Trevor and Abbeel, Pieter},
  journal={Journal of Machine Learning Research},
  volume={17},
  number={39},
  pages={1--40},
  year={2016}
}

@inproceedings{kuderer2015learning,
  title={Learning driving styles for autonomous vehicles from demonstration},
  author={Kuderer, Markus and Gulati, Shilpa and Burgard, Wolfram},
  booktitle={IEEE International Conference on Robotics and Automation (ICRA)},
  pages={2641--2646},
  year={2015},
  organization={IEEE},
  doi={10.1109/ICRA.2015.7139555}
}

@article{komorowski2018artificial,
  title={The artificial intelligence clinician learns optimal treatment strategies for sepsis in intensive care},
  author={Komorowski, Matthieu and Celi, Leo A and Badawi, Omar and Gordon, Anthony C and Faisal, A Aldo},
  journal={Nature Medicine},
  volume={24},
  number={11},
  pages={1716--1720},
  year={2018},
  publisher={Nature Publishing Group},
  doi={10.1038/s41591-018-0213-5}
}

@article{markowitz1952portfolio,
  title={Portfolio selection},
  author={Markowitz, Harry},
  journal={The Journal of Finance},
  volume={7},
  number={1},
  pages={77--91},
  year={1952},
  publisher={Wiley}
}

@article{howard1972risk,
  title={Risk-sensitive {Markov} decision processes},
  author={Howard, Ronald A and Matheson, James E},
  journal={Management Science},
  volume={18},
  number={7},
  pages={356--369},
  year={1972},
  publisher={INFORMS}
}

@article{borkar2002q,
  title={{Q-Learning} for risk-sensitive control},
  author={Borkar, Vivek S},
  journal={Mathematics of Operations Research},
  volume={27},
  number={2},
  pages={294--311},
  year={2002},
  publisher={INFORMS},
  doi={10.1287/moor.27.2.294.324}
}

@inproceedings{borkar2010learning,
  title={Learning Algorithms for Risk-Sensitive Control},
  author={Vivek S. Borkar},
  year={2010},
  url={https://api.semanticscholar.org/CorpusID:5835100}
}

@article{al-dabooni2019boundedness,
  title={The boundedness conditions for model-free {HDP}($\lambda$)},
  author={Al-Dabooni, Seaar and Wunsch, Donald},
  journal={IEEE Transactions on Neural Networks and Learning Systems},
  volume={30},
  number={7},
  pages={1928--1942},
  year={2019},
  publisher={IEEE},
  doi={10.1109/TNNLS.2018.2875870}
}

@inproceedings{wu2020finite,
  title={A finite-time analysis of two time-scale actor-critic methods},
  author={Wu, Yue and Zhang, Weitong and Xu, Pan and Gu, Quanquan},
  booktitle={Advances in Neural Information Processing Systems (NeurIPS)},
  volume={33},
  pages={17617--17628},
  year={2020}
}

@inproceedings{xu2020improving,
  title={Improving sample complexity bounds for (natural) actor-critic algorithms},
  author={Xu, Tengyu and Wang, Zhe and Liang, Yingbin},
  booktitle={Advances in Neural Information Processing Systems (NeurIPS)},
  volume={33},
  pages={4358--4369},
  year={2020}
}

@inproceedings{sherstan2018comparing,
  title={Comparing direct and indirect temporal-difference methods for estimating the variance of the return},
  author={Sherstan, Craig and Ashley, Dylan R and Bennett, Brendan and Young, Kenny and White, Adam and White, Martha and Sutton, Richard S},
  booktitle={Conference on Uncertainty in Artificial Intelligence (UAI)},
  year={2018}
}

@inproceedings{bisi2020risk,
  title={Risk-averse trust region optimization for reward-volatility reduction},
  author={Bisi, Lorenzo and Sabbioni, Luca and Vittori, Edoardo and Papini, Matteo and Restelli, Marcello},
  booktitle={Proceedings of the Twenty-Ninth International Joint Conference on Artificial Intelligence (IJCAI)},
  pages={4583--4589},
  year={2020},
  doi={10.24963/ijcai.2020/632}
}

@inproceedings{wei2017deep,
  title={Deep reinforcement learning for building {HVAC} control},
  author={Wei, Tianshu and Wang, Yanzhi and Zhu, Qi},
  booktitle={Proceedings of the 54th Annual Design Automation Conference (DAC)},
  pages={1--6},
  year={2017},
  doi={10.1145/3061639.3062224}
}

@article{nian2020review,
  title={A review on reinforcement learning: Introduction and applications in industrial process control},
  author={Nian, Rui and Liu, Jinfeng and Huang, Biao},
  journal={Computers \& Chemical Engineering},
  volume={139},
  pages={106886},
  year={2020},
  publisher={Elsevier},
  doi={10.1016/j.compchemeng.2020.106886}
}

@inproceedings{tamar2015optimizing,
  title={Optimizing the {CVaR} via sampling},
  author={Tamar, Aviv and Glassner, Yonatan and Mannor, Shie},
  booktitle={Proceedings of the AAAI Conference on Artificial Intelligence},
  volume={29},
  number={1},
  year={2015},
  doi={10.1609/aaai.v29i1.9561}
}

@book{liu2017adaptive,
  title={Adaptive Dynamic Programming with Applications in Optimal Control},
  author={Liu, Derong and Wei, Qinglai and Wang, Ding and Yang, Xiong and Li, Hongliang},
  series={Advances in Industrial Control},
  year={2017},
  publisher={Springer},
  address={Cham, Switzerland},
  doi={10.1007/978-3-319-50815-3}
}

@article{ma2023distributional,
  title={Distributional reinforcement learning for run-to-run control in semiconductor manufacturing processes},
  author={Zhu, Ma and Pan, Tianhong},
  journal={Neural Computing and Applications},
  volume={35},
  number={26},
  pages={19337--19350},
  year={2023},
  publisher={Springer},
  doi={10.1007/s00521-023-08760-1}
}

@article{bhatnagar2009natural,
  title={Natural actor-critic algorithms},
  author={Bhatnagar, Shalabh and Sutton, Richard S and Ghavamzadeh, Mohammad and Lee, Mark},
  journal={Automatica},
  volume={45},
  number={11},
  pages={2471--2482},
  year={2009},
  publisher={Elsevier},
  doi={10.1016/j.automatica.2009.07.008}
}

@article{konda2003onactor,
  title={On actor-critic algorithms},
  author={Konda, Vijay R and Tsitsiklis, John N},
  journal={SIAM Journal on Control and Optimization},
  volume={42},
  number={4},
  pages={1143--1166},
  year={2003},
  publisher={SIAM}
}

@inproceedings{bellemare2017distributional,
  title={A distributional perspective on reinforcement learning},
  author={Bellemare, Marc G and Dabney, Will and Munos, R{\'e}mi},
  booktitle={International Conference on Machine Learning (ICML)},
  pages={449--458},
  year={2017},
  organization={PMLR}
}

@inproceedings{dabney2018distributional,
  title={Distributional reinforcement learning with quantile regression},
  author={Dabney, Will and Rowland, Mark and Bellemare, Marc G and Munos, R{\'e}mi},
  booktitle={AAAI Conference on Artificial Intelligence},
  volume={32},
  number={1},
  year={2018}
}

@inproceedings{dabney2018implicit,
  title={Implicit quantile networks for distributional reinforcement learning},
  author={Dabney, Will and Ostrovski, Georg and Silver, David and Munos, R{\'e}mi},
  booktitle={International Conference on Machine Learning (ICML)},
  pages={1096--1105},
  year={2018},
  organization={PMLR}
}

@article{sangadi2024finite,
  title={A finite-sample analysis of an actor-critic algorithm for mean-variance optimization in a discounted {MDP}},
  author={Sangadi, Tejaram and Prashanth, L A and Jagannathan, Krishna},
  journal={arXiv preprint arXiv:2406.07892},
  year={2024}
}

@article{jaakkola1994convergence,
  title={Convergence of stochastic iterative dynamic programming algorithms},
  author={Jaakkola, Tommi and Jordan, Michael I and Singh, Satinder P},
  journal={Neural Computation},
  volume={6},
  number={6},
  pages={1185--1201},
  year={1994}
}

@article{tsitsiklis1994asynchronous,
  title={Asynchronous stochastic approximation and Q-learning},
  author={Tsitsiklis, John N},
  journal={Machine Learning},
  volume={16},
  number={3},
  pages={185--202},
  year={1994}
}

@article{tsitsiklis1997analysis,
  title={An analysis of temporal-difference learning with function approximation},
  author={Tsitsiklis, John N. and Van Roy, Benjamin},
  journal={IEEE Transactions on Automatic Control},
  volume={42},
  number={5},
  pages={674--690},
  year={1997},
  publisher={IEEE}
}

@article{panda2025asymptotic,
  author  = {Panda, Saunak Kumar and Liu, Ruiqi and Xiang, Yisha},
  title   = {Asymptotic Analysis of Sample-averaged {Q}-learning},
  journal = {IEEE Transactions on Information Theory},
  year    = {2025},
  doi     = {10.1109/TIT.2025.3569421}
}

\newpage
 
 \appendix
\section{Hyperparameter Settings}
\label{app:hyperparams}
 
Table~\ref{tab:hyperparams} lists the full hyperparameter settings
used across all puddle world experiments.
Summary statistics reported in the main paper are computed from
unsmoothed rollouts of the converged policy; smoothing coefficients
apply only to learning curve figures.
 
\begin{table}[htbp]
\centering
\caption{Hyperparameters used across discrete and continuous puddle
  world settings.
  $lr_c$: critic learning rate.
  $lr_a$: actor learning rate.
  $lr_\sigma$: variance critic learning rate (VPAC only).}
\label{tab:hyperparams}
\begin{tabular}{lll}
\toprule
\textbf{Parameter} & \textbf{Discrete} & \textbf{Continuous} \\
\midrule
\multicolumn{3}{l}{\textit{Common parameters}} \\
Discount $\gamma$ & 0.99 & 0.99 \\
Max episodes & 1{,}000 & 2{,}000 \\
Max steps & 500 & 5{,}000 \\
Seeds & 10 & 10 \\
Smoothing ($\alpha_\mathrm{smooth}$, returns) & 0.03 & 0.03 \\
Smoothing ($\alpha_\mathrm{smooth}$, variance) & 0.05 & 0.05 \\
\midrule
\multicolumn{3}{l}{\textit{AC (baseline, $\beta = 0$)}} \\
$lr_c$ / $lr_a$ & 0.1 / 0.01 & 0.05 / 0.01 \\
\midrule
\multicolumn{3}{l}{\textit{VPAC~\cite{jain2021variance}}} \\
$\beta$ & 0.002 & 0.005 \\
$lr_c$ / $lr_a$ / $lr_\sigma$ & 0.1 / 0.01 / 0.1 & 0.05 / 0.01 / 0.025 \\
\midrule
\multicolumn{3}{l}{\textit{VPAC-RS (proposed)}} \\
$\beta$ & 0.01 & 0.005 \\
$lr_c$ / $lr_a$ & 0.1 / 0.01 & 0.05 / 0.01 \\
Variance update frequency & 20 steps & 20 steps \\
Warm-up episodes & 5 & 5 \\
\midrule
\multicolumn{3}{l}{\textit{VPAC-BS (proposed)}} \\
$\beta$ & 0.005 & 0.005 \\
$lr_c$ / $lr_a$ & 0.1 / 0.01 & 0.05 / 0.01 \\
Ensemble size $K$ & 10 & 10 \\
Variance update frequency & 20 steps & 20 steps \\
\midrule
\multicolumn{3}{l}{\textit{Continuous setting only}} \\
Tile coding & -- & 10 tilings, 5 bins \\
Feature dimension & -- & 1{,}024 \\
\bottomrule
\end{tabular}
\end{table}

\section{Proof of Theorem~\ref{thm:qconv}}
\label{app:proof-qconv}




We prove almost-sure convergence of the variance-penalized Q-learning
iterates to $Q^*$, the unique fixed point of the penalized Bellman
operator $K$, by verifying the stochastic approximation conditions of
Borkar's Theorem~2.2~\cite{borkar2008stochastic}.
Since the update modifies only the visited state-action pair at each
iteration, the recursion is an asynchronous stochastic approximation;
we invoke the standard reduction of asynchronous updates with positive
asymptotic visitation frequency to the synchronous framework
(Jaakkola et~al.~\cite{jaakkola1994convergence}; see also
Tsitsiklis~\cite{tsitsiklis1994asynchronous} and Chapter~7 of
Borkar~\cite{borkar2008stochastic}).

\subsubsection*{Step 1: SA Recursion in Standard Form}

We write the Q-learning update as
\begin{equation}
\begin{aligned}
Q_{t+1}(s,a)
&= Q_t(s,a)
+ \Delta_t(s,a), \\
\Delta_t(s,a)
&= \alpha_t\,
\mathbf{1}\!\left[(s,a)=(S_t,A_t)\right] \\
&\quad\times
\bigl[h(Q_t)(s,a)
+ M_{t+1}(s,a)\bigr],
\end{aligned}
\label{eq:sa-ql}
\end{equation}
where the mean-field function is
\begin{equation}
h(Q)(s,a) = (KQ)(s,a) - Q(s,a),
\end{equation}
and the noise term at the visited state-action pair is
\begin{equation}
M_{t+1}(s,a) =
\begin{cases}
\begin{aligned}
& R_{t+1}
+ \gamma \max_\pi F_{Q_t}^\pi(S_{t+1}) \\
&\quad - (KQ_t)(s,a),
\end{aligned}
& (s,a) = (S_t,A_t), \\[1ex]
0, & \text{otherwise.}
\end{cases}
\label{eq:noise-ql}
\end{equation}
Under Assumption~\ref{ass:ergodic}, the fixed exploration policy
$\pi_b$ induces a stationary distribution $\mu^{\pi_b}$ with
$\mu^{\pi_b}(s,a) > 0$ for every $(s,a)$, so that
\begin{equation}
\frac{1}{t}\sum_{k=0}^{t-1}
  \mathbf{1}\!\left[(S_k,A_k)=(s,a)\right]
  \;\longrightarrow\; \mu^{\pi_b}(s,a) > 0
\quad \text{a.s.}
\end{equation}
Each state-action pair is therefore updated with positive asymptotic
frequency, which is the condition required for the induced
per-coordinate asynchronous step sizes to inherit the Robbins-Monro
conditions from $\{\alpha_t\}$.

\subsubsection*{Step 2: $\gamma$-Contraction of $K$}

We show $\|KQ_1 - KQ_2\|_\infty \leq \gamma\|Q_1 - Q_2\|_\infty$ for
any $Q_1, Q_2$.
For any state $s'$, define for $i = 1, 2$:
\begin{equation}
F_{Q_i}^\pi(s') = \sum_{a'} \pi(a'|s')\bigl[Q_i(s',a')
  - \beta\,\hat{\sigma}_\mathrm{clip}(s',a')\bigr],
\end{equation}
where $\hat{\sigma}_\mathrm{clip}$ is treated as a fixed bounded
state-action penalty inside the Bellman operator $K$ for the purposes
of this contraction argument, and is the same in both $F_{Q_1}^\pi$
and $F_{Q_2}^\pi$.
Let $\pi_1^* = \arg\max_\pi F_{Q_1}^\pi(s')$.
Since $\pi_2^* = \arg\max_\pi F_{Q_2}^\pi(s')$ maximises
$F_{Q_2}^\pi$, we have $F_{Q_2}^{\pi_2^*}(s') \geq F_{Q_2}^{\pi_1^*}(s')$,
and therefore:
\begin{align}
\max_\pi F_{Q_1}^\pi(s')
- \max_\pi F_{Q_2}^\pi(s')
&=
F_{Q_1}^{\pi_1^*}(s')
-
F_{Q_2}^{\pi_2^*}(s')
\nonumber\\
&\le
F_{Q_1}^{\pi_1^*}(s')
-
F_{Q_2}^{\pi_1^*}(s')
\nonumber\\
&=
\sum_{a'}\pi_1^*(a'|s')
\nonumber\\
&\qquad\times
\bigl(Q_1(s',a')-Q_2(s',a')\bigr)
\nonumber\\
&\le
\|Q_1-Q_2\|_\infty,
\label{eq:max-upper}
\end{align}
where the last equality uses the fact that $\hat{\sigma}_\mathrm{clip}$
cancels in the difference $F_{Q_1}^{\pi_1^*} - F_{Q_2}^{\pi_1^*}$,
since both terms use the same fixed penalty.
By the symmetric argument with $\pi_2^*$:
\begin{equation}
\max_\pi F_{Q_2}^\pi(s') - \max_\pi F_{Q_1}^\pi(s')
  \leq \|Q_1 - Q_2\|_\infty.
\label{eq:max-lower}
\end{equation}
Combining~\eqref{eq:max-upper} and~\eqref{eq:max-lower}:
\begin{equation}
\bigl|\max_\pi F_{Q_1}^\pi(s') - \max_\pi F_{Q_2}^\pi(s')\bigr|
  \leq \|Q_1 - Q_2\|_\infty.
\end{equation}
Taking expectations over $S'$ conditional on $(s,a)$ and then taking
the supremum over $(s,a)$ gives
\begin{equation}
\|(KQ_1) - (KQ_2)\|_\infty \leq \gamma \|Q_1 - Q_2\|_\infty.
\label{eq:contraction}
\end{equation}
Since $\gamma < 1$, $K$ is a $\gamma$-contraction in $\|\cdot\|_\infty$.
By Banach's fixed-point theorem, $K$ has a unique fixed point $Q^*$,
satisfying $\|Q^*\|_\infty \leq (R_\mathrm{max} + \gamma\beta
c_\mathrm{max})/(1-\gamma)$, and $h(Q) = KQ - Q$ satisfies
$\|h(Q_1) - h(Q_2)\|_\infty \leq (1+\gamma)\|Q_1-Q_2\|_\infty$
(Lipschitz condition, Borkar's Assumption~A1).



\subsubsection*{Step 3: Martingale Difference Noise}

For the visited coordinate $(S_t,A_t)$, the noise
in~\eqref{eq:noise-ql} satisfies
\begin{equation}
\mathbb{E}\!\left[M_{t+1}(S_t,A_t)\mid\mathcal{F}_t\right] = 0
\end{equation}
by definition of $(KQ_t)(S_t,A_t)$ as the conditional expectation of
$R_{t+1} + \gamma\max_\pi F_{Q_t}^\pi(S_{t+1})$ given $(S_t,A_t)$. For the conditional second moment, note that
$|R_{t+1}| \leq R_\mathrm{max}$ almost surely (Assumption~\ref{ass:rewards})
and
\begin{equation}
\bigl|\max_\pi F_{Q_t}^\pi(s')\bigr|
  \leq \|Q_t\|_\infty + \beta c_\mathrm{max}
\end{equation}
by Assumption~\ref{ass:bounded} and the fact that
$|\sum_{a'}\pi(a'|s')Q_t(s',a')| \leq \|Q_t\|_\infty$.
Therefore
\begin{equation}
\mathbb{E}\bigl[M_{t+1}^2(S_t,A_t)\mid\mathcal{F}_t\bigr]
  \leq C\bigl(1 + \|Q_t\|_\infty^2\bigr)
\end{equation}
for a constant $C > 0$ depending only on $R_\mathrm{max}$, $\gamma$,
$\beta$, and $c_\mathrm{max}$ (Borkar's Assumption~A3).



\subsubsection*{Step 4: Step-Size and Bounded-Iterate Conditions}

Assumption~\ref{ass:stepsizes} supplies the Robbins-Monro conditions
for the base step-size sequence $\{\alpha_t\}$, while Step~1 shows
that the induced per-coordinate asynchronous step sizes inherit these
conditions under positive asymptotic visitation frequency
(Borkar's Assumption~A2).
Assumption~\ref{ass:bounded-iterates} supplies almost-sure boundedness
of $\{Q_t\}$ (Borkar's Assumption~A4).



\subsubsection*{Step 5: Global Asymptotic Stability of the ODE}

The limiting ODE is $\dot{Q}(t) = h(Q(t)) = KQ(t) - Q(t)$.
Since $KQ^* = Q^*$, the error $e(t) = Q(t) - Q^*$ satisfies
$\dot{e}(t) = KQ(t) - KQ^* - e(t)$.
Let $V(t) = \|e(t)\|_\infty$ and define the active set
\begin{equation}
\mathcal{I}(t) = \bigl\{(s,a) : |e(t)(s,a)| = V(t)\bigr\}.
\end{equation}
For $V(t) > 0$, the upper right Dini derivative satisfies
\begin{align}
D^+ V(t)
&\leq \max_{(s,a)\in\mathcal{I}(t)}
  \operatorname{sgn}(e(t)(s,a))\,\dot{e}(t)(s,a)
\nonumber \\
&= \max_{(s,a)\in\mathcal{I}(t)}
  \operatorname{sgn}(e(t)(s,a))
  \Bigl[
    (KQ(t))(s,a) - (KQ^*)(s,a)
    - e(t)(s,a)
  \Bigr]
\nonumber \\
&\leq \max_{(s,a)\in\mathcal{I}(t)}
  \bigl|(KQ(t))(s,a) - (KQ^*)(s,a)\bigr|
  - V(t)
\nonumber \\
&\leq \gamma\|Q(t)-Q^*\|_\infty - V(t)
\nonumber \\
&= -(1-\gamma)V(t).
\label{eq:gronwall}
\end{align}
When $V(t) = 0$, the inequality holds trivially.
Therefore, by the comparison principle for upper right Dini
derivatives,
\begin{equation}
V(t) \leq e^{-(1-\gamma)t}V(0),
\end{equation}
so $Q(t) \to Q^*$ exponentially as $t \to \infty$, establishing global
asymptotic stability of the limiting ODE (Borkar's Assumption~A5).

All conditions of Borkar's Theorem~2.2 are satisfied.
Therefore $Q_t \to Q^*$ almost surely. \hfill$\square$

\begin{remark}
The contraction argument in Step~2 holds for any bounded
$\hat{\sigma}_\mathrm{clip}$, regardless of whether it converges to
the true $\sigma^\pi$.
The key algebraic fact is that $\hat{\sigma}_\mathrm{clip}$ cancels
in $F_{Q_1}^{\pi_1^*} - F_{Q_2}^{\pi_1^*}$, since both use the same
estimator.
This is why boundedness is the only requirement on the variance
estimate: the fixed point $Q^*$ exists and is unique for any
bounded $\hat{\sigma}_\mathrm{clip}$, and the operator $K$ is a
$\gamma$-contraction irrespective of the estimator's accuracy.
\end{remark}

\section{Proof of Theorem~\ref{thm:acconv}}
\label{app:proof-acconv}

We prove convergence of the two-timescale actor-critic using Borkar's
Theorem~2 of Chapter~6~\cite{borkar2008stochastic}, which gives
conditions under which a two-timescale SA scheme converges: the
faster-timescale iterate tracks a unique stable equilibrium that
depends continuously on the slower-timescale iterate, and the
slower-timescale iterate then has its limit set contained in the
stationary set of the ODE induced by evaluating the equilibrium map
at the current slower iterate.

The on-policy case ($\rho_t \equiv 1$) is treated here; the
off-policy extension is given in Appendix~\ref{app:proof-offpolicy}.
Throughout, we assume bounded features $\|\phi(s,a)\| \leq
\phi_\mathrm{max}$ and a bounded, Lipschitz policy score on the
relevant compact parameter set, in addition to
Assumptions~\ref{ass:ergodic}--\ref{ass:smooth}.

\subsubsection*{Step 1: Two-Timescale SA Form}

Write the critic update as
\begin{equation}
w_{t+1} = w_t + \alpha_{w,t}\bigl[h_w(w_t, \theta_t) + M_{t+1}^w\bigr],
\label{eq:critic-sa}
\end{equation}
where the critic mean field is
\begin{equation}
\begin{aligned}
h_w(w, \theta)
&= \mathbb{E}_{\mu^{\pi_\theta}}\!\Bigl[
    \bigl(R_{t+1} + \gamma F(S_{t+1})
    - Q_w(S_t,A_t)\bigr) \\
&\qquad\qquad \times
    \nabla_w Q_w(S_t,A_t)
  \Bigr].
\end{aligned}
\end{equation}
with $F(s') = \sum_{a'}\pi_\theta(a'|s')[Q_w(s',a')
  - \beta\hat{\sigma}_\mathrm{clip}(s',a')]$,
and the critic noise, centered at its stationary mean field, is
\begin{equation}
\begin{aligned}
M_{t+1}^w
&= \bigl(R_{t+1} + \gamma F(S_{t+1})
    - Q_w(S_t,A_t)\bigr) \nabla_w Q_w(S_t,A_t)\\
&\qquad 
    - h_w(w_t,\theta_t).
\end{aligned}
\label{eq:critic-noise}
\end{equation}
Write the actor update as
\begin{equation}
\theta_{t+1} = \theta_t + \alpha_{\theta,t}\bigl[g(\theta_t, w_t)
  + M_{t+1}^\theta\bigr],
\label{eq:actor-sa}
\end{equation}
where the actor mean field is
\begin{equation}
\begin{aligned}
g(\theta, w)
&= \mathbb{E}_{\mu^{\pi_\theta}}\!\bigl[
      \hat{Q}_\mathrm{var}(S_t,A_t)\,
      \nabla_\theta\log\pi_\theta(A_t|S_t)
    \bigr], \\
\hat{Q}_\mathrm{var}(s,a)
&= Q_w(s,a)
  - \beta\hat{\sigma}_\mathrm{clip}(s,a).
\end{aligned}
\end{equation}
and the actor noise, centered at its stationary mean field, is
\begin{equation}
M_{t+1}^\theta
  = \hat{Q}_\mathrm{var}(S_t,A_t)\nabla_\theta\log\pi_\theta(A_t|S_t)
    - g(\theta_t, w_t).
\label{eq:actor-noise}
\end{equation}
Because $(S_t, A_t)$ is generated by a Markov chain rather than
i.i.d.\ sampling from $\mu^{\pi_\theta}$ at every step, $M_{t+1}^w$
and $M_{t+1}^\theta$ are not literal martingale differences with
respect to the full filtration $\mathcal{F}_t$; instead they satisfy
the bounded controlled Markov noise conditions of Borkar's
two-timescale framework (Chapter~6,
Section~6.2 of~\cite{borkar2008stochastic}), which requires only
that the noise, averaged over the mixing time of the chain, vanish
asymptotically relative to the step size, together with the
conditional second-moment growth bound verified in Step~2 and Step~4
below. This is the standard treatment for actor-critic recursions
built on a single Markovian trajectory~\cite{bhatnagar2009natural,
konda2003onactor}.

\subsubsection*{Step 2: Faster-Timescale Analysis
  (Critic Convergence to $w^*(\theta)$)}

For a quasi-static actor parameter $\theta$, the critic recursion
\eqref{eq:critic-sa} tracks the ODE
\begin{equation}
\dot{w}(t) = h_w(w(t), \theta).
\label{eq:critic-ode}
\end{equation}

\textit{Linear FA mean field.}
Under linear function approximation $Q_w(s,a) = w^\top\phi(s,a)$,
the TD target becomes
$y_t = R_{t+1} + \gamma(w^\top\Phi(S_{t+1},\pi_\theta)
  - \beta B(S_{t+1},\pi_\theta))$,
where $\Phi(s',\pi) = \sum_{a'}\pi(a'|s')\phi(s',a')$ and
$B(s',\pi) = \sum_{a'}\pi(a'|s')\hat{\sigma}_\mathrm{clip}(s',a')$.
The mean field is
\begin{equation}
h_w(w, \theta) = b_\mathrm{var}(\theta) - G(\theta)w,
\qquad G(\theta) = A(\theta) - \gamma C(\theta),
\label{eq:critic-meanfield}
\end{equation}
where
\begin{align}
A(\theta) &= \mathbb{E}_{\mu^{\pi_\theta}}[\phi(s,a)\phi(s,a)^\top], \\
C(\theta) &= \mathbb{E}_{\mu^{\pi_\theta}}[\phi(s,a)\Phi(S',\pi_\theta)^\top], \\
b_\mathrm{var}(\theta) &= \mathbb{E}_{\mu^{\pi_\theta}}
  \bigl[(R - \gamma\beta B(S',\pi_\theta))\phi(s,a)\bigr].
\end{align}

\textit{Positive definiteness.}
$G(\theta) = A(\theta) - \gamma C(\theta)$ is identical in structure to
the standard on-policy linear TD system matrix, for which positive
definiteness under Assumption~\ref{ass:ergodic} and non-degenerate
features is established in Tsitsiklis and
Van Roy~\cite{tsitsiklis1997analysis}.
The variance penalty modifies only $b_\mathrm{var}$ (a vector), not
$G(\theta)$, so positive definiteness is preserved.
We assume $G(\theta)$ is uniformly positive definite over the compact
parameter set, consistent with the standard linear TD stability
condition.

\textit{Unique stable equilibrium.}
Since $G(\theta)$ is positive definite, the
ODE~\eqref{eq:critic-ode} is globally exponentially stable with
unique equilibrium
\begin{equation}
w^*(\theta) = G(\theta)^{-1}b_\mathrm{var}(\theta).
\label{eq:wstar}
\end{equation}
The noise $M_{t+1}^w$ in~\eqref{eq:critic-noise} has zero mean under
$\mu^{\pi_\theta}$ and bounded conditional second moment, since
$R_{t+1}$, $\nabla_w Q_w = \phi$, $Q_w$, and
$\hat{\sigma}_\mathrm{clip}$ are all bounded under
Assumptions~\ref{ass:rewards},~\ref{ass:bounded},
and~\ref{ass:bounded-iterates}, together with the assumed feature
boundedness.
By Borkar's Theorem~2.2 (extended to controlled Markov noise as
discussed in Step~1), $w_t \to w^*(\theta)$ almost surely for each
fixed $\theta$.

\subsubsection*{Step 3: Continuity of $\theta \mapsto w^*(\theta)$}

From~\eqref{eq:wstar}, $w^*(\theta)$ is continuous in $\theta$
provided $A(\theta)$, $C(\theta)$, and $b_\mathrm{var}(\theta)$ are
continuous in $\theta$ and $G(\theta)$ is uniformly invertible.
The matrices $A$ and $C$ are computed as expectations under the
stationary distribution $\mu^{\pi_\theta}$.
Under Assumption~\ref{ass:smooth}, $\pi_\theta(a|s)$ is differentiable
(hence continuous) in $\theta$ for all $(s,a)$, and by the ergodicity
of the Markov chain (Assumption~\ref{ass:ergodic}), the stationary
distribution $\mu^{\pi_\theta}$ varies continuously with $\theta$
under smooth policy parameterizations (see, e.g., Section~2 of
Bhatnagar et al.~\cite{bhatnagar2009natural}).
Continuity of $A(\theta)$, $C(\theta)$, and $b_\mathrm{var}(\theta)$
follows.
Uniform invertibility of $G(\theta)$ holds because positive
definiteness is preserved uniformly over compact parameter sets under
non-degenerate, bounded features.
Hence $\theta \mapsto w^*(\theta)$ is continuous.

\subsubsection*{Step 4: Slower-Timescale Analysis (Actor Convergence)}

On the actor's timescale, the critic is at equilibrium $w^*(\theta_t)$
(Step~2).
The actor recursion~\eqref{eq:actor-sa} therefore approximately tracks
the ODE
\begin{equation}
\begin{aligned}
\dot{\theta}(t)
&= g\bigl(\theta(t), w^*(\theta(t))\bigr) \\
&= \mathbb{E}_{\mu^{\pi_{\theta(t)}}}\!\Bigl[
      \hat{Q}_\mathrm{var}(s,a;\theta(t)) \nabla_\theta\log\pi_{\theta(t)}(a|s)
    \Bigr].
\end{aligned}
\label{eq:actor-ode}
\end{equation}
where $\hat{Q}_\mathrm{var}(s,a;\theta) = Q_{w^*(\theta)}(s,a)
- \beta\hat{\sigma}_\mathrm{clip}(s,a)$, with
$\hat{\sigma}_\mathrm{clip}$ treated as a fixed bounded penalty within
this mean-field evaluation, consistent with the quasi-static
convention used throughout this proof.

By the policy gradient theorem applied to the variance-penalized
objective $J_\mathrm{var}(\theta)$, the right-hand side
of~\eqref{eq:actor-ode} equals $\nabla_\theta J_\mathrm{var}(\theta)$.
Thus the actor ODE is the gradient ascent flow of
$J_\mathrm{var}(\theta)$, and its equilibria satisfy
$\nabla_\theta J_\mathrm{var}(\theta^*) = 0$.

\textit{Lipschitz drift.}
The drift $g(\theta, w^*(\theta))$ is Lipschitz in $\theta$ on the
compact parameter set:
the score function $\nabla_\theta\log\pi_\theta(a|s)$ is bounded and
Lipschitz by Assumption~\ref{ass:smooth} and the standing boundedness
assumption of this appendix;
$\hat{Q}_\mathrm{var}$ is bounded by
$\|Q_{w^*(\theta)}\|_\infty + \beta c_\mathrm{max}$, which is bounded
over compact $\theta$ sets given bounded features and the continuity
of $w^*(\theta)$;
and $w^*(\theta)$ is Lipschitz in $\theta$ (it is continuously
differentiable by the implicit function theorem applied
to~\eqref{eq:wstar}).

\textit{Noise conditions.}
The actor noise $M_{t+1}^\theta$ in~\eqref{eq:actor-noise} is centered
at its mean field under $\mu^{\pi_{\theta_t}}$, and by the same
controlled Markov noise argument as Step~1, satisfies the conditional
second-moment bound
$\mathbb{E}[\|M_{t+1}^\theta\|^2\mid\mathcal{F}_t]
  \leq C'(1 + \|\theta_t\|^2)$,
because the score function and $\hat{Q}_\mathrm{var}$ are bounded
under Assumptions~\ref{ass:bounded} and~\ref{ass:smooth} together
with the standing boundedness assumption.

\textit{ODE trajectories.}
Boundedness of the actor iterates is supplied directly by
Assumption~\ref{ass:bounded-iterates}; equivalently, the actor update
may be viewed as a projected stochastic approximation on a compact
parameter set. Hence the trajectories of the actor
ODE~\eqref{eq:actor-ode} remain in this compact set.

\subsubsection*{Conclusion}

The following conditions of Borkar's two-timescale Theorem~2
(Chapter~6 of~\cite{borkar2008stochastic}) are satisfied:
\begin{enumerate}[label=(\roman*)]
\item The critic ODE has a unique globally stable equilibrium
  $w^*(\theta)$ for each fixed $\theta$ (Step~2).
\item The map $\theta \mapsto w^*(\theta)$ is continuous (Step~3).
\item The actor mean field $g(\theta, w^*(\theta))
  = \nabla_\theta J_\mathrm{var}(\theta)$ is Lipschitz (Step~4).
\item Both noise sequences satisfy the bounded controlled Markov
  noise conditions with bounded conditional second moments
  (Steps~1, 2, and~4).
\item The step-size separation $\alpha_{\theta,t}/\alpha_{w,t} \to 0$
  holds by Assumption~\ref{ass:stepsizes}.
\item Iterates are bounded by Assumption~\ref{ass:bounded-iterates}.
\end{enumerate}
Therefore $w_t - w^*(\theta_t) \to 0$ almost surely, and the limit
set of $\{\theta_t\}$ is almost surely contained in the set of
stationary points of $J_\mathrm{var}$, i.e., every limit point
$\theta^*$ of $\{\theta_t\}$ satisfies
$\nabla_\theta J_\mathrm{var}(\theta^*) = 0$.
If the stationary points of $J_\mathrm{var}$ are isolated, this limit
set is almost surely a single point, and $\theta_t \to \theta^*$
almost surely. \hfill$\square$

\section{Off-Policy Convergence}
\label{app:proof-offpolicy}
The off-policy actor-critic modifies the on-policy analysis by
replacing stationary averages under $\mu^{\pi_\theta}$ with
behavior-distribution averages under $\mu^b$ and by introducing
importance weights $\rho_t = \pi_\theta(A_t|S_t)/b(A_t|S_t)$.

\begin{assumption}[Off-Policy Coverage and Clipped Importance Weights]
\label{ass:isclip}
The behavior policy $b$ has support over the target policy class:
if $\pi_\theta(a|s) > 0$, then $b(a|s) > 0$.
The importance weights are clipped so that
$0 \leq \rho_t \leq \rho_\mathrm{max} < \infty$ for all $t$.
\end{assumption}

\begin{assumption}[Off-Policy Critic Stability]
\label{ass:offpolicy-critic}
The off-policy projected TD system matrix
$G_b(\theta) = A_b(\theta) - \gamma C_b(\theta)$, computed under the
behavior distribution $\mu^b$, is uniformly positive definite on the
compact parameter set.
\end{assumption}

Under Assumptions~\ref{ass:isclip} and~\ref{ass:offpolicy-critic}, the
actor mean field becomes
\begin{equation}
g_\mathrm{off}(\theta,w)
  = \mathbb{E}_{\mu^b}\!\bigl[\rho(s,a)\hat{Q}_\mathrm{var}(s,a)
    \nabla_\theta\log\pi_\theta(a|s)\bigr],
\end{equation}
where $\hat{Q}_\mathrm{var}(s,a) = Q_w(s,a)
  - \beta\hat{\sigma}_\mathrm{clip}(s,a)$
(following the off-policy policy gradient of Theorem~4 in Jain
et~al.~\cite{jain2021variance}).
The corresponding actor noise is
\begin{equation}
M_{t+1}^\theta
  = \rho_t\,\hat{Q}_\mathrm{var}(S_t,A_t)\,
    \nabla_\theta\log\pi_\theta(A_t|S_t)
    - g_\mathrm{off}(\theta_t, w_t).
\label{eq:actor-noise-off}
\end{equation}
Because the data are generated by the behavior Markov chain, this
noise is centered at its mean field under $\mu^b$ but is not a
literal martingale difference with respect to the full filtration;
it satisfies the controlled Markov noise conditions of Borkar's
framework, as in the on-policy case (Appendix~\ref{app:proof-acconv}).
The clipped ratio $\rho_t \leq \rho_\mathrm{max}$, together with
bounded scores, bounded critic features, bounded iterates, and bounded
$\hat{\sigma}_\mathrm{clip}$, gives the required conditional
second-moment bound:
\begin{equation}
\mathbb{E}\bigl[\|M_{t+1}^\theta\|^2\mid\mathcal{F}_t\bigr]
  \leq \rho_\mathrm{max}^2\,C'(1+\|\theta_t\|^2).
\end{equation}

The critic analysis is analogous to Steps~2 and~3 of
Appendix~\ref{app:proof-acconv}, with all expectations taken under
$\mu^b$ in place of $\mu^{\pi_\theta}$, provided
Assumption~\ref{ass:offpolicy-critic} holds; this positive
definiteness is not automatically implied by coverage or ergodicity
alone and is stated as a separate standing condition.
Under this assumption, the critic tracks the off-policy equilibrium
$w_b^*(\theta)$, and the actor tracks the off-policy ODE
$\dot{\theta} = g_\mathrm{off}(\theta, w_b^*(\theta))$.
Consequently, the limit set of $\{\theta_t\}$ is almost surely
contained in the stationary set of the off-policy actor ODE.
If the stationary points of the induced off-policy objective
$J_{d_0}(\theta)$ are isolated, then $\theta_t$ converges almost
surely to a stationary point $\theta^*$. \hfill$\square$

\begin{remark}
As in the on-policy case, the off-policy proof requires only that
$\hat{\sigma}_\mathrm{clip}$ be bounded, not consistent.
The importance sampling correction $\rho_t$ modifies the actor
mean field and noise but does not affect the contraction of the
penalized Bellman operator, which is established independently
of $\rho_t$ in Appendix~\ref{app:proof-qconv}.
The off-policy critic, unlike its on-policy counterpart, requires
Assumption~\ref{ass:offpolicy-critic} as an additional stability
condition, reflecting the well-known sensitivity of off-policy linear
TD to the choice of behavior policy~\cite{tsitsiklis1997analysis}.
\end{remark}
\section{Full VPAC-BS HTS Evaluation}
\label{app:hts-bs}
 
Table~\ref{tab:bs_full} reports VPAC-BS evaluation metrics across all
$\beta$ values at $\kappa = 1.5$, averaged over 10 seeds $\times$ 40
greedy rollouts.
The best risk-adjusted result ($\beta = 0.1$, shown in \textbf{bold})
achieves 44\% Ret std reduction relative to the within-run baseline.
The irregular pattern at $\beta = 0.01$ (Ret std exceeds the baseline)
is discussed in Section~\ref{subsec:hts-comparison} and attributed to
the mismatch between the bootstrapping estimator's epistemic signal
and the predominantly aleatoric noise structure of the HTS environment.
 
\begin{table}[!t]
\centering
\caption{VPAC-BS evaluation metrics at $\kappa = 1.5$, averaged over
  10 seeds $\times$ 40 rollouts.
  SS~IC~std: steady-state $I_c$ standard deviation (steps 140--200).
  CVaR-10\%: mean of worst 10\% of episode returns (higher is better).
  Best non-baseline entry per column in \textbf{bold}.}
\label{tab:bs_full}
\begin{tabular}{cccccc}
\toprule
$\beta$ & Mean $I_c$ [A] & Ret mean & Ret std & SS IC std [A] & CVaR-10\% \\
\midrule
0.000 (baseline) & 338.3 & 59.06 & 4.48 & -- & 51.27 \\
0.001            & 331.3 & 57.97 & 4.58 & -- & 52.32 \\
0.010            & 340.7 & 59.78 & 5.55 & -- & 50.18 \\
\textbf{0.100}   & 320.5 & 56.74 & \textbf{2.52} & \textbf{9.6} & \textbf{51.76} \\
1.000            & 332.0 & 58.62 & 4.13 & -- & 52.93 \\
\bottomrule
\end{tabular}
\end{table}
 
Figures~\ref{fig:hts_ic_traj_bs} and~\ref{fig:hts_pareto_bs} show
the trajectory and Pareto evaluation for VPAC-BS, analogous to
Figures~\ref{fig:hts_ic_traj} and~\ref{fig:hts_pareto} for VPAC-RS.
Figures~\ref{fig:hts_bc_rs} and~\ref{fig:hts_bc_bs} provide the
rolling IC std and episode return distributions for both methods.
 
\begin{figure}[htbp]
  \centering
  \includegraphics[width=\linewidth]{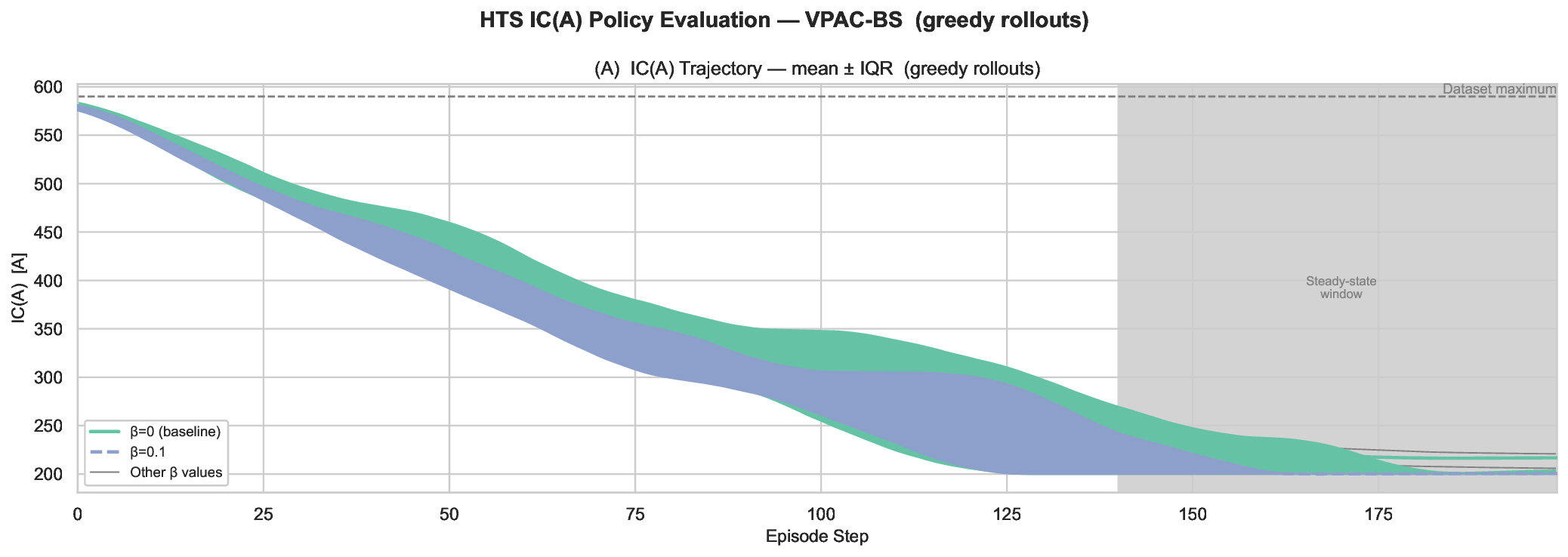}
  \caption{Mean $\pm$ IQR of the $I_c(A)$ trajectory across 400 greedy
    rollouts under the VPAC-BS policy at $\kappa = 1.5$.
    The solid green line is the risk-neutral baseline ($\beta = 0$);
    the dashed blue line is the best risk-adjusted policy ($\beta = 0.1$).
    The $\beta = 0.1$ IQR band is wider than the baseline in the
    transient phase (steps 50--130) before narrowing in the steady-state
    window, indicating that VPAC-BS achieves variance reduction primarily
    in the settled phase of the episode rather than throughout.}
  \label{fig:hts_ic_traj_bs}
\end{figure}
 
\begin{figure}[htbp]
  \centering
  \includegraphics[width=0.9\linewidth]{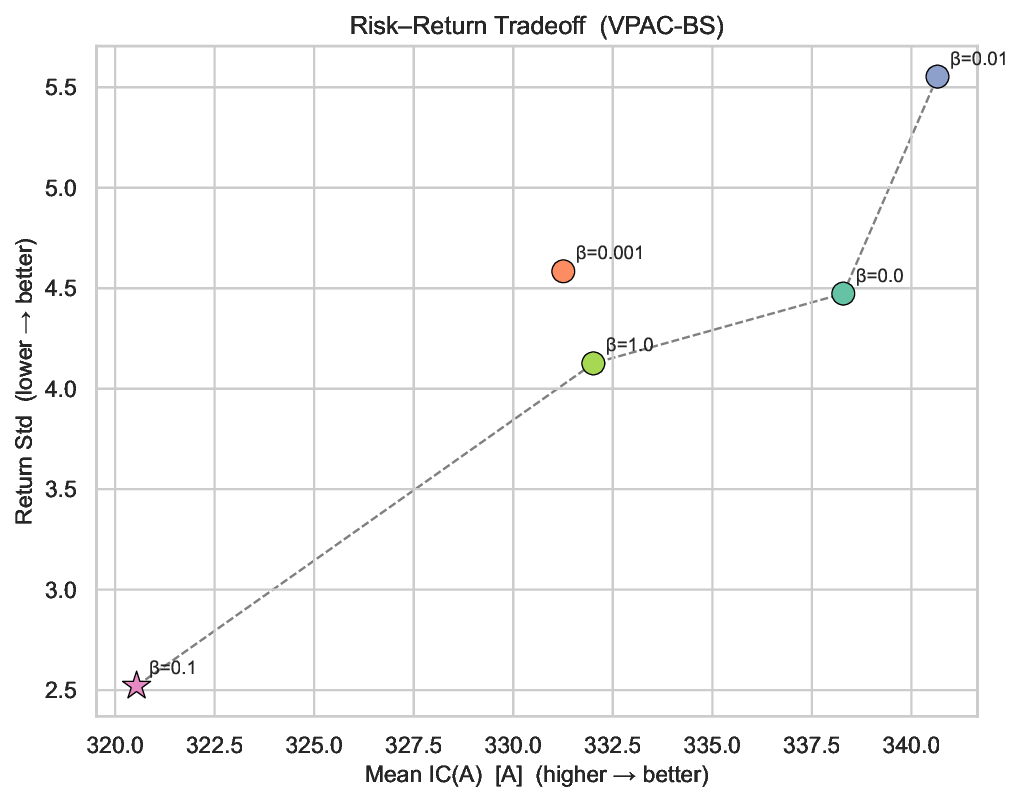}
  \caption{Risk-return Pareto scatter for VPAC-BS at $\kappa = 1.5$.
    Axes and marker conventions follow Figure~\ref{fig:hts_pareto}.
    The starred marker identifies the best risk-adjusted policy
    ($\beta = 0.1$, Ret std $= 2.52$, 44\% below the baseline).
    Four operating points are Pareto-non-dominated; the $\beta = 0.01$
    point is non-dominated on the mean $I_c$ axis (highest mean $I_c$
    of all configurations) despite exhibiting the highest Ret std,
    reflecting the irregular variance signal produced by the
    bootstrapping estimator at small penalty values in this environment.}
  \label{fig:hts_pareto_bs}
\end{figure}
 
\begin{figure}[!t]
  \centering
  \includegraphics[width=\linewidth]{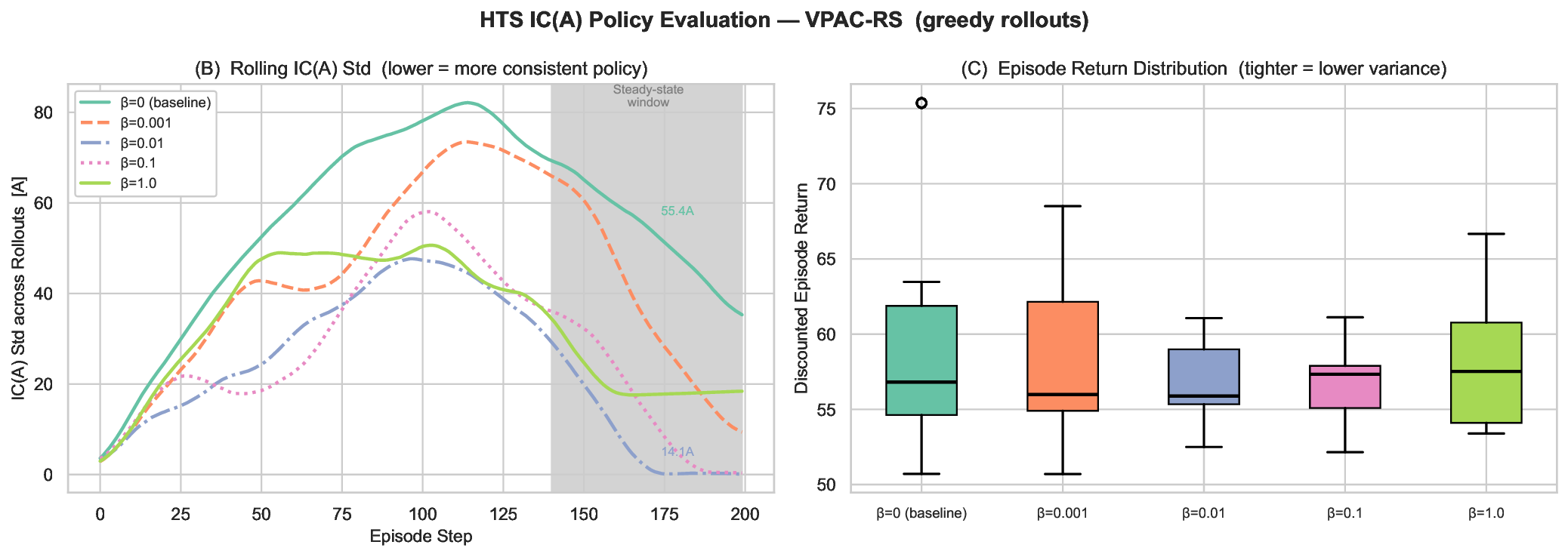}
  \caption{Supporting evaluation panels for VPAC-RS at $\kappa = 1.5$
    (400 greedy rollouts, stochastic environment).
    (B)~Rolling $I_c(A)$ standard deviation across rollouts at each
    episode step, smoothed with a uniform window of 15 steps.
    Annotated values show the mean steady-state IC std (steps 140--200)
    for $\beta = 0$ (55.4~A) and $\beta = 0.01$ (14.1~A).
    The shaded region is the steady-state evaluation window.
    (C)~Episode return distributions across all rollouts; boxes span
    the interquartile range, with the median as the horizontal bar and
    whiskers extending to 1.5 times the IQR.
    The $\beta = 0.01$ and $\beta = 0.1$ policies produce the tightest
    return distributions, consistent with the Ret std values in
    Table~\ref{tab:rs_summary}.}
  \label{fig:hts_bc_rs}
\end{figure}
 
\begin{figure}[!t]
  \centering
  \includegraphics[width=\linewidth]{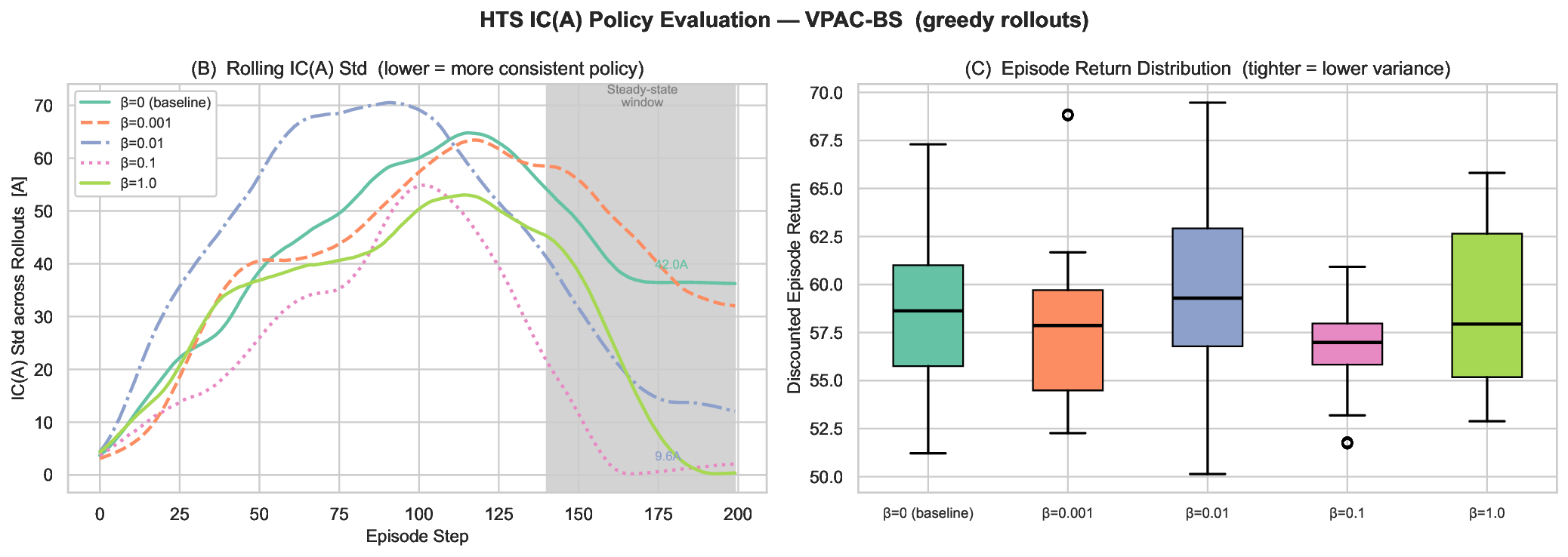}
  \caption{Supporting evaluation panels for VPAC-BS at $\kappa = 1.5$
    (400 greedy rollouts, stochastic environment).
    Panel and axis conventions follow Figure~\ref{fig:hts_bc_rs}.
    (B)~The $\beta = 0.01$ curve substantially exceeds the baseline IC
    std throughout the episode, confirming the pathological result
    reported in Table~\ref{tab:bs_full}.
    The $\beta = 0.1$ policy reaches near-zero IC std in the
    steady-state window (annotated: 9.6~A).
    (C)~Episode return distributions are wider and less ordered across
    $\beta$ values than for VPAC-RS (cf.\ Figure~\ref{fig:hts_bc_rs}C),
    consistent with the weaker and less consistent variance-reduction
    signal produced by the bootstrapping estimator in this environment.}
  \label{fig:hts_bc_bs}
\end{figure}
 
\section{HTS Training Curves}
\label{app:hts-training}
 
Figure~\ref{fig:hts_training} shows the training return and variance
curves for VPAC-RS at $\kappa = 1.5$ across all $\beta$ values.
These curves reflect the evolution of the variance estimator signal
during training, evaluated on stochastic rollouts of the target policy
every 15 episodes for returns and every 30 episodes for variance.
The variance values are computed on normalised discounted returns
(reward divided by 590 at each step) and are therefore on a different
absolute scale from the evaluation Ret std reported in
Table~\ref{tab:rs_summary}.
Despite this scale difference, the relative ordering of $\beta$ values
is qualitatively consistent between training and evaluation: lower
$\beta$ values that achieve smaller Ret std at evaluation also tend to
exhibit lower training variance in the latter half of training.
The y-axis of the variance panel is truncated to highlight relative
ordering; the absolute range is 1.45--1.90 in normalised return units.
 
\begin{figure}[!t]
  \centering
  \includegraphics[width=\linewidth]{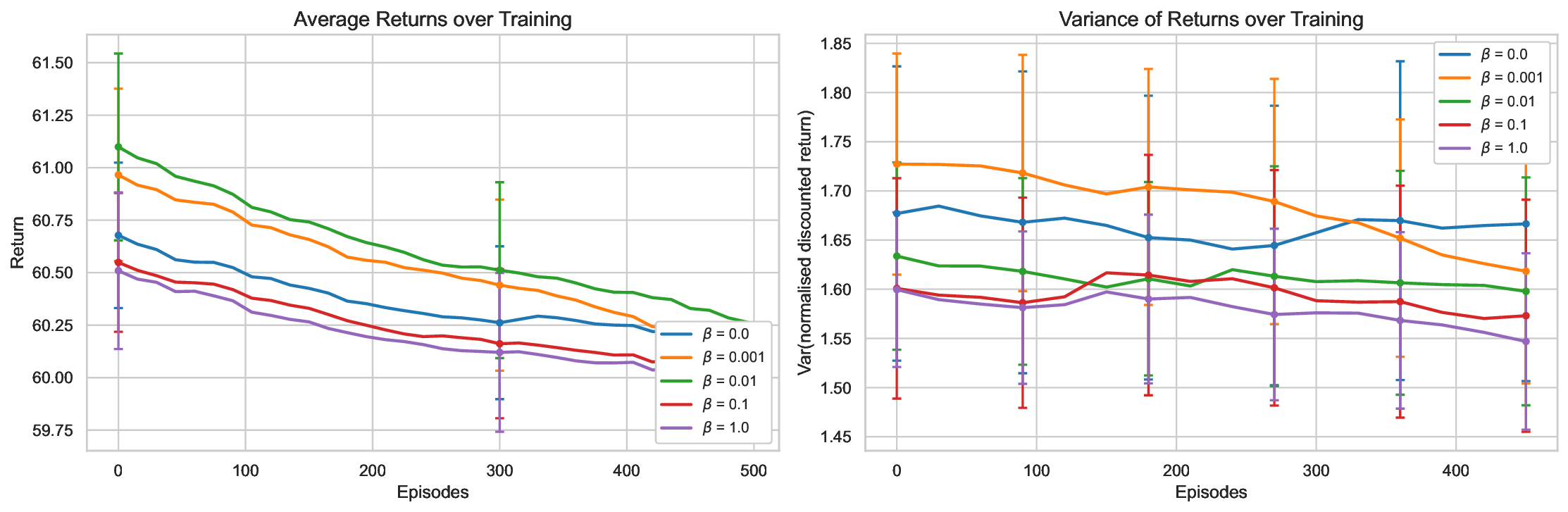}
  \caption{Training curves for VPAC-RS at $\kappa = 1.5$
    (10 seeds; error bars show $\pm 1$ standard error of the mean,
    plotted at sampled intervals for clarity).
    Left: mean episode return over training.
    All $\beta$ values converge to comparable return levels,
    confirming that variance penalisation does not significantly degrade
    mean performance during learning.
    Right: variance of normalised discounted returns estimated during
    training.
    The y-axis is truncated to show relative ordering across $\beta$
    values; absolute variance range is 1.45--1.90.}
  \label{fig:hts_training}
\end{figure}

\begin{figure}[!t]
  \centering
  \includegraphics[width=\linewidth]{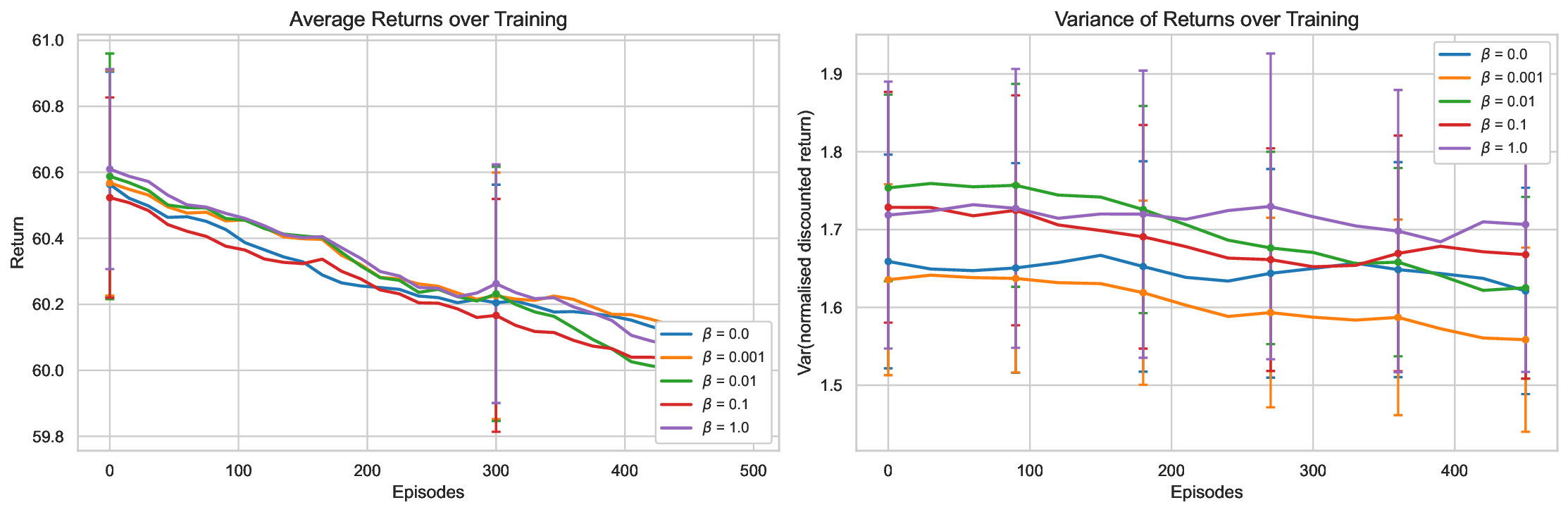}
  \caption{Training curves for VPAC-BS at $\kappa = 1.5$
    (10 seeds; error bars show $\pm 1$ standard error of the mean,
    plotted at sampled intervals for clarity).
    Left: mean episode return over training.
    All $\beta$ values converge to comparable return levels.
    Right: variance of normalised discounted returns estimated
    during training.
    Notably, $\beta = 0.01$ (green) sits above the risk-neutral
    baseline ($\beta = 0$, blue) throughout training, consistent
    with the pathological evaluation result for this configuration
    (Table~\ref{tab:bs_full}), and in contrast to VPAC-RS where
    $\beta = 0.01$ produces the lowest training variance
    (Figure~\ref{fig:hts_training}).
    The y-axis is truncated to show relative ordering; absolute
    variance range is 1.45--1.90.}
  \label{fig:hts_training_bs}
\end{figure}

\end{document}